\def\cleartheorem#1{%
	\expandafter\let\csname#1\endcsname\relax
	\expandafter\let\csname c@#1\endcsname\relax
}
\newtheorem{example}{Example} 
\newtheorem{theorem}{Theorem}
\newtheorem{lemma}[theorem]{Lemma}
\newtheorem{corollary}[theorem]{Corollary}
\newtheorem{assumption}{Assumption}
\let\oldnl\nl
\newcommand{\nonl}{\renewcommand{\nl}{\let\nl\oldnl}}
\newcommand{\todot}[2][]{\todo[size=\scriptsize,color=red!20!white,#1]{Tor: #2}}
\newcommand{\todoj}[2][]{\todo[size=\scriptsize,color=green!20!white,#1]{Johannes: #2}}
\newcommand{\R}{\mathfrak{R}}
\newcommand{\thetaopt}{\theta^*}
\newcommand{\hDelta}{\hat \Delta}
\newcommand{\htheta}{\hat \theta}
\newcommand{\IDS}{\text{\normalfont IDS}}
\newcommand{\dirac}[1]{e_{#1}}
\newcommand{\RRp}{\RR_{\geq 0}}
\newcommand{\opspan}[2]{\laspan(\im(#1) : #2)}
\newcommand{\PM}{\text{Proj}_{\vV}}
\newcommand{\Done}{\Delta_1}
\newcommand{\Dtwo}{\Delta_2}
\newcommand{\Ione}{I_1}
\newcommand{\Itwo}{I_2}
\newcommand{\pzero}{p_0}
\Crefname{algocf}{Algorithm}{Algorithms}
\crefname{algocf}{Algorithm}{Algorithms}
\crefname{assumption}{Assumption}{Assumptions}
\Crefname{assumption}{Assumption}{Assumptions}
\begin{document}

\title{Linear Partial Monitoring for Sequential Decision Making\\Algorithms, Regret Bounds and Applications}

\author{\name Johannes Kirschner \email jkirschn@ualberta.ca \\
       \addr Department of Computer Science\\
       University of Alberta
       \AND
       \name Tor Lattimore \email lattimore@google.com \\
       \addr DeepMind\\
       London\\
       \name Andreas Krause \email krausea@ethz.ch \\
       \addr Department of Computer Science\\
       ETH Zurich
       \AND}

\editor{Aurelien Garivier}

\maketitle

\begin{abstract}
Partial monitoring is an expressive framework for sequential decision-making with an abundance of applications, including graph-structured and dueling bandits, dynamic pricing and transductive feedback models. We survey and extend recent results on the linear formulation of partial monitoring that naturally generalizes the standard linear bandit setting. The main result is that a single algorithm, information-directed sampling (IDS), is (nearly) worst-case rate optimal in all finite-action games. We present a simple and unified analysis of stochastic partial monitoring, and further extend the model to the contextual and kernelized setting.
\end{abstract}

\begin{keywords}
 Sequential Decision-Making, Linear Partial Monitoring, Information-Directed Sampling, Linear Bandits
\end{keywords}

\section{Introduction}

Partial monitoring \citep{rustichini1999games} is a flexible framework for stateless sequential decision making. The partial monitoring model captures the standard multi-armed and linear bandit setting, semi- and full information feedback models, dynamic pricing and variants of dueling bandits, to name just a few. Partial monitoring is formalized as a round-based game between a learner and an environment. In each round, the learner chooses an action and the environment provides a feedback. Further, there is an (unknown) reward associated to each action but, unlike in the bandit model, the reward is not necessarily directly observed. Instead, the learner is given a description of how the reward and feedback are related. In order to maximize the cumulative reward, the learner needs to find a careful balance between actions that lead to informative feedback and actions with high reward, which is the essence of the \emph{exploration-exploitation dilemma}. More specifically, the learner needs to collect data that allows it to identify an optimal action while minimizing the cost of playing sub-optimal actions relative to the optimal action (known as regret).

Whether the learner can succeed at this task depends on the structure of feedback and reward.  In this work, we focus on linearly correlated feedback models where, akin to the linear bandit model, the reward and observations are both described by linear features. Our formulation of the linear model includes the classical finite stochastic partial monitoring formulation as a special case, where reward and feedback are determined by finite matrices. 

Classifying games according to the difficulty of the exploration-exploitation trade-off has been an essential part of previous work on partial monitoring \citep{bartok2014partial,lattimore2019cleaning,kirschner2020pm}. By now it is understood that all finite-action linear partial monitoring games belong to one of four categories: \emph{trivial}, \emph{easy}, \emph{hard} and \emph{hopeless}, corresponding to $0$, $\tilde \oO(\sqrt{n})$, $\tilde \oO(n^{2/3})$ and $\Omega(1)$ regret in the worst-case over $n$ steps. The categories are defined by precise geometric conditions, which we will introduce in detail later.

Partial monitoring has a moderately well-deserved reputation for being a complicated framework. 
The complexity is largely a product of the generality and also the discrete nature of the standard setup. The latter leads to some
unfortunate practical problems. For example, the finite partial monitoring game that models a finite-armed Bernoulli bandit is exponentially
large in the number of actions. Although there now exist unified algorithms for finite partial monitoring, they cannot be practically
implemented on large games like this, enormously reducing  their applicability.


On the other hand, the \emph{linear} partial monitoring model that we present here is a \emph{natural extension} of the linear bandit setting, that only relaxes the requirement that the observation and reward features for any action are identical. By further introducing (convex) parameter constraints and using a suitable linear embedding, we completely recover the standard \emph{finite} partial monitoring setting.
At the same time, many problems are more naturally modelled in the linear framework. The classical $k$-armed Bernoulli bandit, 
for example, is modelled by a $k$-dimensional linear partial monitoring game.

The feedback/reward structure of partial monitoring requires exploration techniques that go beyond what is commonly used in bandit feedback models. In particular, it is known that optimism and Thompson sampling are insufficient to achieve sublinear regret in all types of partial monitoring games. Our presentation focuses on a different design principle, known as {\em information-directed sampling} \citep[IDS;][]{russo2014ids,kirschner2020pm}. This leads to a universal and practical algorithm for regret minimization in partial monitoring that provably works well in \emph{all} games. 

Compared to the previous work, we introduce several innovations, including constrained parameter sets and computationally efficient approximations. Distinct from all previous work, our formulation \emph{unifies} the finite and linear stochastic partial monitoring settings, thereby providing a unified approach that is \textit{(a)} nearly worst-case optimal in all possible games and \textit{(b)}  computationally efficient and practical. We present a concise and self-contained analysis, where nearly all geometric constructions only appear in the proof and not the algorithm. 

Last but not least, we present two extensions of linear partial monitoring that are of practical importance. The first is a contextual formulation that directly generalizes the standard contextual (stochastic) linear bandit setting. The second is a kernelized formulation, where the reward function is in a reproducing kernel Hilbert space. We provide direct extensions of IDS to both settings, demonstrating the versatility of the framework. A more detailed historical and contemporary account of related work is deferred to \cref{sec:related}.

\paragraph{Notation} 
For an integer $i \geq 1$, $[i] \eqdef \{1, \dots, i \}$. The $d$-dimensional identity matrix is $\eye_d$. 
For a compact set $\xX$, $\sP(\xX)$ is the set of distributions on $\xX$ with the Borel $\sigma$-algebra. For a measurable function $F:\xX \rightarrow \RR$ and a distribution $\mu \in \sP(\xX)$, we use $F(\mu) = \EE_{x \sim \mu}[F(x)]$ to denote the expectation. For a vector $v \in \RR^d$, the Euclidean norm is $\| v \|_2$. For a positive semi-definite matrix $A \in \RR^{d \times d}$, $\lambda_{\max}(A)$ is the largest eigenvalue, and $\|A\| = \lambda_{\max}(A)$ is the spectral norm. The associated norm to $A$ is $\|b\|_A = \sqrt{b^\top A b}$, where $b \in \RR^d$. This notion is generalized for rectangular matrices $B \in \RR^{d \times s}$, $\|B\|_A \eqdef \lambda_{\max}(B^\T A B)^{1/2}$. For two positive semi-definite matrices $A_1, A_2$, $A_1 \preccurlyeq A_2$ means that $A_2 - A_1$ is positive semi-definite. 
Given a set $X \subset \RR^d$, we write $\dim(X)$ for the dimension of the affine hull of $X$. We use the convention that $1/0 = \infty$, $0/0 = 0$ and $0 \cdot \infty = \infty$.

\section{Linear Partial Monitoring}
A \emph{linear partial monitoring game} is defined by a quadruple $(\aA, (\phi_a)_{a \in \aA}, (M_a)_{a \in \aA}, \Theta)$. Here, $\aA$ is a set of \emph{actions} that is used to index \emph{reward features} $\phi_a \in \RR^d$ and \emph{linear observation maps} $M_a \in \RR^{m \times d}$ for $a \in \aA$. $\Theta \subset \RR^d$ is a (convex) parameter set. The actions $\aA$, features $\phi_a$, observation maps $M_a$ and parameter set $\Theta$ are known to the learner.

\looseness -1 An \emph{instance} of a partial monitoring game is defined by a fixed parameter $\theta^* \in \Theta$ that is \emph{unknown} to the learner. At time $t \in [n]$, the learner chooses an action $a_t \in \aA$ according to a policy $\pi$ and observes an m-dimensional vector $y_t = M_{a_t} \theta^* + \epsilon_t \in \RR^m$, where $\epsilon_t$ is a zero-mean noise vector. As usual performance is measured by the \emph{expected regret} at time $n$,
\begin{align*}
	\R_n(\pi, \theta^*) = \max_{a \in \aA} \EE[\sum_{t=1}^n \ip{\phi_{a} - \phi_{a_t}, \theta^*}]
\end{align*}
where the expectation is over the randomness in the feedback and any randomization used by the policy. For simplicity we write $\R_n = \R_n(\pi, \theta^*)$ when the policy and the parameter are clear from the context. We emphasize that the \emph{only} difference to the linear bandit model is that \emph{reward and observation features are decoupled}, whereas in the bandit model $\phi_a = M_a^\T$ for all $a \in \aA$. The usefulness of decoupling reward and observation will become apparent in the examples presented in \cref{ss:examples}.

The $\sigma$-algebra generated by the history up to round $t$ is $\fF_t = \sigma(a_1,y_1,\ldots,a_t,y_t)$ and
the corresponding filtration over $n$ rounds is $(\fF_t)_{t=1}^n$.
We define the shorthand $\PP_t[\cdot] = \PP[\cdot | \fF_{t-1}]$ and $\EE_t[\cdot] = \EE[\cdot | \fF_{t-1}]$. The best action for a parameter $\theta \in \Theta$ is  $a^*(\theta) = \argmax_{a \in \aA} \ip{\phi_{a},\theta}$, chosen arbitrarily if it is not unique. The maximum gap under the true parameter is denoted by $\Delta_{\max} = \max_{a,b \in \aA} \ip{\phi_a - \phi_b, \theta^*}$.

We further require the following standard boundedness and noise assumptions.

\begin{assumption}[Convexity and Boundedness]\label{ass:bounded}
	The action set $\aA$ is compact and the parameter set $\Theta$ is convex. Further, we assume that the learner has access to bounds $\|M_a\|_2 \leq L$, $\|\phi\|_2 \leq 1$ and $\|\theta - \theta_0\|_2 \leq B$ for all $\theta \in \Theta$ and a given prior estimate $\theta_0 \in \Theta$.
\end{assumption}
Note that assuming $\|\phi\|_2 \leq 1$ is without loss of generality, as we can jointly re-scale $\phi_a$, $M_a$ and $\Theta$ to satisfy the condition while keeping the rewards and observations the same.

\begin{assumption}[Sub-Gaussian Noise]
	The noise vector $\epsilon_t \in \RR^m$ is $\rho$-sub-Gaussian, i.e.,
	\begin{align*}
		\forall u \in \RR^m\,,\quad \EE_t[\exp(u^\T \epsilon_t)] \leq \exp\left(\frac{\|u\|^2 \rho^2}{2}\right)\,.
	\end{align*}
\end{assumption}
Note that we allow the noise distribution to be a function of the action and the parameter, whereas the sub-Gaussian variance $\rho^2$ is a uniform constant known to the learner.

\subsection{Examples} \label{ss:examples}

The framework of linear partial monitoring captures a large variety of models for online decision-making, many of which have been studied independently in the literature. We provide some examples below.

\begin{example}[Linear Bandits]\label{ex:lb}
	As was already mentioned, the linear bandit model is recovered with $a = \phi_a = M_a^\T$ for an action set $\aA \subset \RR^d$. A direct extension are heteroscedastic linear bandits, where the noise distribution depends on the chosen action. In the setting studied by \citet{kirschner2018heteroscedastic}, the learner observes $y_t = a_t^\T \theta^* + \epsilon_t$, where $\epsilon_t$ is $\rho(a_t)$-sub-Gaussian and the \emph{noise function} $\rho :\aA\rightarrow \RRp$ governs the signal-to-noise ratio of the associated observation.  To formulate the heteroscedastic setting in linear partial monitoring we can define $\phi_a = a$, $M_a = a/\rho(a)$ and $\rho=1$. 
\end{example}


\begin{example}[Graph-Structured Feedback]\label{ex:pm-graph}
	Semi-bandit feedback or \emph{side-observations} refer to models between full information and bandit feedback. Semi-bandit feedback can be specified with a \emph{feedback graph} defined over actions \citep{mannor2011bandits,caron2012leveraging}. When choosing an action, the learner observes the reward of all adjacent actions. Formally, assume that $\wW \subset \aA \times \aA$ is a set of (directed) edges. For each $a \in \aA$, the feedback map is defined to reveal the reward of all adjacent actions,
	\begin{align*}
		M_a = [\phi_c  : c\in \aA \text{ s.t. } (a, c) \in \wW]^\T\,,
	\end{align*}
	with the minor technicality that the observation dimension now in general depends on the action. This can be resolved by padding the observation matrices with zero vectors, or modifying the model to allow action-dependent observation dimension. If we explicitly require that the edge set contains all self-loops $(a,a) \in \wW$ for all $a \in \aA$, then the graph feedback structure naturally interpolates between bandit feedback (empty graph, with self-loops) and full information (fully connected graph). We point out that \citet{liu2018information} studied a version of IDS in the corresponding Bayesian regret setting. 
\end{example}

\begin{example}[Dueling Bandits] \label{ex:dueling} 
	In dueling bandits, the learner chooses pairs of actions and receives noisy feedback about which of the two actions has higher reward \citep{yue2012k,sui2018advancements}. The dueling bandit model has many intricate variants and a vast literature on its own \citep{bengs2021preference}. Not every dueling bandit is easily modeled as a partial monitoring game. 
	With additional assumptions, \citet{gajane2015utility} demonstrated that the \emph{utility-based} dueling bandit problem can be formulated as a partial monitoring game. Here we focus on a similar setup with quantitative feedback on the reward difference of the chosen actions, opposed to the more common binary signal. The quantitative variant has received relatively little attention in the literature and has some interesting applications in robust regret minimization \citep{kirschner2020dueling}.
	
	Formally, let $\iI$ be a ground set of actions with an associated feature mapping $\phi : \iI \rightarrow \RR^d, a \mapsto \phi_a$. The action set is $\aA = \iI \times \iI$. For any ${(a,b) \in \aA}$, we define \emph{utility-based dueling feedback} by the feedback map $M_{a,b} = \phi_a - \phi_b$. Hence, when choosing the action pair $(a_t,b_t)$, the learner observes the noisy reward difference $y_t = \ip{\phi_{a_t} - \phi_{b_t}, \theta} +\epsilon_t$, while collecting (unobserved) reward for both actions with reward features $\phi_{a,b} = \phi_a + \phi_b$. We remark that the sub-Gaussian likelihood combined with appropriate boundedness of the reward includes the standard binary feedback model with $y_t \in \{-1,1\}$ as a special case.
\end{example}

\begin{example}[Graph-Structured Dueling Bandits]\label{ex:graph-dueling}
	We propose a novel variant of the dueling bandit that extends \cref{ex:pm-graph} with a feedback graph structure. Concretely, the learner has access to dueling reward features $\phi_{a,b} = \phi_{a} + \phi_{b}$, and dueling feedback $M_{a,b} = \phi_{a} - \phi_{b}$, but only for pairs $(a,b) \in \aA \subset \iI \times \iI$. We can think of $\aA$ as a set of edges on a graph over the base indices $\iI$. In other words, the learner can only observe a dueling action $(a,b)$ if there is a directed edge from $a$ to $b$. To learn the reward difference between actions that are not neighbors, the learner has to combine dueling observations from a path that connects the two actions. For general action features, the learner can only hope to learn all reward differences if the graph is connected.
\end{example}

\begin{example}[Combinatorial Partial Monitoring]
	In the \emph{combinatorial} bandit problem the action set can be exponentially large. Learning algorithms for this scenario are designed to only use the solver for the offline problem, i.e., assuming access to an oracle that solves $\argmax_{a \in \aA} \ip{\phi_a, \theta}$ for any $\theta \in \mM$. While the IDS algorithm we introduce in Section~\ref{sec:ids} is \emph{not} oracle efficient, the theory still applies. The combinatorial setting is the motivation for the linear partial monitoring setting in the work by \citet{lin2014combinatorial} and \citet{chaudhuri2016phased}. Both previously proposed methods are oracle efficient, however suffer sub-optimal regret in locally observable games.
	
	The combinatorial version of the multi-armed bandit setting \citep{cesa2012combinatorial} makes more specific assumptions on the feedback structure. Let $\iI$ be an index set with associated features $\phi_a$ for $a \in \aA$. The action set $\aA \subset 2^\iI$ consists of subsets of $\iI$.  The reward for choosing an action $a \in \aA$ is the sum of rewards $f_\theta(a) = \sum_{i \in a} \ip{\phi_i, \theta}$. Equivalently, the features for action $a$ are $\sum_{i \in a} \phi_i$. 
	Two variants for the feedback maps are commonly considered: i) bandit feedback, that is $M_a = \phi_a$, and ii) semi-bandit feedback, $\mM_a = [\phi_i : i \in a]^\T$. 
	An important special case is the batch setting where the learner chooses $m$ actions at once, i.e., $\aA = \{ a \subset \iI : |a| = m\}$. Note that the example exhibits an exponential blow-up of the action set in the batch size $m$. 
\end{example}

\begin{example}[Transductive Bandits]\label{ex:pm-transductive}
	In the transductive bandit setting, the learner obtains informative feedback only on a set of actions that is dedicated for exploration. At the same time, the objective is to achieve low regret on a different target set of actions, that when played, do not reveal information. The setting was  proposed by \citet{fiez2019transductive} in the context of best arm identification, and we refer the reader to this work for more examples. A toy example from the partial monitoring literature that fits into this category is that of  \emph{``apple tasting''} \citep{cesa2006regret}. In each round, the learner is presented an apple, and decides whether to taste it. Tasting determines if the apple is rotten or not. Apples that have been tasted cannot be sold anymore and incur a fixed cost. Not tasting the apple comes with the risk of selling a rotten apple, which also incurs a cost but is not observed. We remark that this setup is closely related to \emph{label efficient prediction} \cite{cesa2005minimizing}. \looseness=-1 
\end{example}

\begin{example}[Finite Partial Monitoring]\label{ex:finite}
\newcommand{\xdistr}{\vartheta}

Most previous work has focused on the \emph{finite partial monitoring} setting that we briefly introduce now. In addition to a finite set of \emph{actions} $\aA = [k]$, a finite partial monitoring game consists of a set of \emph{signals} $\Sigma = [m]$ used for the feedback and a finite set of \emph{outcomes} $\xX= [d]$ that determines reward and feedback for each action. 
Reward\footnote{The finite setting is often formulated with losses instead of rewards. For consistency with our presentation, we use the equivalent setup with rewards. The loss formulation is easily recovered by flipping the sign of the feature vectors.} and feedback are defined by,
\begin{enumerate}[i)]
	\item a \emph{reward function} $R: \aA \times \xX \rightarrow [0,1]$; and
	\item a \emph{signal function} $\Phi : \aA \times \xX \rightarrow \Sigma$. 
\end{enumerate}
The learner has access to both $R$ and $\Phi$. In each round $t=1,\dots,n$ of the game, the learner chooses an action $a_t \in \aA$. In the stochastic version of the problem, the outcome $x_t$ is sampled from an unknown and fixed distribution $\xdistr \in \sP(\xX)$. The learner observes a signal $\sigma_t = \Phi(a_t, x_t) \in \Sigma$ and obtains reward $R(a_t, x_t)$. Neither the reward nor the outcome is revealed to the learner.  \looseness=-1

As in previous work \cite[e.g.,][]{bartok2014partial}, we use vector notation to describe the finite setting in the linear framework. Let $e_a \in \RR^k$, $e_x \in \RR^d$ and $e_{\sigma} \in \RR^m$ be the basis vectors corresponding to action $a \in \aA$, outcome $x \in \xX$ and signal $\sigma \in \Sigma$. We use $R \in \RR^{k \times d}$ as a matrix and function interchangeably, such that $e_a^\T R e_x = R(a,x)$. Further, we introduce reward features $\phi_a = R^\T e_a \in \RR^d$, defined as the row of $R$ corresponding to action $a$. For each action $a \in \aA$, the observation matrix $S_a \in \{0,1\}^{m \times d}$ is such that $e_\sigma^\T S_a e_x = \chf{\Phi(a,x) = \sigma}$. We use the symbol $S_a$ instead of $M_a$ to emphasize the particular structure of the feedback map.  The distribution $\xdistr \in \Theta = \sP(\xX)$ is identified with a vector in the $(d-1)$-dimensional probability simplex. In particular, $S_a \xdistr$ is the distribution over 
the observed signals for action $a \in \aA$. If the learner chooses action $a_t \in \aA$  in round $t$, and the outcome is $x_t \in \xX$, then the corresponding observation vector is $y_t = e_{\sigma_t} = S_{a_t} e_{x_t} \in \RR^m$. 

Let $\xi_t = S_{a_t}(e_{x_t} - \xdistr)$ and note that $\EE_t[\xi_t] = 0$. The observation can be written as $y_t = S_{a_t} \xdistr + \xi_t$. One directly verifies that for any unit vector  $u \in \RR^s$, $|u^\T \xi_t| \leq \|u\|_\infty \|\xi_t\|_1 \leq \|S_a e_{x_t} - S_a\xdistr\|_1 \leq 2$. Hence $\xi_t$ is a 4-sub-Gaussian random vector in $\RR^m$.
\end{example}

We note that the difficulty of the games depends on the parameter set  $\Theta$. In particular, adding constraints to $\Theta$ can make the game much easier. The next example shows that a good algorithm for finite partial monitoring has to reason about the parameter set $\Theta$.

\begin{example}[Finite vs Linear Partial Monitoring]\label{ex:linear-counter}
	Consider the finite game defined by reward and signal matrices
	\begin{align*}
		R = 
		\begin{pmatrix}
			1 & 1 \\
			0 & 0  \\
		\end{pmatrix}\,,
		\qquad
		\Sigma
		= 
		\begin{pmatrix}
			0 & 0 \\
			0 & 0 
		\end{pmatrix}\,.
	\end{align*}
	The signal matrix uses only one symbol, therefore the learner cannot distinguish the outcomes. However, when $\Theta$ is the probability simplex, the rewards are such that the first action is always optimal, so in finite partial monitoring a good algorithm has zero regret. On the other hand, when $\Theta = \{\theta \in \RR^d :\|\theta\|_2 \leq 1 \}$ the learner has to consider the case when $\theta = (-\sqrt{2}, -\sqrt{2})$, and the second action is optimal. Consequently, any learner suffers linear regret on at least one of the two cases. 
\end{example}

We conclude the section with a two more examples of in the language of finite partial monitoring.

\begin{example}[Multi-Armed Bandits]
	In games with bandit information, the learner observes the reward of each action $a \in \aA$ by playing it. Since we allow only finitely many signals, the reward of each arm is also one of finitely many values. For Bernoulli bandits with $k$ arms specifically, $\aA = [k]$, $\Sigma = \{0,1\}$ and $\xX = \{0,1\}^k$. The reward and feedback functions are
	\begin{align*}
		R(a,x) = \Phi(a, x) = x_a\,.
	\end{align*}
	A consequence of the finite partial monitoring setup is that the parameter dimension $d = 2^k$ is exponentially large in the number of arms. On the other hand, we will see in section \cref{ss:ids-ball} that all relevant information is contained in a $2k$-dimensional subspace of $\RR^d$.
\end{example}

\begin{example}[Dynamic Pricing]
	\newcommand{\Yes}{\text{\textsc{y}\!\!}}
	\newcommand{\No}{\text{\textsc{n}\!\!}}
	One of the most notable applications of finite partial monitoring is dynamic pricing. This game is between a seller and a potential customer. The learner takes the role of the seller with the goal to optimally price a product. The action and outcome sets are a (discrete) set of prices corresponding to an offer and the price the customer is willing to pay, e.g.~$\aA = \xX = \{\$1,\, \$2,\, \$3\}$.  The feedback is whether the customer buys the product ($a \leq x$, $\Phi(a, x) = \Yes\,\,$), or not ($a > x$, $\Phi(a,x) = \No\,\,$). The reward consists of a fixed opportunity cost $c > 0$ and the difference between the offer and the price the customer would have payed, $R(a, x) = (a-x)\chf{a \leq x}-c\chf{a > x}$. With $c = 2$ and $\xX$, $\aA$ as above, the corresponding loss and signal matrices are:
	\begin{align*}
		R &= 
		\begin{pmatrix}
			0 & -1 & -2 \\
			-2 & 0 & -1 \\
			-2 & -2 & 0 \\
		\end{pmatrix} &
		\Phi &=
		\begin{pmatrix}
			\Yes & \Yes & \Yes\,\,\, \\
			\No & \Yes & \Yes\,\,\,\\
			\No & \No & \Yes \,\,\, \\
		\end{pmatrix}
	\end{align*}
\end{example}

%
%
%

%

\section{Information-Directed Sampling for Linear Partial Monitoring}\label{sec:ids}
\begin{algorithm2e}[t]
	\KwIn{Action set $\aA$, gap estimate $\hat \Delta_t : \aA \rightarrow \RRp$, information gain $I_t : \aA \rightarrow \RRp$}
	\caption{Information-Directed Sampling} \label{alg:ids}
	\For{$t=1,2,3, \dots, n$}{
		%
		$\displaystyle \mu_t \gets \argmin_{\mu \in \sP(\aA)} \left\{\Psi_t(\mu) = \frac{\hat \Delta_t(\mu)^2}{I_t(\mu)}\right\}$\tcp*{IDS distribution}
		Sample $a_t \sim \mu_t$, observe feedback $y_t$\;
	}
\end{algorithm2e}

Information-directed sampling (IDS) is a \emph{design principle} that requires user choices and leads to different algorithms in different settings. 
As presented in \cref{alg:ids}, IDS is abstractly defined for a sequence of \emph{gap estimates} $\hat \Delta_t : \aA \rightarrow \RRp$ and an \emph{information gain} functions $I_t : \aA\rightarrow \RRp$. We present concrete choices for both quantities shortly. 
One should think of $\hat \Delta_t(a)$ as an estimate of the instantaneous regret of playing action $a$ and $I_t(a)$ as some measure of the information gained when
playing action $a$. 
Naturally, the gap estimates and information gain are computed using observations from previous rounds and are therefore predictable with respect to the filtration $(\fF_t)_{t=1}^n$.  We also assume that $I_t$ is not zero for at least one action.

Recall that for a distribution $\mu \in \sP(\aA)$, we denote $\Delta_t(\mu) = \EE_{a \sim \mu}[\Delta_t(a)]$ and $I_t(\mu) = \EE_{a \sim \mu}[I_t(a)]$. The IDS distribution $\mu_t$ is defined as the minimizer of the ratio between squared expected regret and expected information gain:
\begin{align}
	\mu_t = \argmin_{\mu \in \sP(\aA)} \left \{\Psi_t(\mu) \eqdef \frac{\hat \Delta_t(\mu)^2}{I_t(\mu)} \right\}\,. \label{eq:ids-def}
\end{align}
The objective $\Psi_t(\mu)$ is called the \emph{information ratio} of the sampling distribution $\mu \in \sP(\aA)$.
The minimizer always exists for compact $\aA$ (\cref{lem:ratio-existence}). IDS is defined as the policy that samples $a_t \sim \mu_t$ in round $t$. 
Intuitively, to achieve a small information ratio, the learner has to sample actions from a distribution with small expected (estimated) regret or large information gain. This intuition will appear formally in the proofs in \cref{sec:proofs}.

The information ratio  $\Psi_t$ satisfies  several important properties, which allow us to solve the optimization problem \eqref{eq:ids-def} efficiently \citep{russo2014ids}. First, the function $\mu \mapsto \Psi_t(\mu)$ is convex for any choice of $\hat \Delta_t : \aA \rightarrow \RRp$ and $I_t : \aA \rightarrow \RRp$ (\cref{lem:ratio-convexity}). Further, the minimizing distribution $\mu_t$ can always be chosen with a support of at most two actions (\cref{lem:ratio-support}). Using this property, and provided that $\hat \Delta_t(a)$ and $I_t(a)$ have been computed for all $a \in \aA$, we can find the exact IDS distribution by enumerating all pairs of actions and solving the trade-off for each pair in closed-form (\cref{lem:ratio-closed-form}). 

\paragraph{Approximate IDS} It is also possible to obtain a $\frac{4}{3}$-approximation of \cref{eq:ids-def} in $\oO(|\aA|)$ time with oracle access to $\hat \Delta_t(a)$ and $I_t(a)$. To do so, we first find an action that minimizes the gap estimates, $\hat a_t = \argmin_{a \in \aA} \hat \Delta_t(a)$. We then optimize the trade-off between $\hat a_t$ and some other action $b \neq \hat a_t$, i.e.
\begin{align}
	p_t, b_t = \argmin_{p \in [0,1], b \in \aA} \Psi_t\big((1-p)\dirac{\hat a_t} + p \dirac{b}\big)\,.\label{eq:approximate-IDS}
\end{align}
As before, for fixed $\hat a_t,b \in \aA$, the optimization problem can be solved in closed form (\cref{lem:ratio-closed-form}), and it remains to iterate over the action set to find the best alternative action $b_t$. The distribution $\tilde \mu_t = (1-p_t)\dirac{\hat a_t} + p_t \dirac{b_t}$ satisfies $\Psi_t(\tilde \mu_t) \leq \frac{4}{3} \Psi_t(\mu_t)$ (\cref{lem:ratio-approximate}). 

\paragraph{General Regret Bounds} A few more things can be said without committing to specific choices of the gap estimate and information gain. The information ratio $\Psi_t$ appears as a central quantity in the regret analysis. To understand how, consider any adaptive policy $\pi_n = (\mu_t)_{t=1}^n$. We first bound the sum over the gap estimates:
\begin{align}
	\EE[\sum_{t=1}^n \hDelta_t(a_t)] &= \EE[\sum_{t=1}^n \hDelta_t(\mu_t)] = \EE[\sum_{t=1}^n \sqrt{\Psi_t(\mu_t) I_t(\mu_t)}]\nonumber\\
	&\leq \sqrt{\EE[\sum_{t=1}^n \Psi_t(\mu_t)] \EE[\sum_{t=1}^n I_t(a_t)]} \,. \label{eq:ids-1}
\end{align}
The first equality uses the tower rule, 
$\EE\big[\hDelta_t(a_t)\big] = \EE\big[\EE_t{\big[}\hDelta_t(a_t){\big]}\big] = \EE{\big[}\hDelta_t(\mu_t){\big]}$. 
The second equality uses the definition of the information ratio, and \cref{eq:ids-1} follows from the Cauchy-Schwarz inequality and another application of the tower rule. Note that IDS is the policy that myopically minimizes the first sum in the upper bound. The second sum is the \emph{total information gain}, which we abbreviate with
\begin{align}
	\gamma_n \eqdef \sum_{t=1}^n I_t(a_t)\,.\label{eq:total-info-def}
\end{align}
For the regret $\R_n = \EE[\sum_{t=1}^n\Delta(a_t)]$, \cref{eq:ids-1,eq:total-info-def} imply
\begin{align}
		\R_n \leq \sqrt{ \EE[\sum_{t=1}^n \Psi_t(\mu_t)] \EE[\gamma_n]} + \EE[\sum_{t=1}^n \Delta(a_t) - \hDelta_t(a_t)] \,.\label{eq:ids-2}
\end{align}
In the frequentist IDS framework, the gap estimate $\hDelta_t(a)$ is chosen as a high-probability upper bound on the true gap $\Delta(a)$. This way, the estimation error (the second term in the last display) contributes only negligibly to the overall regret. The total information gain $\gamma_n$ can be interpreted as a surrogate of the sample complexity of identifying the best action. In the finite-dimensional linear setting $\gamma_n$ depends only logarithmically on the horizon. Lastly, if $\Psi_t(\mu_t) \leq \alpha$ almost surely for all $1 \leq t \leq n$, then, by construction, the IDS policy has regret at most
\begin{align*}
	\R_n \leq \sqrt{n \alpha \EE[\gamma_n]} + \sum_{t=1}^n \EE[\Delta(a_t) - \hDelta_t(a_t)]\,.
\end{align*}
The usefulness of the bound stems from the fact that we can analyze IDS by explicitly designing sampling distributions that achieve a small information ratio, without having to know the exact behavior of the IDS algorithm. Further note that any constant approximation of the IDS distribution directly translates to the regret bound, which allows us to use the approximation in \cref{eq:approximate-IDS}.

In the next theorem, we summarize the result in slightly generalized form. We make use of the \emph{generalized information ratio} introduced by \citet{lattimore2020mirror},
\begin{align}
	\Psi_{\kappa,t}(\mu) \eqdef \frac{\hat \Delta_t(\mu)^\kappa}{I_t(\mu)}\,.\label{eq:generalized-ratio}
\end{align}
Note that the previous definition is recovered with $\kappa=2$, i.e.~$\Psi_{2,t} = \Psi_t$. 
The next lemma states the regret bound \eqref{eq:ids-2} for the generalized information ratio. A similar bound on the Bayesian regret is given by \citet[Theorem 4]{lattimore2020mirror}. 
\begin{theorem}\label{thm:ids-regret-general}
	Assume that $\Psi_{\kappa,t}(\mu_t) \leq \alpha_t$ holds almost surely for an $\fF_t$-predictable sequence $(\alpha_t)_{t=1}^n$,
  and let $\bar \alpha_n = \frac{1}{n}\sum_{t=1}^n \alpha_t$. Then
	\begin{align*}
		\R_n \leq (\EE[\bar \alpha_n]\EE[\gamma_n])^{\frac{1}{\kappa}} n^{1 - \frac{1}{\kappa}} + \sum_{t=1}^n \EE[\Delta(a_t) - \hat \Delta_t(a_t)]\,.
	\end{align*}
\end{theorem}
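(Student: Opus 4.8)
The plan is to generalize the chain of inequalities \eqref{eq:ids-1}--\eqref{eq:ids-2} from the case $\kappa = 2$ to arbitrary $\kappa$, replacing the Cauchy--Schwarz step with H\"older's inequality. First I would write the regret as $\R_n = \EE[\sum_{t=1}^n \Delta(a_t)]$ and split off the estimation error exactly as in \eqref{eq:ids-2}, reducing the claim to bounding $\EE[\sum_{t=1}^n \hat\Delta_t(a_t)]$. By the tower rule this equals $\EE[\sum_{t=1}^n \hat\Delta_t(\mu_t)]$, and by the definition of the generalized information ratio \eqref{eq:generalized-ratio} we have $\hat\Delta_t(\mu_t) = \big(\Psi_{\kappa,t}(\mu_t)\, I_t(\mu_t)\big)^{1/\kappa} = \Psi_{\kappa,t}(\mu_t)^{1/\kappa}\, I_t(\mu_t)^{1/\kappa}$.

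Next I would apply H\"older's inequality to the sum $\sum_{t=1}^n \Psi_{\kappa,t}(\mu_t)^{1/\kappa}\, I_t(\mu_t)^{1/\kappa}$ with exponents $\kappa$ and $\kappa/(\kappa-1)$ on the two factors $\Psi_{\kappa,t}(\mu_t)^{1/\kappa}$ and $I_t(\mu_t)^{1/\kappa}$ respectively — so that
\begin{align*}
	\sum_{t=1}^n \Psi_{\kappa,t}(\mu_t)^{\frac 1\kappa} I_t(\mu_t)^{\frac 1\kappa}
	\leq \Big(\sum_{t=1}^n \Psi_{\kappa,t}(\mu_t)\Big)^{\frac 1\kappa}\Big(\sum_{t=1}^n I_t(\mu_t)^{\frac{1}{\kappa-1}}\Big)^{\frac{\kappa-1}{\kappa}}.
\end{align*}
Here the bound $I_t(\mu_t)^{1/(\kappa-1)} \le \max(1, I_t(\mu_t)) \le \dots$ is not quite what I want; instead, since $\frac{1}{\kappa-1}$ may exceed $1$, I would prefer to split the exponents the other way, or better, simply bound $\sum_t I_t(\mu_t)^{1/\kappa} \cdot \Psi_{\kappa,t}(\mu_t)^{1/\kappa}$ by first using the assumed bound $\Psi_{\kappa,t}(\mu_t) \le \alpha_t$ pointwise to get $\hat\Delta_t(\mu_t) \le \alpha_t^{1/\kappa} I_t(\mu_t)^{1/\kappa}$, then applying H\"older to $\sum_t \alpha_t^{1/\kappa} I_t(\mu_t)^{1/\kappa} \cdot 1$ with exponents so that the constant-$1$ factor produces the $n^{1-1/\kappa}$ term: namely $\sum_{t=1}^n \big(\alpha_t I_t(\mu_t)\big)^{1/\kappa} \le \big(\sum_{t=1}^n \alpha_t I_t(\mu_t)\big)^{1/\kappa} n^{1-1/\kappa}$. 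A further application of the tower rule converts $\EE[\sum_t \alpha_t I_t(\mu_t)]$ into a form involving $\EE[\sum_t I_t(a_t)] = \EE[\gamma_n]$; since $\alpha_t$ is $\fF_t$-predictable one conditions on $\fF_{t-1}$ inside each summand, but because $\alpha_t$ and $I_t(\mu_t)$ are correlated through the filtration this requires a small argument — this is where I would either assume $\alpha_t$ deterministic or push the expectation through using $\EE[\sum_t \alpha_t I_t(\mu_t)] \le \EE[\bar\alpha_n \gamma_n]$ only when $\alpha_t$ is monotone/bounded, and then invoke a decoupling or a worst-case bound $\alpha_t \le \max_s \alpha_s$; cleanly, the statement as written with $\EE[\bar\alpha_n]\EE[\gamma_n]$ suggests the intended route is via $\big(\sum_t \alpha_t I_t(\mu_t)\big)^{1/\kappa} \le \big(\sum_t \alpha_t\big)^{1/\kappa}\big(\max_t I_t(\mu_t)\big)^{\dots}$ — no; more likely they intend $\sum_t \alpha_t I_t(\mu_t) \le (\sum_t \alpha_t)(\sum_t I_t(\mu_t))$ is false in general, so the honest path is a second H\"older on $\sum_t \alpha_t^{1/\kappa} I_t(\mu_t)^{1/\kappa}$ with the pair of exponents making $\alpha$ appear linearly inside $\bar\alpha_n$ and $I$ appear as $\gamma_n$, i.e.\ exponents $\kappa$ on $\alpha_t^{1/\kappa}$ is wrong too. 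I would resolve this by treating $\alpha_t$ as giving $\hat\Delta_t(\mu_t)^\kappa/I_t(\mu_t) \le \alpha_t$ and summing $\hat\Delta_t(\mu_t)^\kappa \le \alpha_t I_t(\mu_t)$, then applying the power-mean/H\"older inequality $\sum_t \hat\Delta_t(\mu_t) \le n^{1-1/\kappa}(\sum_t \hat\Delta_t(\mu_t)^\kappa)^{1/\kappa} \le n^{1-1/\kappa}(\sum_t \alpha_t I_t(\mu_t))^{1/\kappa}$, and finally bounding $\sum_t \alpha_t I_t(\mu_t) \le n\bar\alpha_n \cdot \tfrac1n\gamma_n$ is again false pointwise but becomes the claimed form after taking expectations and using that, after conditioning, $\EE_t[I_t(a_t)] = I_t(\mu_t)$ together with the predictability of $\alpha_t$.

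\textbf{Main obstacle.} The genuinely delicate step is the last one: passing from $\EE[\sum_t \alpha_t I_t(\mu_t)]$ to $\EE[\bar\alpha_n]\EE[\gamma_n]\cdot n$ (equivalently $\EE[\sum_t\alpha_t]\,\EE[\gamma_n]$), since in general the expectation of a product is not the product of expectations. I expect the resolution is that the intended reading uses the power-mean inequality in the form $\sum_t \hat\Delta_t(\mu_t) \le n^{1-1/\kappa}\big(\sum_t\alpha_t I_t(\mu_t)\big)^{1/\kappa}$ followed by $\big(\sum_t\alpha_t I_t(\mu_t)\big)^{1/\kappa}\le \big(\max_t\alpha_t\big)^{1/\kappa}\gamma_n^{1/\kappa}$ — or, in the version matching the statement exactly, that one first applies H\"older across time to $\sum_t \alpha_t^{1/\kappa}I_t(\mu_t)^{1/\kappa}$ with exponents $(\kappa,\kappa')$ where the $\alpha$-factor is raised to the $1$ and the $I$-factor to the $\kappa'/\kappa$; chasing the exponents carefully and then taking expectations with the tower rule (using $\EE_t[I_t(a_t)]=I_t(\mu_t)$ and predictability of $\alpha_t$) is the crux and the only place where more than bookkeeping is needed. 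Everything else — the split of the estimation error, the tower-rule identities, and the definition-unwinding of $\Psi_{\kappa,t}$ — is routine, so I would present those briskly and spend the bulk of the proof on getting the exponents in the H\"older step right and justifying the final expectation manipulation.
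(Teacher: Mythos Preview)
Your overall strategy---split off the estimation error, use the tower rule, rewrite $\hat\Delta_t(\mu_t)$ via the generalized ratio, and replace Cauchy--Schwarz by H\"older---is exactly right, and you have correctly located the only nontrivial step: decoupling $\EE[\bar\alpha_n]$ from $\EE[\gamma_n]$. But you do not resolve it, and the various attempts you sketch all keep $\alpha_t$ and $I_t(\mu_t)$ inside the same sum or expectation, where they cannot be separated.

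The missing idea is twofold. First, in the H\"older step you should put the exponent $\kappa$ on the $I$-factor (not on the $\Psi$-factor), so that $I_t(\mu_t)^{1/\kappa}$ becomes $I_t(\mu_t)$ linearly and sums to $\gamma_n$; the $\Psi$-factor then carries exponent $\tfrac{\kappa}{\kappa-1}$ and becomes $\Psi_{\kappa,t}(\mu_t)^{1/(\kappa-1)}$. You actually wrote this split down and dismissed it (``not quite what I want''), but it is precisely what is wanted. Second---and this is the point that fixes the decoupling---apply H\"older over the \emph{product} of time and the probability space, not just over time. Concretely,
\[
\EE\!\Big[\sum_{t=1}^n \big(\Psi_{\kappa,t}(\mu_t) I_t(\mu_t)\big)^{1/\kappa}\Big]
\;\leq\;
\EE\!\Big[\sum_{t=1}^n \Psi_{\kappa,t}(\mu_t)^{\frac{1}{\kappa-1}}\Big]^{1-\frac{1}{\kappa}}
\EE\!\Big[\sum_{t=1}^n I_t(\mu_t)\Big]^{\frac{1}{\kappa}},
\]
and the second factor is already $\EE[\gamma_n]^{1/\kappa}$ by the tower rule. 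Only \emph{after} this separation do you invoke $\Psi_{\kappa,t}(\mu_t)\le\alpha_t$; the first factor becomes $\EE[\sum_t \alpha_t^{1/(\kappa-1)}]^{1-1/\kappa}$, and a second H\"older over time (using $1/(\kappa-1)\le 1$ for $\kappa\ge 2$) together with Jensen's inequality for the concave map $x\mapsto x^{1/(\kappa-1)}$ turns this into $n^{(\kappa-1)/\kappa}\,\EE[\bar\alpha_n]^{1/\kappa}$. All of your attempts that first substitute $\alpha_t$ and then try to separate are doomed; the order matters.
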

\begin{proof}
	We bound the estimated regret similarly as before:
	\begin{align*}
		\EE[\sum_{t=1}^n \hat \Delta(a_t)] = \EE[\sum_{t=1}^n (\Psi_{\kappa,t}(\mu_t) I_t(\mu_t))^{1/\kappa}] &\stackrel{(i)}{\leq} \EE[\sum_{t=1}^n \Psi_{\kappa,t}(\mu_t)^{\frac{1}{\kappa -1}}]^{1 - \frac{1}{\kappa}} \EE[\sum_{t=1}^n I_t(a_t)]^{\frac{1}{\kappa}}\\
		&\stackrel{(ii)}{\leq} \EE[\sum_{t=1}^n \alpha_t^{\frac{1}{\kappa -1}}]^{1 - \frac{1}{\kappa}} \EE[\gamma_n]^{\frac{1}{\kappa}}\\
		&\stackrel{(iii)}{\leq} \EE[ \Big(\sum_{t=1}^n \alpha_t\Big)^{\frac{1}{\kappa-1}} n^{\frac{\kappa-2}{\kappa-1}}]^{1 - \frac{1}{\kappa}} \EE[\gamma_n]^{\frac{1}{\kappa}}\\
		&\stackrel{(iv)}{\leq} n^{\frac{\kappa-1}{\kappa}} \EE[\bar \alpha_n]^{\frac{1}{\kappa}}  \EE[\gamma_n]^{\frac{1}{\kappa}}\,.
	\end{align*}
	We used $(i)$ and $(iii)$: Hölder's inequality, $(ii)$: definitions of $\alpha_t$ and $\gamma_n$, and $(iv)$: Jensen's inequality and the definition of $\bar \alpha_n$. The claim follows by introducing the estimation error.
\end{proof}

We remark that the generalized information ratio $\Psi_{\kappa,t}$ appears naturally in the analyses of games where the minimax regret is of order $\oO(n^{\frac{\kappa-1}{\kappa}})$ (see \cref{sec:proofs}). In such cases, it is natural to use an algorithm that optimizes $\Psi_{\kappa,t}$ directly.

The following lemma shows the perhaps surprising result that the IDS distribution obtained as a minimizer of $\Psi_{2,t}(\mu)$ approximately minimizes $\Psi_{\kappa,t}(\mu)$ for any $\kappa \geq 2$. This justifies the use of the $\Psi_{2,t}$ information ratio even in cases when a $\sqrt{n}$ regret rate is not attainable. \looseness=-1
\begin{lemma}[{\citet[Lemma 21]{lattimore2020mirror}}]\label{lem:psi-kappa-bound}
	Let $\mu_t = \argmin_{\mu \in \sP(\aA)}\Psi_{2,t}(\mu)$ be the IDS distribution computed for $\Psi_{2,t}$. Then for all $\kappa \geq 2$,
	\begin{align*}
		\Psi_{\kappa,t}(\mu_t) \leq 2^{\kappa -2} \min_{\mu \in \sP(\aA)}\Psi_{\kappa,t}(\mu)\,.
	\end{align*}
\end{lemma}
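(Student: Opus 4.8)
The plan is to exploit that $\hat\Delta_t$ and $I_t$ are \emph{affine} functionals on $\sP(\aA)$ (they are expectations), which collapses the statement to a two-dimensional convex-geometry fact, and then to read off the constant $2^{\kappa-2}$ from an elementary one-variable optimization. First I would dispose of the degenerate case: if $\hat\Delta_t(\mu_t)=0$ then $\Psi_{\kappa,t}(\mu_t)=0$ by the conventions $0/0=0$ and $0^\kappa/x=0$, and the bound is trivial; so assume $\hat\Delta_t(\mu_t)>0$. Since by assumption $I_t$ is nonzero for some action, mixing that action in shows $\min_\mu\Psi_{2,t}(\mu)<\infty$, hence $I_t(\mu_t)>0$ and $\Psi_{2,t}(\mu_t)<\infty$. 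Abbreviate $\delta_0=\hat\Delta_t(\mu_t)>0$ and $\iota_0=I_t(\mu_t)>0$.

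Next I would write down the first-order optimality condition for the convex problem $\min_\mu\Psi_{2,t}(\mu)$ (convexity is \cref{lem:ratio-convexity}). For any $\mu\in\sP(\aA)$ the map
\[
p\longmapsto \Psi_{2,t}\big((1-p)\mu_t+p\mu\big)=\frac{\big(\delta_0+p(\hat\Delta_t(\mu)-\delta_0)\big)^2}{\iota_0+p(I_t(\mu)-\iota_0)}
\]
is differentiable at $p=0$ (the denominator equals $\iota_0>0$ there) and minimized at $p=0$, so its right derivative at $0$ is nonnegative. Computing that derivative and dividing through by $\delta_0>0$ yields, after rearrangement,
\[
\iota_0\,\big(2\hat\Delta_t(\mu)-\delta_0\big)\;\ge\;\delta_0\, I_t(\mu)\qquad\text{for all }\mu\in\sP(\aA).\tag{$\star$}
\]

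Then I would apply $(\star)$ to a minimizer $\mu^*$ of $\Psi_{\kappa,t}$, writing $\delta^*=\hat\Delta_t(\mu^*)$, $\iota^*=I_t(\mu^*)$. Because $\Psi_{\kappa,t}(\mu_t)=\delta_0^\kappa/\iota_0<\infty$ the minimum of $\Psi_{\kappa,t}$ is finite, forcing $\iota^*>0$; then $(\star)$ gives $2\delta^*-\delta_0>0$ (and also $\delta^*>0$) and $\iota_0\ge \delta_0\iota^*/(2\delta^*-\delta_0)$. Hence, setting $x:=\delta_0/\delta^*\in(0,2)$,
\[
\Psi_{\kappa,t}(\mu_t)=\frac{\delta_0^\kappa}{\iota_0}\;\le\;\frac{\delta_0^{\kappa-1}(2\delta^*-\delta_0)}{\iota^*}=\frac{(\delta^*)^\kappa}{\iota^*}\,x^{\kappa-1}(2-x).
\]
Finally I would maximize $f(x)=x^{\kappa-1}(2-x)$ on $(0,2)$: the unique critical point is $x=2(\kappa-1)/\kappa$, giving $f_{\max}=2^{\kappa-1}(\kappa-1)^{\kappa-1}/\kappa^{\kappa}$, so it suffices to check $2(\kappa-1)^{\kappa-1}\le\kappa^{\kappa}$ for $\kappa\ge2$; this holds since $\kappa^{\kappa}/(\kappa-1)^{\kappa-1}=\kappa\,(1+\tfrac1{\kappa-1})^{\kappa-1}\ge 2\kappa\ge 4$, using that $(1+1/n)^n$ is increasing and equals $2$ at $n=1$. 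Therefore $f(x)\le 2^{\kappa-2}$ and $\Psi_{\kappa,t}(\mu_t)\le 2^{\kappa-2}(\delta^*)^\kappa/\iota^*=2^{\kappa-2}\min_\mu\Psi_{\kappa,t}(\mu)$.

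There is no deep obstacle here; the only points requiring care are (a) extracting the clean form $(\star)$ of the first-order condition and verifying all the implicit positivity/finiteness facts so that the $1/0$, $0/0$ conventions never produce an unhandled degenerate case, and (b) the bookkeeping in the final elementary inequality $2(\kappa-1)^{\kappa-1}\le\kappa^{\kappa}$, which is where the constant $2^{\kappa-2}$ (tight at $\kappa=2$) comes from.
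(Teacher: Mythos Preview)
The paper does not give its own proof of this lemma; it simply cites \citet[Lemma~21]{lattimore2020mirror}. Your argument is correct and is the natural one: extract the first-order optimality condition for $\Psi_{2,t}$ at $\mu_t$, apply it to a minimizer of $\Psi_{\kappa,t}$, and reduce to the one-variable maximization of $x^{\kappa-1}(2-x)$ on $(0,2)$.

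Two small cosmetic points. First, there is an arithmetic slip in your value of $f_{\max}$: since $2-x^*=2/\kappa$, one gets $f_{\max}=2^{\kappa}(\kappa-1)^{\kappa-1}/\kappa^{\kappa}$, not $2^{\kappa-1}(\kappa-1)^{\kappa-1}/\kappa^{\kappa}$. The correct sufficient condition is therefore $4(\kappa-1)^{\kappa-1}\le\kappa^{\kappa}$ rather than the weaker $2(\kappa-1)^{\kappa-1}\le\kappa^{\kappa}$ you state; fortunately you actually prove $\kappa^{\kappa}/(\kappa-1)^{\kappa-1}\ge 4$, which is exactly what is needed, so the conclusion is unaffected. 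Second, ``$\min_\mu\Psi_{\kappa,t}(\mu)<\infty$ forces $\iota^*>0$'' is not quite right under the convention $0/0=0$: the case $\delta^*=\iota^*=0$ would give $\Psi_{\kappa,t}(\mu^*)=0$. That case is in fact ruled out by your inequality $(\star)$ (it would give $-\iota_0\delta_0\ge 0$), so the argument closes; it is just worth routing the logic through $(\star)$ explicitly.
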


%
%

\subsection{IDS for Linear Partial Monitoring}\label{ss:ids-ball}
We now propose natural choices for the gap estimate and the information gain functions for linear partial monitoring games $\gG = (\aA, (\phi_a)_{a \in \aA}, (M_a)_{a \in \aA}, \Theta)$. The definitions and analysis follow the ideas of \citet{kirschner2020pm}, with a few innovations to account for constrained parameter sets $\Theta$. We note that parameter constraints are required to recover optimal bounds in the finite partial monitoring setting. The complete approach is summarized in \cref{alg:ids-pm}. Various improvements and extensions are discussed later in \cref{sec:extensions}.
\begin{algorithm2e}[t]
	\KwIn{Action set $\aA$, parameter set $\Theta$, feature maps $\phi_a$, feedback maps $M_a$, basis $W$, regularizer $\lambda > 0$, prior estimate $\theta_0$, norm bound $B > 0$, noise variance $\rho^2$.}
	\caption{IDS for Linear Partial Monitoring} \label{alg:ids-pm}
	\For{$t=1,2,3, \dots, n$}{
		$\htheta_t \gets \argmin_{\theta \in \Theta} \sum_{s=1}^{t-1} \|M_{a_s}  \theta - y_s\|^2 + \lambda \|\theta - \theta_0\|^2$ \tcp*{solve least-squares}
		$V_t \gets \sum_{s=1}^{t-1} M_{a_s}M_{a_s}^\T + \lambda \eye_d\,,\,\, W_t \gets W^\T V_t W$\;
		$\beta_{t,\delta}^{1/2} \gets \rho\sqrt{\log \det(W_t)  - \log \det(\lambda \eye_d)+ 2 \log(1/\delta)} + \sqrt{\lambda}B$\;
		$\eE_t \gets \{\theta \in \Theta : \|\theta - \hat \theta_t\|_{V_t}^2 \leq \beta_{t,1/t^2}\}$ \tcp*{confidence set}
		$\hat \Delta_{t}(a) \gets \max_{\theta \in \eE_t} \max_{b \in \aA} \ip{\phi_b - \phi_a, \theta}$\tcp*{gap estimates}
		$I_t(a) \gets \frac{1}{2}\log\det \left(\eye_m + M_a V_t^{-1} M_a^\T\right)$ \tcp*{information gain}
		$\mu_t \gets \argmin_{\mu \in \sP(\aA)} \dfrac{\hat \Delta_t(\mu)^2}{I_t(\mu)}$ \tcp*{IDS distribution}
		$a_t \sim \mu_t$\;
		Choose $a_t$, observe $y_t = \ip{M_{a_t}, \theta} + \epsilon_t$\;
	}
\end{algorithm2e}

\paragraph{Parameter Estimate} We assume prior knowledge of some $\theta_0 \in \Theta$ such that $\|\theta_0 - \theta^*\|_2 \leq B$, where both $\theta_0$ and $B$ are known to the learner (\cref{ass:bounded}). As a main tool for estimating $\theta^*$, we rely on regularized linear least-squares on $\Theta$,
\begin{align}
	\hat \theta_t = \argmin_{\theta \in \Theta} \sum_{s=1}^{t-1} \|M_{a_s}  \theta - y_s\|^2 + \lambda \|\theta - \theta_0\|^2 \,, \label{eq:ls}
\end{align}
where $\lambda \in \RRp$ is a regularizer. For the case where $\Theta = \RR^d$, the usual closed-form is available, $\hat \theta_t  = V_t^{-1} (\lambda \theta_0 + \sum_{s=1}^{t-1} M_{a_s}^\T y_s)$ with inverse regularized covariance $V_t = \lambda I + \sum_{s=1}^{t-1} M_{a_s}^\T M_{a_s}$. In general form, $\hat \theta_t$ is the projection of the unconstrained least-square estimate onto $\Theta$ with respect to the $\|\cdot\|_{V_t}$ norm.

In settings where $\Theta$ is contained in a lower-dimensional affine subspace of $\RR^d$, or the dimension of the observation subspace $\opspan{M_c^\T}{c \in \aA}$ is significantly smaller than $d$, it can be useful to introduce a basis $W \in \RR^{d \times r}$ to ease computation and improve the dependence of the regret bounds on the dimension $d$. To be a valid basis, $W$ needs to satisfy $W^\T W = \eye_r$ and 
\begin{align}
	\forall a \in \aA,\,\, \theta, \nu \in \Theta\, \quad M_a(\theta - \nu) = M_a W W^\T (\theta - \nu)\,.\label{eq:W-def}
\end{align}
We remark that $W \in \RR^{d \times r}$ can be chosen to satisfy
\begin{align}
	r \leq \min\{\dim(\Theta),\, \dim(\im(M_c^\T) : c \in \aA)\} \leq \min\{d,\,m |\aA|\}\,.\label{eq:pm-r-def}
\end{align}
$W = \eye_d$ is a perfectly valid (and sometimes the only possible) choice.
The next lemma provides an elliptical confidence set for $\hat \theta_t$.
\begin{lemma}\label{lem:confidence} Let $\beta_{t,\delta}^{1/2} \eqdef \rho\sqrt{2 \log \tfrac{1}{\delta} + \log \det(W_t) - \log \det(\lambda \eye_r)} + \sqrt{\lambda}B$ be a confidence coefficient where $W_t = W^\T V_t W \in \RR^{r \times r}$.
  Then \[\PP\big[\forall t \geq 1,\, \theta^* \in \eE_{t, \delta} \eqdef \{\theta \in \Theta : \|\theta - \htheta_t\|_{V_t}^2 \leq \beta_{t,\delta} \}\big] \geq 1 - \delta\,.\]
\end{lemma}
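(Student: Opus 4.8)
The plan is to establish the self-normalized concentration bound in the reduced coordinate system given by the basis $W$, and then lift it back to the full space $\RR^d$. First I would observe that the defining property \eqref{eq:W-def} of $W$ means that all the observation maps factor through the column span of $W$: writing $P = WW^\T$ for the orthogonal projection onto that subspace, we have $M_a(\theta - \nu) = M_a P(\theta-\nu)$ for all $a \in \aA$ and $\theta,\nu \in \Theta$. In particular, if we set $\vartheta = W^\T\theta$ for $\theta \in \Theta$ and $\tilde M_a = M_a W \in \RR^{m\times r}$, then the observations $y_s = M_{a_s}\theta^* + \epsilon_s$ depend on $\theta^*$ only through $\vartheta^* = W^\T\theta^*$ via $y_s = \tilde M_{a_s}\vartheta^* + \epsilon_s$ (up to a fixed offset that cancels in differences — one must be slightly careful here that the affine hull of $\Theta$ passes through a point fixed by $P$, or absorb the orthogonal complement component into $\theta_0$; this is the one place where \eqref{eq:W-def} being stated for differences rather than for $\theta$ itself requires attention).

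Next I would apply the standard self-normalized tail inequality for vector-valued martingales (the multivariate version of the Abbasi-Yadkori–Pál–Szepesvári bound, as in Lattimore and Szepesvári's book) in the $r$-dimensional space. Concretely, with $W_t = \sum_{s=1}^{t-1}\tilde M_{a_s}^\T \tilde M_{a_s} + \lambda \eye_r = W^\T V_t W$ and the martingale $S_t = \sum_{s=1}^{t-1}\tilde M_{a_s}^\T \epsilon_s$, the $\rho$-sub-Gaussian noise assumption gives that with probability at least $1-\delta$, simultaneously for all $t\geq 1$,
\begin{align*}
  \|S_t\|_{W_t^{-1}}^2 \leq 2\rho^2\Big(\log\tfrac{1}{\delta} + \tfrac{1}{2}\log\det(W_t) - \tfrac{1}{2}\log\det(\lambda \eye_r)\Big)\,.
\end{align*}
Then the usual decomposition of the regularized least-squares error — the unconstrained estimator $\htheta_t^{\mathrm{uc}}$ in the reduced space satisfies $\vartheta^* - W^\T\htheta_t^{\mathrm{uc}} = W_t^{-1}(S_t + \lambda(W^\T\theta_0 - \vartheta^*))$ — combined with the triangle inequality in the $\|\cdot\|_{W_t}$ norm and the bound $\|W^\T\theta_0 - \vartheta^*\|_2 \leq \|\theta_0 - \theta^*\|_2 \leq B$ yields $\|\vartheta^* - W^\T\htheta_t^{\mathrm{uc}}\|_{W_t} \leq \rho\sqrt{2\log\tfrac{1}{\delta} + \log\det(W_t) - \log\det(\lambda\eye_r)} + \sqrt{\lambda}B = \beta_{t,\delta}^{1/2}$.

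Finally I would translate this back: because the least-squares objective in \eqref{eq:ls} only depends on $\theta$ through $M_{a_s}\theta$ and $\|\theta-\theta_0\|$, and because $W_t$ is the Gram matrix in exactly the relevant directions, the constrained estimator $\htheta_t$ satisfies $\|\htheta_t - \theta\|_{V_t} = \|W^\T\htheta_t - W^\T\theta\|_{W_t}$ for any $\theta$ whose projection onto the $W$-complement matches that of $\htheta_t$ (and $\theta^*$ can be taken to be such a point, or the discrepancy is controlled by $\lambda B^2$). A convexity/projection argument — $\htheta_t$ is the $\|\cdot\|_{V_t}$-projection of $\htheta_t^{\mathrm{uc}}$ onto the convex set $\Theta$, and $\theta^* \in \Theta$, so $\|\htheta_t - \theta^*\|_{V_t} \leq \|\htheta_t^{\mathrm{uc}} - \theta^*\|_{V_t}$ — then transfers the bound to the constrained estimate, giving $\theta^* \in \eE_{t,\delta}$ on the good event. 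I expect the main obstacle to be bookkeeping the reduction to the $W$-subspace cleanly: making sure the offset $\theta_0$ and the affine structure of $\Theta$ interact correctly with \eqref{eq:W-def}, and verifying that the projection step does not lose the factor-of-two in the norm identity. The concentration inequality itself is entirely standard once the problem is phrased in the reduced $r$ dimensions.
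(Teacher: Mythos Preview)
Your proposal is correct and follows essentially the same route as the paper's proof: reduce to the $r$-dimensional subspace via $W$, apply the self-normalized martingale concentration of Abbasi-Yadkori et al.\ there, use the closed-form decomposition of the unconstrained least-squares error together with the triangle inequality to absorb the $\sqrt{\lambda}B$ bias term, and finally invoke the $\|\cdot\|_{V_t}$-projection inequality to pass from the unconstrained estimate to the constrained $\htheta_t$. The paper handles the affine-offset issue you flag by explicitly parametrizing the unconstrained minimizer as $\hat\vartheta_t = (\eye_d - WW^\T)\theta_0 + Ww$, which makes the reduction to $W_t$ transparent; beyond that bookkeeping detail the two arguments are the same.
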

The proof generalizes the standard ellipsoidal confidence set \cite[cf.][]{abbasi2011improved} and is deferred to \cref{proof:lem-confidence}.

In the following, we set $\beta_t = \beta_{t,1/t^2}$ and $\eE_t = \eE_{t,1/t^2}$, which allows us to derive bounds on the expected regret. It is also possible to fix the confidence level $\delta$ to obtain high-probability bounds, or tune the confidence coefficient empirically.

%
\paragraph{Gap Estimates} The gap estimates are defined as conservative estimates of the true gaps:
\begin{align}
	\hat \Delta_t(a) = \max_{\theta \in \eE_t,\, b \in \aA} \ip{\phi_b - \phi_a, \theta}\,.\label{eq:gap-def}
\end{align}
When $\Theta = \RR^d$, the maximum over $\theta$ can be computed in closed-form: $\hat \Delta_t(a) = \max_{b \in \aA} \ip{\phi_b - \phi_a, \hat \theta_t} + \beta_{t}^{1/2} \|\phi_b - \phi_a\|_{V_t^{-1}}$. Note that $\hat \Delta_t(a)$ is chosen as a high-probability upper bound on the true gap $\Delta(a)$, which allows us to control the error term in \cref{eq:ids-2}. Specifically, denoting $\Delta_{\max} = \max_{a \in \aA} \Delta(a)$, we get
\begin{align}
	\sum_{t=1}^n \EE[\Delta(a_t) - \hat \Delta_t(a_t)] \leq \sum_{t=1}^n \Delta(a_t) \PP[\theta^* \notin \eE_{t,1/t^2}] \leq \sum_{t=1}^n \Delta_{\max} t^{-2} \leq \oO(\Delta_{\max})\,. \label{eq:error}
\end{align}

\paragraph{Information Gain} The next step is to choose an information gain function $I_t(a)$.  We define the information gain as the increase of the log-determinant, given by 
\begin{align}
	I_t(a) &=  \sdfrac{1}{2} \log \det (W_t + W^\top M_a^\top M_a W)- \sdfrac{1}{2} \log \det(W_t) \nonumber\\
	&= \sdfrac{1}{2} \log \det\big(\eye_m + (M_a W) W_t^{-1} (M_a W)^\T\big)\,.\label{eq:info}
\end{align}
The log-determinant of the covariance matrix is a common progress measure in linear experimental design that captures the log-volume of the confidence ellipsoid (D-optimal design).
This choice was primarily analyzed in the frequentist IDS framework by \citet{kirschner2020pm}, with the difference that here we introduced the basis $W$. The definition further has a natural interpretation in the Bayesian setting \citep{russo2014ids}. Assume for a moment that inference is done with Gaussian prior $\vartheta \sim \nN(W^\T \theta_0, \lambda^{-1} \eye_r)$ and an observation likelihood $y_t \sim \nN(M_{a_t}(W\vartheta + (\eye_d - WW^\T)\theta_0), \eye_m)$. The posterior distribution corresponds to the least-squares estimate, $\nN(W^\T \hat \theta_t, W_t^{-1})$ and the entropy of the Gaussian posterior distribution on the subspace defined by $W$ is $\HH(\theta) = \frac{1}{2} \log( (2\pi e)^d \det(W_t^{-1}))$. Therefore \cref{eq:info} corresponds to the entropy reduction when choosing $a_t = a$ and observing $y_t = M_a \theta + \epsilon_t$, 
known as the \emph{mutual information}, 
\begin{align*}
	\II_t(\theta; y_t|a_t=a) = \sdfrac{1}{2} \log \det (W_{t+1})- \sdfrac{1}{2} \log \det(W_t) = I_t(a)\,.
\end{align*}
In light of \cref{thm:ids-regret-general},
an important quantity in our analysis is the total information gain $\gamma_n = \sum_{t=1}^n I_t(a_t)$. The next lemma is a standard result closely related to the \emph{elliptical potential lemma}. It provides a worst-case bound on $\gamma_n$ that is independent of the sequence of actions. 
\begin{lemma}[Total Information Gain]\label{lem:total-information}
	The total information gain $\gamma_n = \sum_{t=1}^n I_t(a_t)$ is bounded as follows,
	\begin{align*}
		\gamma_n = \frac{1}{2} \log \det (W_n) - \frac{1}{2}  \log \det (\lambda \eye_r) \leq \frac{r}{2} \log\left(1 + \frac{nL}{\lambda r}\right)\,.
	\end{align*}
\end{lemma}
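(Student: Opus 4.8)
The plan is to establish the telescoping identity first and then bound the resulting log-determinant ratio. For the telescoping step, observe that by definition $I_t(a_t) = \tfrac12\log\det(W_{t+1}) - \tfrac12\log\det(W_t)$, where $W_{t+1} = W^\T V_{t+1} W$ and $V_{t+1} = V_t + M_{a_t}M_{a_t}^\T$; here I use the first line of \eqref{eq:info} together with \eqref{eq:W-def}, which guarantees that the observation increment $M_{a_t}M_{a_t}^\T$ acts on $\Theta$ the same way as $W W^\T M_{a_t}^\T M_{a_t} W W^\T$, so that $W^\T(V_t + M_{a_t}M_{a_t}^\T)W = W_t + W^\T M_{a_t}^\T M_{a_t} W$. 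Summing over $t=1,\dots,n$ and using $W_1 = W^\T(\lambda\eye_d)W = \lambda\eye_r$ (since $W^\T W = \eye_r$), the sum collapses to $\gamma_n = \tfrac12\log\det(W_n) - \tfrac12\log\det(\lambda\eye_r)$, which is the claimed equality.

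For the inequality, I would bound $\log\det(W_n)$ in terms of its trace. Since $W_n = \lambda\eye_r + \sum_{s=1}^{n-1} W^\T M_{a_s}^\T M_{a_s} W$ is a positive definite $r\times r$ matrix, the AM–GM inequality on its eigenvalues gives $\det(W_n) \leq (\tfrac{1}{r}\tr(W_n))^r$. Now $\tr(W_n) = \lambda r + \sum_{s=1}^{n-1}\tr(W^\T M_{a_s}^\T M_{a_s} W) = \lambda r + \sum_{s=1}^{n-1}\|M_{a_s} W\|_F^2 \leq \lambda r + \sum_{s=1}^{n-1}\|M_{a_s}\|^2 \leq \lambda r + (n-1)L^2$, using that $\|M_{a_s} W\|_F \le \|M_{a_s}\|_F$ is not quite what I want — more carefully, $\|M_{a_s}W\|_F^2 = \tr(W^\T M_{a_s}^\T M_{a_s}W) \le \|M_{a_s}^\T M_{a_s}\|\cdot\tr(W^\T W) = \|M_{a_s}\|^2 r \le L^2 r$ via $\tr(AB) \le \|A\|\tr(B)$ for PSD $B$. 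This yields $\tr(W_n) \le \lambda r + n L^2 r$, hence $\det(W_n) \le (\lambda + nL^2)^r$ and $\tfrac12\log\det(W_n) - \tfrac12\log\det(\lambda\eye_r) \le \tfrac{r}{2}\log\frac{\lambda + nL^2}{\lambda}$. To match the stated bound $\tfrac{r}{2}\log(1 + \tfrac{nL}{\lambda r})$ I need to be slightly more careful with the normalization of $L$ — presumably the intended scaling uses $\|M_a\|^2 \le L/r$ or a convention where $\sum_s \|M_{a_s}W\|_F^2 \le nL$, so the telescoped trace is $\tr(W_n) \le \lambda r + nL$ and AM–GM gives exactly $\tfrac{r}{2}\log(1 + \tfrac{nL}{\lambda r})$.

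The main obstacle I anticipate is pinning down exactly which normalization of $\|M_a\|$ and which trace bound the paper intends, so that the constants in the final $\tfrac{r}{2}\log(1+\tfrac{nL}{\lambda r})$ come out precisely as stated rather than off by a factor of $r$ or $L$ versus $L^2$; the telescoping identity and the AM–GM step are both routine once \eqref{eq:W-def} is invoked correctly to handle the basis change inside the determinant.
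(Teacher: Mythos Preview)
Your approach is correct and is exactly the standard argument: the paper does not give its own proof but simply cites \citep[Lemma 19.4]{lattimore2019bandit}, which proceeds by the same telescoping identity followed by the AM--GM (trace) bound on the eigenvalues of $W_n$. Your concern about the constant is well-founded: with the paper's stated convention $\|M_a\|_2 \le L$ one naturally obtains $\tfrac{r}{2}\log(1+nL^2/\lambda)$ (or a variant with an extra factor depending on $\min(m,r)$ from the Frobenius norm), and indeed the paper is internally inconsistent here---compare the step $\|M_cW\|_{W_0^{-1}}^2 \le L\lambda^{-1}$ in the proof of \cref{thm:regret-global}, which also tacitly treats $L$ as a bound on $\|M_a\|^2$. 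This is a cosmetic discrepancy in the paper's normalization, not a gap in your argument.
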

For a proof, see, e.g.,~\citep[Lemma 19.4]{lattimore2019bandit}.
The lemma motivates the use of a lower-dimensional basis $W$ (where possible) as it makes the upper bound independent of the ambient dimension $d$. Further note that the lemma implies an upper bound on the confidence coefficient since $\beta_{t,\delta}^{1/2} \leq \beta_{n,\delta}^{1/2} = \rho(\gamma_n + 2\log \frac{1}{\delta})^{1/2} + \lambda^{1/2} B$.



\paragraph{Computation} Note that the least-squares estimate and inverse of the covariance matrix can be computed incrementally in $\oO(d^2)$ steps per round. Computing gap estimates requires $\oO(|\aA|^2 d^2)$ operations and computing the information gain can be done in $\oO(|\aA|d^2)$. Computing the IDS distribution requires $\oO(|\aA|^2)$, and an approximate distribution that minimizes the information ratio up to constant factor can be computed in linear time (see \cref{eq:approximate-IDS}). In \cref{ss:fast} we show how to improve the per-step computation complexity to $\oO(|\aA|d^2)$ by introducing a gap estimate that can be computed in $\oO(|\aA|d^2)$. By carefully computing quantities using the basis $W$, the computation complexity can be reduced to $\oO(|\aA| r^2)$ per round and $\oO(d)$ once at the beginning of the game.

\section{Regret Bounds}\label{sec:proofs}

How fast the learner can determine an optimal action in a linear partial monitoring game depends on the geometric structure of feedback and reward. Some terminology is necessary to state the results. The main distinction is between \emph{locally} and \emph{globally} observable games. Informally, in globally observable games, the learner has access to actions with which it can estimate $\ip{\phi_a - \phi_b, \theta^*}$ for all actions $a$ and $b$ that are Pareto optimal (defined formally below). The Pareto optimal actions have the property
that, for any $\theta^*$, one of them is always optimal.
However, acquiring sufficient information might incur a constant regret cost per round, and appropriately trading off exploration and exploitation leads to $\oO(n^{2/3})$ regret in the worst-case. In locally observable games, the cost payed for information is at most proportional to the statistical estimation error of the optimal action, in which case the learner can achieve $\oO(\sqrt{n})$ regret. Note that local observability is a stronger requirement than global observability. Any locally observable game is also globally observable.

\subsection{Local and Global Observability}

Following the standard terminology \citep{bartok2014partial}, the cell of $a \in \aA$ is defined as the set of parameters for which $a \in \aA$ is optimal,
\begin{align}
	\cC_a \eqdef \{\theta \in \Theta : \ip{\phi_a, \theta} = \max_{b \in \aA} \ip{\phi_b, \theta} \}\,.\label{eq:pm-cell-constrained}
\end{align}
Two actions $a, b \in \aA$ are called \emph{duplicates} if $\ip{\phi_a, \theta} = \ip{\phi_b, \theta}$ for all $\theta \in \Theta$. Note that duplicate actions can still differ on the feedback maps $M_a$ and $M_b$.
An action is called \emph{Pareto optimal} if $\dim(\cC_a) = \dim(\Theta)$. 
If $a$ is Pareto optimal, then the only actions that are optimal on the relative interior of $\cC_a$ are $a$ and its duplicates (which are also Pareto optimal).
The set of all Pareto optimal actions is $\pP$.  An action is called \emph{degenerate} if $0 \leq \dim(\cC_a) < \dim(\Theta)$, and \emph{dominated} if $\cC_a = \emptyset$.  Degenerate actions can be optimal but not uniquely so, whereas dominated actions are never optimal. We denote the linear span of parameter differences by $\vV = \laspan(\{\theta - \nu :  \theta,\nu \in \Theta\})$ and introduce the orthogonal projection $\PM : \RR^d \rightarrow \vV$. 

\paragraph{Global Observability}
A linear partial monitoring game is called \emph{globally observable} if 
\begin{align}
	\forall\, a,b \in \pP \,,\quad \PM (\phi_a - \phi_b) \in \opspan{\PM M_c^\T}{c \in \aA} \,. \label{eq:global-def}
\end{align}
Intuitively, the requirement is that the learner can estimate the difference in reward $\ip{\phi_a - \phi_b, \nu - \theta}$ for Pareto optimal actions $a,b \in \pP$ and parameters $\nu, \theta \in \Theta$ by combining the feedback from all actions. The projection onto $\vV$ appears naturally as the set of possible directions in which two parameters can differ. The worst-case {cost-to-signal} ratio is captured by the \emph{global alignment constant}:
\begin{align}
	\alpha \eqdef \max_{\nu \in \vV} \max_{a,b \in \pP} \min_{c \in \aA} \frac{\ip{\phi_a - \phi_b,\nu}^2}{\|M_c \nu\|^2}\,.\label{eq:global-alignment}
\end{align}
An immediate consequence is that $\alpha < \infty$ if and only if the condition in \cref{eq:global-def} is satisfied. 

\paragraph{Local Observability} The definition of \emph{local observability} strengthens the previous definition by requiring that the learner can estimate the reward difference 
between Pareto optimal actions that are plausibly optimal and that it can do so by playing actions with small regret.
Formally, given a set $\eE \subset \Theta$, let $\pP(\eE) =\cup_{\theta \in \eE} \{ a \in \pP :  \max_{b \in \aA} \ip{\phi_b - \phi_a, \theta} = 0\}$ 
be the set of Pareto optimal actions that are optimal for some $\nu \in \eE$. 
Later we will take $\eE$ to be the confidence set constructed by the algorithm in a given round. In this case $\pP(\eE)$ is the set of plausibly optimal actions in $\pP$.
Denote by  $\Delta(a|\theta) = \max_{b\in\aA} \ip{\phi_b - \phi_a, \theta}$ the gap of action $a \in \aA$ for parameter $\theta \in \Theta$. For any $\eta > 0$, the \emph{extended plausible Pareto} set is defined as follows:
\begin{align*}
	\bar \pP_\eta(\eE) = \{a \in \aA : \max_{\theta \in \eE} \Delta(a|\theta) \leq  \eta \cdot \max_{b \in \pP(\eE)} \max_{\theta \in \eE} \Delta(b|\theta) \} \,.
\end{align*}\todoj{new definition}
Actions in $\bar \pP(\eE)$ may be dominated or degenerate in general. The point is that for any action $a \in \bar \pP(\eE)$, there exists a Pareto optimal action $ b \in \pP(\eE)$ with larger regret under some plausible parameter (up to a constant factor). 
%
The local alignment constant for $\eE$ is
\begin{align}
	\alpha_\eta(\eE) \eqdef \max_{\nu \in \vV} \max_{a,b \in \pP(\eE)} \min_{c \in \bar \pP_\eta(\eE)} \frac{\ip{\phi_a - \phi_b,\nu}^2}{\|M_c \nu\|^2}\,.\label{eq:local-alignment}
\end{align}
We let $\alpha(\eE) \eqdef \inf_{\eta > 0} \eta^2 \alpha_\eta(\eE)$. A game is called \emph{locally observable}\todoj{new definition} if  
\begin{align}
	\sup_{\eE \subset \Theta} \alpha(\eE) < \infty\,.\label{eq:local-def}
\end{align}
The next lemma is helpful to to bound the alignment constant 
\citep[c.f.~Lemma 13]{kirschner2020pm}.
\begin{lemma} \label{lem:alignment-bounds}
	Let $a,b\in \aA$ with $\phi_a \neq \phi_b$, and $\bB\subset \aA$ be a subset of actions such that 
	\begin{align*}
		\quad \PM (\phi_a - \phi_b) \in \opspan{\PM M_c^\T}{c \in \bB}\,.
	\end{align*}
	Then there exist weights $w_{ab}^c \in \RR^m$ for each $c \in \bB$, such that $\ip{\phi_a - \phi_b, \nu} = \sum_{c \in \bB} \ip{ M_c^\T w_{ab}^c, \nu}$ for all $\nu \in \vV$. Further, for any such weights,
	\begin{align*}
		\max_{\nu \in \vV} \min_{c \in \bB} \frac{\ip{\phi_a - \phi_b, v}^2}{\|M_c^\T v\|^2} \leq \bigg(\sum_{c \in \bB} \|w_{ab}^c\|\bigg)^2 \,.
	\end{align*}

\end{lemma}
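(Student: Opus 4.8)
The plan is to prove the two assertions separately: the existence of the weights is a direct unwinding of the hypothesis, while the norm bound reduces to a single Cauchy--Schwarz estimate once one observes that the inner minimization over $c$ is trivial.

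For the first assertion, the hypothesis $\PM(\phi_a - \phi_b) \in \opspan{\PM M_c^\T}{c \in \bB}$ says exactly that $\PM(\phi_a - \phi_b)$ lies in the linear span of the images $\im(\PM M_c^\T)$, so there exist $w_{ab}^c \in \RR^m$ with $\PM(\phi_a - \phi_b) = \sum_{c \in \bB} \PM M_c^\T w_{ab}^c$. I would then pair both sides with an arbitrary $\nu \in \vV$, using that $\PM$ is the orthogonal projection onto $\vV$ and is therefore self-adjoint with $\PM \nu = \nu$. Moving $\PM$ from the left factor onto $\nu$ in each term gives
\[
  \ip{\phi_a - \phi_b, \nu} = \ip{\PM(\phi_a - \phi_b), \nu} = \sum_{c \in \bB} \ip{M_c^\T w_{ab}^c, \nu}\,,
\]
which is the claimed identity on all of $\vV$.

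For the bound, the key observation---and the point my earlier attempt got backwards---is that the numerator $\ip{\phi_a - \phi_b, \nu}^2$ is independent of $c$, so minimizing the ratio over $c \in \bB$ is the same as dividing by the \emph{largest} denominator, $\min_{c \in \bB} \ip{\phi_a - \phi_b, \nu}^2 / \|M_c \nu\|^2 = \ip{\phi_a - \phi_b, \nu}^2 / \max_{c \in \bB}\|M_c \nu\|^2$. I would then rewrite each term of the decomposition as $\ip{M_c^\T w_{ab}^c, \nu} = \ip{w_{ab}^c, M_c \nu}$ and apply Cauchy--Schwarz termwise,
\[
  |\ip{\phi_a - \phi_b, \nu}| \leq \sum_{c \in \bB} \|w_{ab}^c\|\,\|M_c \nu\| \leq \Big(\max_{c \in \bB} \|M_c \nu\|\Big) \sum_{c \in \bB} \|w_{ab}^c\|\,.
\]
Squaring this and dividing by $\max_{c}\|M_c \nu\|^2$ bounds the inner minimum by $(\sum_{c} \|w_{ab}^c\|)^2$ for every $\nu$, and taking the maximum over $\nu \in \vV$ finishes the proof. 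Since this argument uses only the decomposition identity, it applies to any admissible choice of weights, as the statement requires.

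The one genuine subtlety, and hence the main (if minor) obstacle, is the degenerate case $\max_{c \in \bB}\|M_c \nu\| = 0$, which must be treated under the conventions $0/0 = 0$ and $1/0 = \infty$. If $M_c \nu = 0$ for all $c \in \bB$, the decomposition forces $\ip{\phi_a - \phi_b, \nu} = 0$, so every ratio is $0/0 = 0$ and the inner minimum is $0$, well within the asserted bound. When $\max_c\|M_c \nu\| > 0$, any $c$ with $\|M_c \nu\| = 0$ contributes $+\infty$ to the ratio and is irrelevant to the minimum, so the displayed identity for the minimum is valid and the Cauchy--Schwarz estimate applies as written.
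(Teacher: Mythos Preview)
Your proof is correct and follows essentially the same route as the paper: establish the decomposition from the span hypothesis, rewrite each summand as $\ip{w_{ab}^c, M_c\nu}$, apply Cauchy--Schwarz termwise, and bound each $\|M_c\nu\|$ by the maximum over $c\in\bB$. You are simply more explicit than the paper about the role of the projection $\PM$ in the existence argument and about the degenerate case $\max_c\|M_c\nu\|=0$, which the paper leaves implicit.
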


\begin{proof}
	The existence of the weights $w_{ab}^c \in \RR^m$ is immediate by assumption. Therefore, we can write
	\begin{align*}
		\ip{\phi_a - \phi_b, \nu}^2 = \ip{\textstyle \sum_{c \in \bB} M_c^\T w_{ab}^c, \nu}^2 = \Big(\sum_{c \in \bB}\ip{w_{ab}^c, M_c \nu}\Big)^2\,,
	\end{align*}
	An application of the Cauchy-Schwarz inequality shows:
	\begin{align*}
		\frac{\ip{\phi_a - \phi_b, \nu}^2}{\max_{c \in \bB} \|M_c \nu\|^2} 
		\leq \frac{\left(\sum_{c \in \bB} \|w_{ab}^c\| \|M_c \nu\|\right)^2}{\max_{c \in \bB}\|M_c \nu\|^2} \leq \bigg(\sum_{c \in \bB} \|w_{ab}^c\|\bigg)^2\,.
	\end{align*}
\end{proof}
The lemma can be used to bound the alignment constant for various games that were introduced as examples. We refer to \cref{tab:examples} for an overview.
\begin{table*}[t]
	\scriptsize
	\renewcommand{\arraystretch}{1.2}
	\begin{center}
		\begin{tabular}{|p{4.0cm}p{2.2cm}p{3.cm}p{4.4cm}|}
			\hline
			\textbf{Example} & \textbf{Local / Global} & \textbf{Alignment Constant} & \textbf{IDS Regret} $\R_n$ \\ \hline
			Linear Bandit (Ex.~\ref{ex:lb})        & Local  &$\alpha(\eE_t) \leq 4$ &  $\oO(\sqrt{n} d \log (n))$  \\ 	
			Dueling Bandit (Ex.~\ref{ex:dueling})        & Local  &$\alpha(\eE_t) \leq 4$ &  $\oO(\sqrt{n} d \log (n))$  \\ 
			Graph Dueling Bandit (Ex.~\ref{ex:graph-dueling})        & Global  &$\alpha(\eE_t) \leq 4 W_{\max}^2$ &  $\oO((n d \log (n)W_{\max})^{2/3})$  \\ 
			Non-Degenerate Finite PM        & Local  &$\alpha(\eE_t) \leq 4k^2m^2$ &  $\oO\left(k m^{3/2}n^{1/2}\,r \log(nr)\right)$  \\ 
			Finite PM        & Local  &$\alpha(\eE_t) \leq  4mdk^{d+2}$ & $\oO((n m d k^{d+2})^{1/2} r \log(rn)$  \\ 
			Finite PM       & Global  &$\alpha \leq mdk^{d+2}$ &   $ \oO\left( (mdk^{d+2})^{1/3} (r  n \log(rn))^{2/3}\right)$  \\ 
			\hline	
		\end{tabular} 
	\end{center}
	
	\caption{
	The regret bound follows from \cref{thm:regret-global,thm:regret-local}, and $\beta_n, \gamma_n \leq \oO(r \log (rn))$. \looseness=-1
	}\label{tab:examples}
\end{table*}

\paragraph{Linear Bandits} are locally observable. This follows from the observation that for any two actions $a,b$, we have $\phi_a - \phi_{b} \in \laspan\{\phi_a, \phi_b\}$. We can choose the estimation vector $w_{a,b}^a = -w_{a,b}^b = 1$ and $w_{a,b}^c = 0$ for all $c \notin \{a,b\}$. Therefore $\alpha(\eE_t) \leq \alpha_1(\eE_t) \leq \max_{a,b\in \aA} (|w_{a,b}^a| + |w_{a,b}^b|)^2 = 4$.
\paragraph{Dueling Bandits} are also locally observable. Recall that we defined dueling bandits on a ground set $\iI$ with features $\phi_a$ for all $a \in \iI$ by letting $\aA = \iI \times \iI$, $\phi_{a,b} = \phi_a + \phi_b$ and $M_{a,b} = \phi_a - \phi_b$. Note that dueling actions $(a,b)$ with $\phi_a \neq \phi_b$ cannot be Pareto optimal. Let $(a,a), (b,b) \in \aA$ be a pair of Pareto optimal dueling actions. A simple calculation reveals that $\phi_{a,b} \in \conv(\{\phi_{a,a,}, \phi_{b,b}\})$, and $\phi_{a,a} - \phi_{b,b} = 2 M_{a,b}$. It follows from \cref{lem:alignment-bounds} that $\alpha(\eE_t) \leq \alpha_1(\eE_t)\leq 4$.
\newcommand{\dist}{\text{dist}}
\paragraph{Graph Dueling Bandits} use the same reward features and feedback maps as dueling bandits, but restrict the action set to a subset $\aA \subset \iI \times \iI$. Define $\dist(a,b)$ as the shortest undirected path from $a$ to $b$ in the graph defined by $(\iI,\aA)$, or infinity if no such path exists. Let $W_{\max} = \max_{a,b \in \pP} \dist(a,b)$ be the maximum length of a shortest path between any two Pareto optimal actions. This game is globally observable if and only if $W_{\max} < \infty$. 

To see this, note that for any pair of Pareto optimal dueling actions $(a,a)$, $(b,b)$ we can find a sequence $c_0, c_1, \dots, c_l$ with $c_0 = a$ and $c_l = b$ and $l \leq W_{\max}$, such that $(c_i, c_{i+1}) \in \aA$ for all $i=0, \dots, l-1$. In particular, we can write $\phi_{a,a} - \phi_{b,b} = 2 \sum_{i=1}^{l-1} M_{c_i,c_{i+1}}$ and \cref{lem:alignment-bounds} implies a bound on the alignment constant, $\alpha \leq 2 W_{\max}$.
	
\paragraph{Finite Partial Monitoring}
Bounds for finite partial monitoring games are derived in the next lemma. 
A finite partial monitoring game is called non-degenerate if every action is either Pareto optimal or dominated
and there are no duplicate actions.

\newcommand{\Saspan}{\oplus_{c \in \aA} \Im (S_c^\T)}

\begin{lemma}\label{lem:alpha-global}
	Let $\eE \subset \Theta$ be convex.
	For finite partial monitoring games the following bounds on the alignment constant hold:
	\begin{enumerate}
		\item[(a)] For globally observable games $\alpha \leq m dk^{d+2}$; and
		\item[(b)] for locally observable games $\alpha(\eE) \leq 4mdk^{d+2}$; and
		\item[(c)] for non-degenerate locally observable games $\alpha(\eE) \leq 4 k^2 m^3$.
	\end{enumerate}
\end{lemma}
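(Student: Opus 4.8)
The plan is to deduce all three bounds from \cref{lem:alignment-bounds}: for Pareto optimal $a,b$ with $\phi_a\neq\phi_b$ and any set $\bB$ of actions through which $\PM(\phi_a-\phi_b)$ is observable, that lemma reduces the task to exhibiting weights $w_{ab}^c\in\RR^m$ ($c\in\bB$) with $\ip{\phi_a-\phi_b,\nu}=\sum_{c\in\bB}\ip{M_c^\T w_{ab}^c,\nu}$ for all $\nu\in\vV$, and to controlling $\big(\sum_{c\in\bB}\|w_{ab}^c\|\big)^2$. So there are two ingredients: \emph{(i)} identifying a witness set $\bB$ — this will be $\aA$ in the global case, and a low-regret subset of $\bar \pP_\eta(\eE)$ in the local cases — and \emph{(ii)} a norm bound on the weights that exploits the $\{0,1\}$ structure of the signal matrices $S_c$.

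For \emph{(ii)} the crucial point is integrality. Each $S_c\in\{0,1\}^{m\times d}$ has exactly one nonzero entry per column, so $S_c^\T\mathbf 1=\mathbf 1$; hence $\mathbf 1\in\im(S_c^\T)$, and since $\vV^\perp=\laspan\{\mathbf 1\}$ when $\Theta=\sP(\xX)$, the observability condition $\PM(\phi_a-\phi_b)\in\opspan{\PM S_c^\T}{c\in\bB}$ is equivalent to $\phi_a-\phi_b$ lying in the linear span of $\{S_c^\T e_\sigma : c\in\bB,\ \sigma\in[m]\}$. To get small weights, one writes $\sum_{c\in\bB}S_c^\T w^c=\phi_a-\phi_b$ as a linear system in the $\le m|\bB|$ scalar unknowns $w^c_\sigma$ and passes to a basic solution supported on a linearly independent subcollection of at most $d$ of the $\{0,1\}$ columns $S_c^\T e_\sigma$. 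The corresponding submatrix is a nonsingular integer matrix of order at most $d$, so its determinant has absolute value at least one; since $\phi_a-\phi_b\in[-1,1]^d$, Cramer's rule together with Hadamard's bound on $\{0,1\}$-subdeterminants gives $\sum_c\|w^c\|\le\mathrm{poly}(m,d)\,k^{d/2}$ for this basic solution. Plugging into \cref{lem:alignment-bounds} and tracking constants yields $\alpha\le m d k^{d+2}$, i.e.\ part~(a); here global observability \eqref{eq:global-def} is literally the hypothesis of \cref{lem:alignment-bounds} with $\bB=\aA$, for every pair $a,b\in\pP$ (pairs with $\phi_a=\phi_b$ contribute $0$ to $\alpha$).

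For the locally observable cases I would first reduce a general pair $a,b\in\pP(\eE)$ to a path $a=c_0,c_1,\dots,c_\ell=b$ of \emph{neighboring} Pareto optimal cells of length $\ell\le k$ that stay plausibly optimal, so that $\phi_a-\phi_b=\sum_i(\phi_{c_i}-\phi_{c_{i+1}})$ and the witness weights add over the path. Local observability guarantees each neighbor difference $\PM(\phi_{c_i}-\phi_{c_{i+1}})$ is observable through actions that are optimal somewhere on $\cC_{c_i}\cup\cC_{c_{i+1}}$; these actions have regret at most a constant multiple of $\max_{b\in\pP(\eE)}\max_{\theta\in\eE}\Delta(b|\theta)$ on $\eE$, hence lie in $\bar \pP_\eta(\eE)$ for a constant $\eta$, costing a factor $\eta^2\le 4$ in $\alpha(\eE)=\inf_\eta\eta^2\alpha_\eta(\eE)$. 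Applying the integrality estimate of \emph{(ii)} per neighbor step and summing the $\le k$ steps gives part~(b). For the non-degenerate case~(c), every action is Pareto optimal or dominated and there are no duplicates, so the cell arrangement is generic, neighboring cells share a facet, and $\phi_a-\phi_b$ for a neighbor pair can be written \emph{explicitly} in terms of the $S_c$ of the neighbor actions with weight norms only $O(m^{3/2})$ — no large determinants appear because the relevant subsystems have small, structured dimension. Summing over $\le k$ neighbor actions along a path of length $\le k$ gives $\alpha(\eE)\le 4k^2 m^3$.

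The main obstacle will be ingredient \emph{(i)} in the local cases: matching the classical fact that neighborhood actions suffice for local observability — usually phrased for the full simplex and for cells rather than for the $\eE$-indexed notions $\pP(\eE)$, $\bar \pP_\eta(\eE)$, $\alpha_\eta(\eE)$ used here — and in particular verifying that the witness actions really do land in $\bar \pP_\eta(\eE)$ for a constant $\eta$. The determinant estimates in \emph{(ii)} are routine but must be carried out carefully to avoid an extra polynomial factor; the conceptual content lies entirely in the reduction to neighboring cells and in the integrality argument.
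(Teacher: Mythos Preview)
Your proposal is correct and follows essentially the same route as the paper's proof: the paper also reduces everything to \cref{lem:alignment-bounds}, pulls the weight bounds from Proposition~37.18 of \citet{lattimore2019bandit} (namely $\|w_{ab}^c\|_\infty\le d^{1/2}k^{d/2}$ for the global case and $\|w_{ab}^c\|_\infty\le m$ with $c\in\{a,b\}$ for non-degenerate neighbours), and for~(c) uses exactly the neighbour-path telescoping you describe, while~(b) is dismissed in one line as ``identical''. One caution on constants: a generic Cramer--Hadamard estimate on a $d\times d$ $\{0,1\}$-submatrix naturally yields $d^{d/2}$ rather than $k^{d/2}$, so to recover the stated bounds you should either cite Proposition~37.18 directly or reproduce its specific argument rather than relying on the sketch in your ingredient~(ii).
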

The proof is given in \cref{proof:alpha-global}.

\subsection{Regret Bounds}

Using the notion of local and global observability, we can now state the main results. To interpret the results, note that $\gamma_n, \beta_n \leq \oO(r \log(1 + n))$ by \cref{lem:total-information}. For an overview on how the regret bounds behave on standard examples, see \cref{tab:examples}.
\begin{theorem}[Globally Observable Games]\label{thm:regret-global} Let $\gG$ be a globally observable game that satiesfies \cref{ass:bounded} with worst-case alignment $\alpha$ according to \cref{eq:global-alignment}. Then the regret of IDS (\cref{alg:ids-pm})  with $\lambda \geq L$ satisfies 
	\begin{align*}
		\R_n \leq n^{2/3} \left(54\alpha  \EE[\beta_n]\EE[\gamma_n (\Delta_{\max} + 4B)]\right)^{1/3} + \oO(\Delta_{\max})\,.
	\end{align*}
Consequently, $\R_n \leq \oO( \alpha^{1/3} r \log(1+n) n^{2/3})$.
\end{theorem}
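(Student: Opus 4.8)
The strategy is to instantiate the general regret bound of \cref{thm:ids-regret-general} with $\kappa = 3$, using the generalized information ratio $\Psi_{3,t}$. The main work is to exhibit, in each round $t$, a sampling distribution $\nu_t \in \sP(\aA)$ whose generalized information ratio $\Psi_{3,t}(\nu_t)$ is bounded by a quantity of order $\alpha \beta_n (\Delta_{\max} + B)$; \cref{lem:psi-kappa-bound} then transfers this bound to the actual IDS distribution $\mu_t$ up to a factor $2^{\kappa-2} = 2$. Concretely, on the high-probability event $\{\theta^* \in \eE_t\}$, let $a_t^* = a^*(\htheta_t)$ (or any action in $\pP(\eE_t)$ achieving the largest plausible gap) be a ``current guess'', and let $c_t$ be an action witnessing the global alignment constant for the pair $(a_t^*, a^*(\theta^*))$ restricted to the direction $\nu = \theta^* - \htheta_t \in \vV$. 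The candidate distribution is $\nu_t = (1-p)\dirac{a_t^*} + p\,\dirac{c_t}$ for a suitably tuned $p \in [0,1]$.

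**Key steps.** First I would bound the estimated gap: by definition of $\hDelta_t$ and the confidence set, $\hDelta_t(a) \le 2\beta_n^{1/2}$-type terms for any plausibly optimal $a$, and more usefully $\hDelta_t(\nu_t) \le (1-p)\hDelta_t(a_t^*) + p\,\Delta_{\max}'$ where $\Delta_{\max}' \le \Delta_{\max} + 4B$ accounts for the width of $\eE_t$ via \cref{ass:bounded} (the $4B$ comes from $\|\theta - \theta_0\|_2 \le B$ twice, giving diameter $\le 2B$, and a further factor for the gap over $\|\phi\|_2 \le 1$ — this is where the constant $54$ and the $\Delta_{\max} + 4B$ arise after optimizing in $p$). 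Second, I would lower bound the information gain of $c_t$: using $I_t(c_t) \ge \tfrac{1}{2}\log(1 + \|M_{c_t}(\theta^*-\htheta_t)\|^2 / \beta_t)$-style manipulations together with the elementary inequality $\log(1+x) \ge x/(1+x)$, and the global observability bound $\|M_{c_t}(\theta^* - \htheta_t)\|^2 \ge \ip{\phi_{a_t^*} - \phi_{a^*(\theta^*)}, \theta^* - \htheta_t}^2 / \alpha$, one relates $I_t(\nu_t) \gtrsim p\,\hDelta_t(a_t^*)^2 / (\alpha \beta_n)$ whenever the guessed gap is not already tiny. Third, assembling $\Psi_{3,t}(\nu_t) = \hDelta_t(\nu_t)^3 / I_t(\nu_t)$ and optimizing the free parameter $p$ yields $\Psi_{3,t}(\nu_t) \le O(\alpha \beta_n (\Delta_{\max}+B))$; one has to separately handle the easy case where $\hDelta_t(a_t^*)$ is so small that $a_t^*$ itself is nearly optimal and playing it deterministically suffices. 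Fourth, plug $\alpha_t = 2 \cdot O(\alpha\beta_n(\Delta_{\max}+4B))$ into \cref{thm:ids-regret-general} with $\kappa=3$, so $\bar\alpha_n$ is of the same order, use \cref{lem:total-information} for $\gamma_n$, absorb the estimation error via \cref{eq:error} into $\oO(\Delta_{\max})$, and simplify $\beta_n, \gamma_n \le \oO(r\log(1+n))$ to get the stated $\oO(\alpha^{1/3} r \log(1+n) n^{2/3})$.

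**Main obstacle.** The delicate part is the information-gain lower bound: one needs the witnessing action $c_t$ to have information gain that scales correctly with the \emph{current} uncertainty direction $\theta^* - \htheta_t$, not just with a fixed direction, and one must be careful that $V_t$ (hence $\beta_t$) can grow, so the naive bound $I_t(c_t) \ge \tfrac{1}{2}\|M_{c_t}\nu\|^2/\|V_t\|$ degrades. The standard fix (following \citet{kirschner2020pm}) is to note that either the information gain is already large — in which case $c_t$ is informative — or $\|M_{c_t}\nu\|^2$ is small relative to the smallest eigenvalue scale $\lambda$, which combined with the $\eE_t$ constraint $\|\theta^* - \htheta_t\|_{V_t}^2 \le \beta_t$ still forces the gap $\ip{\phi_{a_t^*} - \phi_{a^*(\theta^*)}, \theta^* - \htheta_t}$ to be controlled. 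Getting the constants to line up to exactly $54$ requires tracking the factor-$2$ from \cref{lem:psi-kappa-bound}, the cube in $\kappa=3$, and the optimization over $p$ carefully, but this is routine once the structural inequalities are in place. A secondary subtlety is ensuring $c_t \in \aA$ ranges only over actions that keep the regret term $\hDelta_t(c_t)$ bounded by $\Delta_{\max} + 4B$, which is automatic here since global observability allows $c_t$ to be \emph{any} action (not just Pareto optimal ones), and all gaps over $\eE_t$ are bounded by $\Delta_{\max} + 4B$.
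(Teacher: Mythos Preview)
Your overall strategy matches the paper's: invoke \cref{thm:ids-regret-general} with $\kappa=3$, bound the estimation error via \cref{eq:error}, and control $\Psi_{3,t}(\mu_t)$ via \cref{lem:psi-kappa-bound} by exhibiting a two-point mixture between a greedy action and an informative action. The place where your proposal goes wrong is the choice of \emph{direction} and \emph{pair} in the alignment step.

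You take the direction $\nu = \theta^* - \hat\theta_t$ and the pair $(a_t^*, a^*(\theta^*))$, and then use global observability to get $\|M_{c_t}\nu\|^2 \ge \ip{\phi_{a_t^*} - \phi_{a^*(\theta^*)}, \nu}^2/\alpha$. The quantity on the right is (up to sign) $\ip{\phi_{a^*(\theta^*)} - \phi_{a_t^*}, \theta^* - \hat\theta_t} \ge \Delta(a_t^*)$, the \emph{true} gap of $a_t^*$. But the numerator of $\Psi_{3,t}$ involves the \emph{estimated} gap $\hat\Delta_t(a_t^*) = \max_{\theta \in \eE_t, b \in \aA} \ip{\phi_b - \phi_{a_t^*}, \theta}$, which can be arbitrarily larger than $\Delta(a_t^*)$---in particular, when $a_t^*$ is already optimal the true gap is zero while $\hat\Delta_t(a_t^*)$ is typically of order $\beta_t^{1/2}$. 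So your chain yields $I_t(c_t) \gtrsim \Delta(a_t^*)^2/(\alpha\beta_t)$, not the $\hat\Delta_t(a_t^*)^2/(\alpha\beta_t)$ you need, and the ratio bound does not close.

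The fix, and what the paper does, is to work with the \emph{maximizer} of the gap estimate rather than $\theta^*$. Since $\hat a_t$ is greedy, $\hat\Delta_t(\hat a_t) \le \max_{\theta \in \eE_t}\max_{b \in \pP}\ip{\phi_b - \phi_{\hat a_t}, \theta - \hat\theta_t}$; applying the alignment constant at the maximizing $\theta$ and $b$ gives $\hat\Delta_t(\hat a_t)^2 \le \alpha \max_{\theta \in \eE_t}\max_{c \in \aA}\|M_c(\theta - \hat\theta_t)\|^2$. The paper then decouples via Cauchy--Schwarz, $\|M_c(\theta-\hat\theta_t)\|^2 \le \|W^\top(\theta-\hat\theta_t)\|_{W_t}^2\|M_cW\|_{W_t^{-1}}^2 \le \beta_t\|M_cW\|_{W_t^{-1}}^2$, and uses $\lambda \ge L$ (so $\|M_cW\|_{W_t^{-1}}^2 \le 1$) together with $x \le 2\log(1+x)$ on $[0,1]$ to get $\|M_cW\|_{W_t^{-1}}^2 \le 4I_t(c)$. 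Taking $c_t = \argmax_{c \in \aA} I_t(c)$ then gives $\hat\Delta_t(\hat a_t)^2 \le 4\alpha\beta_t I_t(c_t)$ directly. This sidesteps your ``main obstacle'' entirely: there is no need to tie $c_t$ to a specific uncertainty direction, and the worry about $V_t$ growing is handled once and for all by the assumption $\lambda \ge L$. The mixture $(1-p)\dirac{\hat a_t} + p\,\dirac{c_t}$ with $p = \hat\Delta_t(\hat a_t)/(2\hat\Delta_t(c_t))$ and $\hat\Delta_t(c_t) \le \Delta_{\max}+4B$ then gives the constant $54$.
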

\begin{proof}
	Combining the general IDS regret bound in \cref{thm:ids-regret-general} with $\kappa=3$ and using \cref{eq:error} to bound the estimation error, we find
	\begin{align*}
		\R_n \leq n^{2/3}\EE[\gamma_n]^{1/3} \left(\frac{1}{n}\EE[\sum_{t=1}^n \Psi_{3,t}] \right)^{1/3} + \oO(\Delta_{\max})\,.
	\end{align*}
	The next step is to bound the information ratio. The main idea is to optimize the trade-off between playing a greedy action $\hat a_t =  \argmax_{a \in \aA} \ip{\phi_a, \hat \theta_t}$ and the action that maximizes information gain. As it turns out, this is sufficient to bound the information ratio in globally observable games. We emphasize that this particular choice of actions appears only in the analysis. The actual IDS distribution may be supported on actions that achieve an even smaller information ratio.
	
	Note that we may always choose $\hat a_t \in \pP$ as a Pareto optimal action. Hence
	\begin{align*}
		\hDelta_t(\hat a_t) =  \max_{\theta \in \eE_t} \max_{b \in \aA} \ip{\phi_b - \phi_{\hat a_t}, \theta} =  \max_{\theta \in \eE_t} \max_{b \in \pP} \ip{\phi_b - \phi_{\hat a_t}, \theta} \leq \max_{\theta \in \eE_t} \max_{b \in \pP} \ip{\phi_b - \phi_{\hat a_t}, \theta - \htheta_t}\,.
	\end{align*}
	The equality uses that we can choose the maximizer in $\pP$. Using the definition of the global alignment constant $\alpha$ in \cref{eq:global-alignment}, we find
	\begin{align}
		\hat \Delta_t(\hat a_t)^2 &\leq \max_{\theta \in \eE_t} \max_{b \in \pP} \ip{\phi_b - \phi_{\hat a_t}, \theta - \htheta_t}^2 \leq \alpha \max_{\theta \in \eE_t} \max_{c \in \aA} \|M_c (\theta - \htheta_t) \|^2 \,.\label{eq:global-1}
	\end{align}
	Using \cref{eq:W-def} and Cauchy-Schwarz, we can further bound for all $c \in \aA$,
	\begin{align}
		\max_{\theta \in \eE_t}	\|M_c (\theta - \htheta_t) \|^2  &= \max_{\theta \in \eE_t}\|M_c W W^\T (\theta - \htheta_t) \|^2\nonumber\\
		&\leq \max_{\theta \in \eE_t} \|W^\T (\theta - \hat \theta_t)\|_{W_t}^2 \|M_c W \|_{W_t^{-1}}^2  \leq\beta_t \|M_c W \|_{W_t^{-1}}^2 \label{eq:I-bound}
	\end{align}
	The last inequality follows from the definition of the confidence set $\eE_t$. Moreover, note that $\|M_c W \|_{W_t^{-1}}^2 \leq \|M_c W \|_{W_0^{-1}}^2 \leq L \lambda^{-1} \leq 1$ by our assumption on $\lambda$. Hence, using further that $x \leq 2 \log(1 + x)$ for $x \in [0,1]$,
	\begin{align}
	 \|M_c W \|_{W_t^{-1}}^2 &= \lambda_{\max} (M_c W W_t^{-1} (M_c W)^\T) \nonumber\\
		&\leq 2\log(1 + \lambda_{\max} (M_c W W_t^{-1} (M_c W)^\T)) 
		\nonumber\\
		&\leq 2 \log \det (\eye_m + M_c W  W_t^{-1} (M_cW)^\T) = 4 I_t(c) \label{eq:norm-logdet-bound} 
	\end{align}
	Taking the previous two displays together, we get $\max_{\theta \in \eE_t}	\|M_c (\theta - \htheta_t) \|^2  \leq 4 \beta_t I_t(c)$. 
Combined with \cref{eq:global-1}, we get
\begin{align*}
	\hat \Delta_t(\hat a_t)^2 &\leq 4 \alpha \beta_t \max_{c \in \aA} I_t(c)\,.
\end{align*}
To bound the generalized information ratio $\Psi_{3,t}$  let $c_t = \argmax_{a \in \aA} I_t(a)$. Using \cref{lem:psi-kappa-bound}, we find
\begin{align}
\nonumber	\Psi_{3,t}(\mu_t) \leq 2 \min_{\mu \in \sP(\aA)} \Psi_{3,t}(\mu) &\stackrel{(i)}{\leq} 2 \min_{p \in [0,1]} \frac{\big((1-p) \hat \Delta_t(\hat a_t) + p \hat \Delta_t(c_t)\big)^3}{p I_t(c_t)}\\
\nonumber	&\stackrel{(ii)}{\leq} \frac{27 \hat \Delta_t(\hat a_t)^2 \hat \Delta_t(c_t)}{2 I_t(c_t)}\\
	&\stackrel{(iii)}{\leq} 54  \alpha \beta_t( \Delta_{\max}+ 4B) \label{eq:Psi_3 bound}
\end{align}
Step $(i)$ uses that $I_t(\hat a_t) \geq 0$, $(ii)$ follows with $p= \frac{\hat \Delta_t(\hat a_t)}{2 \hat \Delta_t(c_t)}$ and $(iii)$ the inequality in the previous display and a direct consequence of the boundedness \cref{ass:bounded}, $\hat \Delta_t(c_t) \leq \Delta_{\max} + \max_{a,b} \max_{\theta \in \eE_t} \ip{\theta - \theta^*, \phi_a - \phi_b} \leq \Delta_{\max} + 4B$ . 

The final bound follows using \cref{lem:total-information} to bound $\beta_n$ and $\gamma_n$.
\end{proof}

The locally observable case is summarized in the next theorem.
\begin{theorem}\label{thm:regret-local}
	 Let $\gG$ be a locally observable partial monitoring game that satisfies \cref{ass:bounded}. Then the regret of IDS (\cref{alg:ids-pm}) with regularizer $\lambda \geq L$ satisfies
	\begin{align*}
		\R_n \leq \sqrt{8 \EE[ \bar \alpha_n\beta_n ] \EE[\gamma_n] n} + \oO(\Delta_{\max})\,,
	\end{align*}
	where $\bar \alpha_n = \frac{1}{n}\sum_{t=1}^n\alpha(\eE_t)$ is the average realized local alignment constant. In particular, $\R_n \leq \oO(r \sqrt{n \EE[\bar \alpha_n]} \log(1 +n))$.
\end{theorem}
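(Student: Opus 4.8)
The plan is to combine the general IDS regret bound in \cref{thm:ids-regret-general} with $\kappa=2$ with a bound on the information ratio $\Psi_{2,t}(\mu_t)$ in terms of the realized local alignment constant $\alpha(\eE_t)$. Concretely, \cref{thm:ids-regret-general} with $\kappa=2$ and the estimation-error bound \cref{eq:error} give
\begin{align*}
	\R_n \leq \sqrt{\EE[\bar \alpha_n]\EE[\gamma_n]\,n} + \oO(\Delta_{\max})\,,
\end{align*}
provided we can show $\Psi_{2,t}(\mu_t) \leq \alpha_t$ almost surely for the $\fF_t$-predictable sequence $\alpha_t = 8\,\alpha(\eE_t)\beta_t$ (matching the claimed constant once $\beta_t$ is absorbed). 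So the real content is the pointwise information-ratio bound: on the event $\theta^* \in \eE_t$ (which holds with the probability accounted for by \cref{eq:error}), one has $\Psi_{2,t}(\mu_t) \leq 8\,\alpha(\eE_t)\beta_t$.

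To prove the information-ratio bound I would mimic the globally observable argument but localize it to the plausibly optimal actions. First, since $\theta^* \in \eE_t$, the greedy action $\hat a_t = \argmax_a \ip{\phi_a,\htheta_t}$ is Pareto optimal and plausibly optimal, i.e.\ $\hat a_t \in \pP(\eE_t)$, and $\hDelta_t(\hat a_t) \leq \max_{\theta \in \eE_t}\max_{b \in \pP(\eE_t)}\ip{\phi_b - \phi_{\hat a_t},\theta - \htheta_t}$. Fix an $\eta>0$ and let $a,b \in \pP(\eE_t)$ be the maximizing pair; by definition of the \emph{local} alignment constant $\alpha_\eta(\eE_t)$ there is an action $c_\eta \in \bar\pP_\eta(\eE_t)$ with $\ip{\phi_a - \phi_b,\nu}^2 \leq \alpha_\eta(\eE_t)\,\|M_{c_\eta}\nu\|^2$ for the relevant direction $\nu = \theta - \htheta_t \in \vV$. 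Then the computation in \cref{eq:I-bound,eq:norm-logdet-bound} (using \cref{eq:W-def}, Cauchy--Schwarz, the definition of $\eE_t$, $\lambda \geq L$, and $x \leq 2\log(1+x)$ on $[0,1]$) gives
\begin{align*}
	\hDelta_t(\hat a_t)^2 \leq \alpha_\eta(\eE_t)\max_{\theta \in \eE_t}\|M_{c_\eta}(\theta - \htheta_t)\|^2 \leq 4\,\alpha_\eta(\eE_t)\,\beta_t\, I_t(c_\eta)\,.
\end{align*}
The key extra ingredient over the global case is that $c_\eta \in \bar\pP_\eta(\eE_t)$ has \emph{small} gap estimate: by definition of the extended plausible Pareto set, $\hDelta_t(c_\eta) = \max_{\theta \in \eE_t}\Delta(c_\eta|\theta) \leq \eta \max_{b' \in \pP(\eE_t)}\max_{\theta\in\eE_t}\Delta(b'|\theta)$, and the inner max is bounded by $\hDelta_t(\hat a_t)$ up to absorbing the projection/centering, so $\hDelta_t(c_\eta) \lesssim \eta\,\hDelta_t(\hat a_t)$.

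Now I would bound $\Psi_{2,t}(\mu_t)$ by optimizing the two-point distribution supported on $\hat a_t$ and $c_\eta$: since $\Psi_{2,t}(\mu_t) \leq \min_p \Psi_{2,t}((1-p)\dirac{\hat a_t} + p\dirac{c_\eta})$, and using $\hDelta_t(\hat a_t) - \hDelta_t(c_\eta) \leq \hDelta_t(\hat a_t)$ while the denominator is at least $p\,I_t(c_\eta)$, the standard IDS two-point calculation yields
\begin{align*}
	\Psi_{2,t}(\mu_t) \leq \frac{\hDelta_t(\hat a_t)^2}{I_t(c_\eta)} \cdot (\text{const depending on }\eta) \leq \text{const}\cdot\eta^2\,\alpha_\eta(\eE_t)\,\beta_t\,,
\end{align*}
where the factor $\eta^2$ comes from trading the $\hDelta_t(c_\eta)$ term in the numerator against $\hDelta_t(\hat a_t)$; the bound on $\hDelta_t(\hat a_t)^2/I_t(c_\eta)$ from the previous display kills the leftover $\alpha_\eta(\eE_t)\beta_t$. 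Taking the infimum over $\eta>0$ and recalling $\alpha(\eE_t) = \inf_{\eta>0}\eta^2\alpha_\eta(\eE_t)$ gives $\Psi_{2,t}(\mu_t) \leq 8\,\alpha(\eE_t)\,\beta_t$ after bookkeeping the constants. Finally I would feed $\alpha_t = 8\,\alpha(\eE_t)\beta_t \leq 8\,\alpha(\eE_t)\beta_n$ into \cref{thm:ids-regret-general}, use \cref{lem:total-information} for $\beta_n,\gamma_n \leq \oO(r\log(1+n))$, and conclude $\R_n \leq \oO(r\sqrt{n\,\EE[\bar\alpha_n]}\log(1+n))$.

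The main obstacle is the careful handling of the $\eta$-parametrized extended plausible Pareto set: one must verify that the action $c_\eta$ certifying the local alignment constant really can be taken in $\bar\pP_\eta(\eE_t)$ (so that its gap estimate is controlled by $\eta\,\hDelta_t(\hat a_t)$), and then that the two-point optimization produces exactly the $\eta^2\alpha_\eta(\eE_t)$ combination so that the infimum over $\eta$ collapses to $\alpha(\eE_t)$ with a clean constant. The rest — the passage from $\|M_c(\theta-\htheta_t)\|^2$ to $4\beta_t I_t(c)$, the convexity/support properties of $\Psi_t$, and plugging into \cref{thm:ids-regret-general} — is routine given the earlier lemmas.
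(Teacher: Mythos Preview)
Your high-level plan is right: apply \cref{thm:ids-regret-general} with $\kappa=2$, control the estimation error via \cref{eq:error}, and bound $\Psi_{2,t}(\mu_t)\leq 8\alpha(\eE_t)\beta_t$. But the route you take to the information-ratio bound has a genuine gap, and it is also more complicated than needed.

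The problematic step is the claim $\hDelta_t(c_\eta)\lesssim\eta\,\hDelta_t(\hat a_t)$. What the definition of $\bar\pP_\eta(\eE_t)$ gives you is
\[
\hDelta_t(c_\eta)\;\leq\;\eta\max_{b\in\pP(\eE_t)}\hDelta_t(b)\,,
\]
and the quantity on the right is in general \emph{larger} than $\hDelta_t(\hat a_t)$, not smaller: the greedy action has the \emph{smallest} gap estimate among plausible actions, not the largest. Your phrase ``up to absorbing the projection/centering'' does not repair this; there is no uniform constant $C$ with $\max_{b\in\pP(\eE_t)}\hDelta_t(b)\leq C\,\hDelta_t(\hat a_t)$. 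As a result the two-point calculation you sketch cannot produce the $\eta^2\alpha_\eta(\eE_t)$ combination from $\hDelta_t(\hat a_t)^2/I_t(c_\eta)$.

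The paper's proof avoids $\hat a_t$ and the two-point mixture entirely. It takes $c_t=\argmax_{c\in\bar\pP_\eta(\eE_t)}I_t(c)$ and bounds the information ratio by the \emph{single} Dirac $\dirac{c_t}$. The chain is
\[
\eta^{-1}\hDelta_t(c_t)\;\leq\;\max_{b\in\pP(\eE_t)}\hDelta_t(b)\;\leq\;\max_{a,b\in\pP(\eE_t)}\max_{\theta,\nu\in\eE_t}\ip{\phi_a-\phi_b,\theta-\nu}\,,
\]
where the second inequality uses that each $b\in\pP(\eE_t)$ is optimal for some $\nu\in\eE_t$. Squaring and applying the definition of $\alpha_\eta(\eE_t)$ to the direction $\theta-\nu\in\vV$ gives
\[
\hDelta_t(c_t)^2\;\leq\;\eta^2\alpha_\eta(\eE_t)\max_{\theta,\nu\in\eE_t}\max_{c\in\bar\pP_\eta(\eE_t)}\|M_c(\theta-\nu)\|^2\;\leq\;8\eta^2\alpha_\eta(\eE_t)\beta_t\,I_t(c_t)\,,
\]
the last step being exactly your \cref{eq:I-bound,eq:norm-logdet-bound} computation together with the choice of $c_t$ as the most informative action in $\bar\pP_\eta(\eE_t)$. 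Hence $\Psi_t(\mu_t)\leq\hDelta_t(c_t)^2/I_t(c_t)\leq 8\eta^2\alpha_\eta(\eE_t)\beta_t$, and taking the infimum over $\eta$ finishes. No mixture, no comparison to $\hDelta_t(\hat a_t)$.

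One minor point: the information-ratio bound is a deterministic statement about $\eE_t$ and does not need the event $\{\theta^*\in\eE_t\}$; only the estimation-error term in \cref{eq:error} uses that.
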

\begin{proof}
	We start once more with the general IDS regret bound in \cref{thm:ids-regret-general} for $\kappa=2$ and use \cref{eq:error} to bound the estimation error, which gives
	\begin{align*}
		\R_n \leq \sqrt{\EE[\gamma_n]n} \sqrt{\frac{1}{n}\sum_{t=1}^n \Psi_{2,t}(\mu_t)} + \oO(\Delta_{\max})\,.
	\end{align*}
	To bound the information ratio, we make use of the plausible Pareto action set $\pP(\eE_t)$ and its convex relaxation $\bar \pP(\eE_t)$.
	 For any $\eta > 0$, let $c_t = c_t(\eta) = \argmax_{a \in \bar \pP_\eta(\eE_t)} I_t(a)$ be the most informative action in $\bar \pP_\eta(\eE_t)$. 
By the definition of $\bar \pP_\eta(\eE_t)$, we bound the gap estimate 
as follows,
	\begin{align}
	\eta^{-1}	\hat \Delta_t(c_t) \leq \max_{a,b \in \pP(\eE_t)} \max_{\theta \in \eE_t}\ip{\phi_a - \phi_b, \theta} \leq \max_{a,b \in \pP(\eE_t)} \max_{\theta, \nu \in \eE_t}  \ip{\phi_a - \phi_b, \theta - \nu}\,.\label{eq:local-1}
	\end{align}
	The second inequality follows since $\nu \in \eE_t$ can be chosen such that $b$ is optimal for $\nu$. Consequently the definition of the local alignment constant in \cref{eq:local-alignment} implies
	\begin{align*}
		\hat \Delta_t(c_t)^2 &\leq \eta^2 \alpha_\eta(\eE_t) \max_{\nu, \omega \in \eE_t } \max_{c  \in \bar \pP_\eta(\eE_t)}  \|M_{c} (\nu - \omega)\|^2 
	\end{align*}
	To relate the norm to the information gain, note that for any $c \in \aA$,
	\begin{align*}
	   \max_{a,b \in \pP(\eE_t)}\|M_{c} (\nu - \omega)\|^2&=   \max_{a,b \in \pP(\eE_t)}\|M_{c}W W^\T (\nu - \omega)\|^2\\
	&\leq  \max_{a,b \in \pP(\eE_t)} \| W^\T(\nu - \omega)\|_{W_t}^2\|M_{c}W\|_{W_t^{-1}}^2  \leq 2   \beta_t \|M_{c} W\|_{W_t^{-1}}^2 \,.
		\end{align*}
	From the argument in \cref{eq:norm-logdet-bound}, it follows that $\|M_{c} W\|_{W_t^{-1}}^2 \leq 4 I_t(c)$. Therefore we bound the information ratio for the action $c_t$ as follows,
	\begin{align*}
		\hat \Delta_t(c_t)^2&\leq \max_{c \in \bar{\pP}_\eta}2  \eta^2  \alpha_\eta(\eE_t)\beta_t  \|M_{c} W\|_{W_t^{-1}}^2 \leq 8 \eta^2 \alpha_\eta(\eE_t) \beta_t I_t(c_t)\,.
	\end{align*}
	This shows that to bound the information ratio it suffices to play $c_t$,
	\begin{align*}
		\Psi_t(\mu_t) = \min_{\mu \in \sP(\aA)} \Psi_t(\mu) \leq \inf_{\eta > 0} \min_{c \in \bar \pP_\eta(\eE_t)} \frac{\hat \Delta_t(c)^2}{I_t(c)} \leq \inf_{\eta > 0} 8 \eta^2 \alpha_\eta(\eE_t)\beta_t  =  8 \alpha(\eE_t)\beta_t\,,
	\end{align*}
	and the regret bound follows because $(\beta_t)$ is increasing, and using \cref{lem:total-information} to bound $\beta_n$ and $\gamma_n$.
\end{proof}


\subsection{Classification of Finite Action Games}

For the upper bounds, so far we have encountered two cases: globally and locally observable games that satisfy the conditions in \cref{eq:global-def,eq:local-def} respectively. 
There are two other types of games with a less interesting structure. A game is called \emph{trivial} if there exists an action $a$ such 
that $\cC_a = \Theta$. By definition, $a$ is optimal for all possible values of $\theta_*$. IDS achieves zero regret on these games because $\hat \Delta_t(a) = 0$ and the information ratio is minimized by a Dirac on $a$.

Games that are locally observable but not trivial are called \emph{easy}. Games that are globally observable but not locally observable are called \emph{hard}.
The last category of games are those that are not globally observable, which are called \emph{hopeless}.
The reason is that in these games there are necessarily multiple (non-duplicate) Pareto optimal actions that cannot be distinguished, which means the learner suffers
linear regret in the worst case.

The \emph{minimax regret} is $\R_n^* = \inf_{\pi} \sup_{\theta \in \Theta} \EE[R_n(\pi,\theta)]$, where the infimum is over all possible policies $\pi$ that map observation histories to actions. The classification theorem provides the minimax regret rate for each type of game.

\begin{theorem}[Classification]
	For any finite linear partial monitoring game, the minimax regret $\R_n^*$ satisfies
	\begin{align*}
		\R_n^* = 
		\begin{cases}
			0 & \text{for trival games,} \\
			\tilde\Theta(n^{1/2}) & \text{for easy games,} \\
			\tilde\Theta(n^{2/3}) & \text{for hard games,} \\
			\Omega(n) & \text{otherwise, for hopeless games}\,.
		\end{cases}
	\end{align*}
\end{theorem}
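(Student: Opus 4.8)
The plan is to establish the classification by matching upper bounds to lower bounds in each of the four regimes. The upper bounds are essentially already in hand: for trivial games IDS plays a Dirac on the universally optimal action and suffers zero regret; for easy (locally observable, non-trivial) games \cref{thm:regret-local} gives $\R_n \leq \tilde\oO(\sqrt{n})$; for hard (globally but not locally observable) games \cref{thm:regret-global} gives $\R_n \leq \tilde\oO(n^{2/3})$; and for hopeless games there is nothing to prove on the upper side. So the substance of the proof is the lower bounds, which must hold for \emph{every} policy.

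First I would handle the hopeless case. If the game is not globally observable, then by \cref{eq:global-def} there exist Pareto optimal $a,b \in \pP$ with $\PM(\phi_a - \phi_b) \notin \opspan{\PM M_c^\T}{c \in \aA}$. The idea is to pick two parameters $\theta_0 \in \operatorname{relint}\cC_a$ and $\theta_1 \in \operatorname{relint}\cC_b$ whose difference lies in the orthogonal complement of $\laspan(\PM M_c^\T : c \in \aA)$ within $\vV$; such a pair exists precisely because of the failure of the global observability condition. Then $M_c\theta_0 = M_c\theta_1$ for all $c \in \aA$, so no sequence of actions produces observations with different distributions under the two instances — the two environments are statistically indistinguishable. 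Since $a$ is uniquely optimal (up to duplicates) near $\theta_0$ and $b$ near $\theta_1$, and these are distinct actions with a strictly positive reward gap, a standard two-point argument (e.g.\ via the Bretagnolle–Huber inequality applied to the distributions of the action-observation history) forces $\R_n = \Omega(n)$ for at least one of the two instances.

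Next, the $\tilde\Omega(n^{1/2})$ lower bound for easy games: any non-trivial game has at least two non-duplicate Pareto optimal actions, and restricting attention to the reward structure near their shared boundary reduces to (a sub-problem embedding of) a two-armed bandit, for which the classical $\Omega(\sqrt{n})$ lower bound applies. The $\tilde\Omega(n^{2/3})$ lower bound for hard games is the main obstacle and the part I would spend the most effort on: one must exhibit a game (or argue in general) where global but not local observability forces a genuine $n^{2/3}$ price. The standard route, following \citet{bartok2014partial} and subsequent work, is to construct a perturbed family of instances $\theta_\varepsilon$ indexed by a small shift $\varepsilon$, such that (i) distinguishing $\theta_0$ from $\theta_\varepsilon$ requires information obtainable only by playing actions that are $\Omega(\varepsilon)$-suboptimal (this is the failure of local observability, formalized through the alignment constant $\alpha(\eE)$ blowing up as $\eE$ shrinks), while (ii) committing to the wrong cell costs $\Omega(\varepsilon)$ per round. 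Balancing the exploration cost $\Omega(n_{\mathrm{expl}}\,\varepsilon)$, the information requirement $n_{\mathrm{expl}}\,\varepsilon^2 = \Omega(\mathrm{const})$, and the exploitation error $\Omega(n\varepsilon)$ across the indistinguishability threshold, and optimizing over $\varepsilon$, yields $\varepsilon \sim n^{-1/3}$ and total regret $\Omega(n^{2/3})$; the information-theoretic core is again a Bretagnolle–Huber / KL-divergence computation bounding the KL between the history laws by $n_{\mathrm{expl}}\,\varepsilon^2 L^2/\rho^2$. The delicate point is verifying that in \emph{every} hard game such a perturbed family actually exists — i.e.\ that the negation of \cref{eq:local-def} genuinely produces a direction $\nu \in \vV$ along which the reward gap between two plausibly optimal Pareto actions cannot be estimated cheaply — which is where the geometric analysis of the cell decomposition and the alignment constant does its real work.

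Finally I would assemble the four cases: combine the matching upper and lower bounds to conclude $\R_n^* = 0$, $\tilde\Theta(n^{1/2})$, $\tilde\Theta(n^{2/3})$, $\Omega(n)$ respectively, noting that the four categories are exhaustive and mutually exclusive by the definitions (trivial $\subsetneq$ locally observable $\subsetneq$ globally observable, with hopeless being the complement of globally observable).
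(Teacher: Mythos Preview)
Your proposal is correct and follows the standard route that the paper itself defers to: the paper's own proof is a two-line affair stating that the upper bounds come from IDS (exactly as you note) and that the lower bounds ``follow essentially from the results in \citep[Appendix F]{kirschner2020pm}, by restricting all constructions to the subspace $\vV$.'' Your sketch fleshes out precisely those constructions---the two-point indistinguishability for hopeless games, the two-armed embedding for the $\sqrt{n}$ bound, and the Bart\'ok-style exploration-cost balancing for the $n^{2/3}$ bound---and you correctly pick up the role of $\vV$ when choosing perturbation directions, which is the one adaptation the paper flags relative to the unconstrained case.
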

The classification theorem recovers the known result for finite stochastic partial monitoring up to logarithmic factors \citep{bartok2014partial,lattimore2019information}. It further generalizes the results for linear partial monitoring without explicit constraints on the parameter sets by \citet{kirschner2020pm}.

\begin{proof}
	As we have argued, IDS provides the upper bounds in each case. Therefore it remains to provide lower bounds for each category. These follow essentially from the results in \citep[Appendix F]{kirschner2020pm}, by restricting all constructions to the subspace $\vV$.
\end{proof}

\looseness -1 We conclude the section remarking that the situation in continuous action games is much more delicate. As shown in \citep[Section 2.4]{kirschner2020pm} the achievable rates are not only determined by the observability conditions, but further depend on the curvature of the action set. A full classification of continuous action games is still a fascinating open question.

\section{Extensions}\label{sec:extensions}

\subsection{Faster Gap Estimation}\label{ss:fast}

The version of IDS presented in \cref{alg:ids-pm} requires $\oO(|\aA|^2)$ computation steps per round in general, which quickly becomes prohibitive for large action sets. The main bottleneck is the computation of the gap estimates as defined in \cref{eq:gap-def}. We propose a novel gap estimate that can be computed in linear time. To this end, let $\hat a_t = \argmax_{a \in \pP} \ip{\hat \theta, a}$ be the empirically best action, chosen as a Pareto optimal action if not unique. We relax the gap estimate using $\hat a_t$ as an intermediate action:
\begin{align}
	\hat \Delta_t(a;\hat a_t) \eqdef \delta_t + \max_{\theta \in \eE_t} \ip{\phi_{\hat a_t} - \phi_a, \theta},\quad \text{where}\quad \delta_t \eqdef \max_{\theta \in \eE_t} \max_{b \in \aA} \ip{\phi_b - \phi_{\hat a_t}, \theta}\,.\label{eq:gap-relaxed-def}
\end{align}
It is immediate that $\hat \Delta_t(a) \leq \Delta_t(a;\hat a_t)$ holds for all $a \in \aA$. Note that $\hat a_t$ can be found by enumerating the Pareto optimal actions, and the gap $\Delta_t(a; \hat a_t)$ estimate can be computed for all $a \in \aA$ by solving $2k$ second-order cone programs over $\Theta$ with positive semi-definite quadratic constraints. 
Combined with the $\frac{4}{3}$-approximation of the IDS distribution using \cref{eq:approximate-IDS}, this allows us to reduce the computational complexity of \cref{alg:ids-pm} to $\oO(|A|)$ per round. The next lemma confirms that the new algorithm satisfies essentially the same bounds as before.
\begin{lemma}
	\cref{alg:ids-pm} with the gap estimate $\hat \Delta_t(a; \hat a_t)$ and approximate IDS sampling according to \cref{eq:gap-relaxed-def} satisfies the same bounds up to constants as in \cref{thm:regret-local,thm:regret-global}.
\end{lemma}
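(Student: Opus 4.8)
The plan is to re-run the analyses of \cref{thm:regret-global} and \cref{thm:regret-local} with the gap estimate $\hat \Delta_t(a;\hat a_t)$ replacing $\hat \Delta_t(a)$, verifying that every step degrades by at most a constant. Since $\hat \Delta_t(a) \le \hat \Delta_t(a;\hat a_t)$, the relaxed estimate is still a high-probability upper bound on the true gap $\Delta(a)$, so the estimation-error term $\sum_t \EE[\Delta(a_t) - \hat \Delta_t(a_t;\hat a_t)]$ is $\oO(\Delta_{\max})$ exactly as in \cref{eq:error}, and the general regret bound of \cref{thm:ids-regret-general} applies unchanged. It therefore suffices to control the generalized information ratio $\Psi_{\kappa,t}$ ($\kappa = 3$ for globally observable, $\kappa = 2$ for locally observable games) at the approximate IDS distribution $\tilde \mu_t$ produced by \cref{eq:approximate-IDS} with the relaxed estimate.

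Two elementary observations drive the argument. First, $\hat \Delta_t(\hat a_t;\hat a_t) = \delta_t = \hat \Delta_t(\hat a_t)$, so the estimate at the anchor action is unchanged; moreover $\hat a_t$ attains $\max_{a \in \aA}\ip{\phi_a,\htheta_t}$ (a Pareto optimal action always attains this maximum), so $\max_{\theta \in \eE_t}\ip{\phi_{\hat a_t} - \phi_a,\theta} \ge \ip{\phi_{\hat a_t} - \phi_a,\htheta_t} \ge 0$ for every $a$, and hence $\hat a_t = \argmin_{a \in \aA}\hat \Delta_t(a;\hat a_t)$. Consequently \cref{lem:ratio-approximate} applies to the relaxed information ratio and yields $\Psi_{2,t}(\tilde \mu_t) \le \frac{4}{3}\min_{\mu \in \sP(\aA)}\Psi_{2,t}(\mu)$, with all quantities formed from $\hat \Delta_t(\cdot;\hat a_t)$. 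Second, for every $a$, $\hat \Delta_t(a;\hat a_t) = \hat \Delta_t(\hat a_t) + \max_{\theta \in \eE_t}\ip{\phi_{\hat a_t} - \phi_a,\theta} \le \hat \Delta_t(\hat a_t) + \hat \Delta_t(a) \le 2(\Delta_{\max} + 4B)$ by \cref{ass:bounded}.

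Next I would reproduce the information-ratio bounds. In the locally observable case the exploration distribution of \cref{thm:regret-local} is a point mass on the most informative plausible action $c_t \in \bar \pP_\eta(\eE_t)$, which lies in the feasible family of \cref{eq:approximate-IDS}. Restricting to $\eta \ge 1$ so that $\hat a_t \in \pP(\eE_t) \subseteq \bar \pP_\eta(\eE_t)$ and using $\hat \Delta_t(c_t;\hat a_t) \le \hat \Delta_t(\hat a_t) + \hat \Delta_t(c_t) \le (1+\eta)\max_{b \in \pP(\eE_t)}\max_{\theta \in \eE_t}\Delta(b|\theta)$, the alignment computation of \cref{thm:regret-local} gives $\Psi_{2,t}(\dirac{c_t}) \le 32\,\eta^2 \alpha_\eta(\eE_t)\beta_t$ for all $\eta \ge 1$; taking the infimum and invoking the $\frac{4}{3}$-approximation yields $\Psi_{2,t}(\tilde \mu_t) = \oO(\alpha(\eE_t)\beta_t)$, where restricting the infimum in $\alpha(\eE_t)$ to $\eta \ge 1$ changes it by at most a constant factor since $\bar \pP_\eta(\eE) \supseteq \pP(\eE)$ already at $\eta = 1$. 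For globally observable games the exploration distribution of \cref{thm:regret-global} is the mixture $(1-p)\dirac{\hat a_t} + p\,\dirac{c_t}$ of the greedy action---which coincides with our $\hat a_t$---and the most informative action $c_t$; the bound $\hat \Delta_t(\hat a_t)^2 \le 4\alpha\beta_t \max_c I_t(c)$ is literally unchanged, and $\hat \Delta_t(c_t;\hat a_t) \le 2(\Delta_{\max}+4B)$ takes the place of $\hat \Delta_t(c_t) \le \Delta_{\max}+4B$, so the chain of inequalities bounding $\Psi_{3,t}$ in the proof of \cref{thm:regret-global} reproduces $\Psi_{3,t}(\tilde \mu_t) = \oO(\alpha\beta_t(\Delta_{\max}+B))$ up to a constant---provided the passage from $\Psi_{2,t}$-optimization to a $\Psi_{3,t}$-bound is justified, which is the one genuinely new point.

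That passage is the main obstacle. The algorithm minimizes $\Psi_{2,t}$, but in globally observable games $\Psi_{2,t}$ is unbounded (the factor $I_t(c_t)$ in the denominator of the information-ratio estimate only cancels at $\kappa = 3$), so one cannot route the $\Psi_{3,t}$-bound through a $\Psi_{2,t}$ estimate. For the exact IDS distribution this is resolved by \cref{lem:psi-kappa-bound}; for $\tilde \mu_t$ one needs the analogous statement that the $\Psi_{2,t}$-optimal two-action distribution anchored at $\hat a_t = \argmin_a \hat \Delta_t(a;\hat a_t)$ also $O(1)$-approximately minimizes $\Psi_{\kappa,t}$ over $\sP(\aA)$ for every $\kappa \ge 2$---a $\Psi_{\kappa,t}$-analog of \cref{lem:ratio-approximate}. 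I would prove this along the same lines: by \cref{lem:ratio-support} the $\Psi_{\kappa,t}$-minimizer is supported on two actions $\{a_1,a_2\}$ with $\hat \Delta_t(a_1;\hat a_t) \le \hat \Delta_t(a_2;\hat a_t)$; replacing $a_1$ with $\hat a_t$ only decreases the gap component, and re-optimizing the mixing weight through the closed form of \cref{lem:ratio-closed-form} costs at most a $\kappa$-dependent constant, which together with \cref{lem:psi-kappa-bound} restricted to the two-action family bounds $\Psi_{\kappa,t}(\tilde \mu_t)$ by $O(1)\cdot\min_\mu \Psi_{\kappa,t}(\mu)$. Plugging this into the two displays above gives the regret bounds of \cref{thm:regret-local} and \cref{thm:regret-global} with only the constants changed.
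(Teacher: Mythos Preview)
Your approach is essentially the paper's: verify that the inequalities driving \cref{thm:regret-global} and \cref{thm:regret-local}---the analogs of \cref{eq:global-1} and \cref{eq:local-1}---continue to hold for $\hat\Delta_t(\cdot;\hat a_t)$ up to constant factors, after which the rest of each proof is unchanged. The paper's own argument is in fact briefer than yours and does not separately address the approximate-IDS/$\Psi_{3,t}$ passage you single out; your plan to extend \cref{lem:ratio-approximate} and \cref{lem:psi-kappa-bound} to the anchored family $\sP_{\hat a_t}$ (using that $\hat a_t$ is the gap minimizer, so the almost-greedy argument of \cref{lem:ratio-greedy} applies within $\sP_{\hat a_t}$) is the right way to make that step rigorous.

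One caveat: the sentence ``restricting the infimum in $\alpha(\eE_t)$ to $\eta\ge 1$ changes it by at most a constant factor'' is not literally justified. What your argument actually yields for the relaxed gap is $\Psi_{2,t}\le 8(1+\eta)^2\alpha_\eta(\eE_t)\beta_t$, and in general $\inf_{\eta>0}(1+\eta)^2\alpha_\eta(\eE)$ need not be within a universal constant of $\alpha(\eE)=\inf_{\eta>0}\eta^2\alpha_\eta(\eE)$ when the latter infimum is attained at very small $\eta$. The paper's proof glosses over the same point (it bounds $\hat\Delta_t(c;\hat a_t)$ only for $c\in\pP(\eE_t)$ and then asserts the remainder is ``unchanged with an additional factor of two''), and for every concrete example in \cref{tab:examples} the alignment bounds are established at $\eta=1$, so the discrepancy is harmless there. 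If you want a fully general statement, simply phrase the conclusion with $\inf_{\eta>0}(1+\eta)^2\alpha_\eta(\eE_t)$ in place of $\alpha(\eE_t)$, which differs from the paper's bound by at most a factor of $4$ whenever the infimum in $\alpha(\eE_t)$ is attained at some $\eta\ge 1$.
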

\begin{proof}
	The proofs of \cref{thm:regret-global,thm:regret-local} go through almost unchanged. We only need to derive an upper bound on the new gap estimate, as we explain below.

	In the globally observable case, we show that the analog of \cref{eq:global-1} holds. By definition of $\hat \Delta_t(a;\hat a_t)$ we get
	\begin{align*}
		\hat \Delta_t(\hat a_t;\hat a_t)^2 &\leq \max_{\theta \in \eE_t} \max_{b \in \pP} \ip{\phi_b - \phi_{\hat a_t}, \theta - \htheta_t}^2
	\end{align*}
	The remaining proof remains unchanged.
	
	In the locally observable case, we show that \cref{eq:local-1} continues holds up to a factor of two. For all $c \in \pP(\eE_t)$, we get
	\begin{align*}
	\hat \Delta_t(c;\hat a_t) \leq 2 \max_{a,b \in \pP(\eE_t)} \max_{\theta \in \eE_t}\ip{\phi_a - \phi_b, \theta} 
	\end{align*}
	The remaining proof remains unchanged with an additional factor of two in the leading term.
\end{proof}
Lastly, we remark that when  $\Theta = \{\theta \in \RR^d : \|\theta\|_2 \leq B\}$, then we can also use the following estimate based on truncation: \todot{Tor: Can we say something about finite PM? Relaxing $\eE_t$ to an elipsoid on aff($\Theta$)? }
\todoj{I think this works but one needs carefully reason about the regret contribution from offset of the affine space. }
\begin{align}
	\tilde \Delta_t(a;\hat a_t) \eqdef \min\{\delta_t +  \ip{\phi_{\hat a_t} - \phi_a, \hat \theta_t}, B\},\quad \text{where}\quad \delta_t \eqdef \max_{\theta \in \eE_t} \max_{b \in \aA} \ip{\phi_b - \phi_{\hat a_t}, \theta}\,.\label{eq:gap-mean-def}
\end{align}
The main difference to $\hat \Delta_t(a;\hat a_t)$ is that the mean-gap $\ip{\phi_{\hat a_t} - \phi_a, \hat \theta_t}$ appears directly in the estimation. One easily confirms that $\hat \Delta_t(a, \hat a_t) \leq 2 \tilde \Delta_t(\hat a_t, a_t)$ and the proof can be adjusted to accommodate this change. The advantage of $\tilde \Delta_t(a;\hat a_t)$ is that it is less dependent on the tightness of the confidence bound thereby can be more accurate in practice.

\subsection{Directed Information Gain}

The information gain function \cref{eq:info} based on the log determinant potential is a convenient choice for the worst-case analysis, but can be conservative in practice. The main reason is that \cref{eq:info} is agnostic to the current parameter estimate and the geometric structure of the game. Therefore, IDS acquires information that leads to uncertainty reduction on the true parameter $\theta^*$ overall, independent of the true maximizer $a^*$.

To introduce a form of \emph{directed} information gain, we define for any $\omega \in \RR^d$,
\begin{align}
	J_t(a;\omega) \eqdef \frac{1}{2}\| M_a \omega\|^2\,. \label{eq:info-omega}
\end{align}
Note that $J_t(a;\omega)$ measures the sensitivity of the feedback obtained from action $a \in \aA$ along the direction $\omega$. Next we define optimistic and pessimistic parameters 
\begin{align*}
	\theta_t^+, \theta_t^{-} = \argmax_{\theta_1, \theta_2 \in \eE_t} \max_{a,b \in \pP(\eE_t)} \ip{\phi_{b} - \phi_{a}, \theta_1 - \theta_2}\,.
\end{align*}
Both parameters can be computed by solving $|\pP|^2$ second-order cone programs over $\eE_t$. The computation complexity can be reduced to $2|\pP|$ using a relaxation on $\hat a_t$, similarly to the previous section. We define the directed information gain
\begin{align*}
	J_t(a) = \beta_t^{-1} J_t(a; \theta_t^+ - \hat \theta_t^{-})\,.
\end{align*}

\begin{lemma}
	\cref{alg:ids-pm} with the information gain $J_t(a)$ instead of $I_t(a)$ satisfies the same bounds as in \cref{thm:regret-local,thm:regret-global}.
\end{lemma}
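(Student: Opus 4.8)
The plan is to rerun the proofs of \cref{thm:regret-global,thm:regret-local} essentially verbatim, with $I_t$ replaced by $J_t$ in every occurrence of the information gain, and to separately check that the total information gain $\sum_{t=1}^n J_t(a_t)$ remains logarithmically bounded so that \cref{thm:ids-regret-general} applies with the same rates up to absolute constants. Throughout I would write $\omega_t = \theta_t^+ - \theta_t^- \in \vV$ for the direction appearing in $J_t(a) = \beta_t^{-1}\|M_a\omega_t\|^2/2$.

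First I would control the total gain. By \cref{eq:W-def} and the Cauchy--Schwarz step behind \cref{eq:I-bound}, $\|M_a\omega_t\|^2 \le \|W^\top\omega_t\|_{W_t}^2\,\|M_aW\|_{W_t^{-1}}^2$; since $\theta_t^+,\theta_t^-\in\eE_t$ the first factor is at most $(2\beta_t^{1/2})^2 = 4\beta_t$, and by \cref{eq:norm-logdet-bound} (using $\lambda\ge L$) the second is at most $4I_t(a)$. Hence $J_t(a) \le 8 I_t(a)$ for all $a\in\aA$, so $\sum_{t=1}^n J_t(a_t)\le 8\gamma_n \le \oO(r\log(1+n))$ by \cref{lem:total-information}. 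The $\beta_t^{-1}$ normalisation in $J_t$ is exactly what cancels the confidence radius here and keeps the total gain on the scale of $\gamma_n$.

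Next, the information-ratio bounds. Let $\Xi_t = \max_{a,b\in\pP(\eE_t)}\max_{\theta_1,\theta_2\in\eE_t}\ip{\phi_b-\phi_a,\theta_1-\theta_2}$; by the definition of $\theta_t^\pm$ this equals $\ip{\phi_{b^\star}-\phi_{a^\star},\omega_t}$ for some $a^\star,b^\star\in\pP(\eE_t)\subseteq\pP$. In the globally observable case, with $\hat a_t\in\pP$ greedy for $\hat\theta_t$, the same reasoning as in \cref{thm:regret-global} (greediness of $\hat a_t$; the maximising action in the gap estimate may be taken Pareto optimal, hence in $\pP(\eE_t)$ since it is optimal for the corresponding $\theta\in\eE_t$) gives $\hat\Delta_t(\hat a_t)\le \Xi_t$. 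The global alignment constant \cref{eq:global-alignment}, applied with $\nu=\omega_t$ and the pair $a^\star,b^\star$, then produces an action $c$ with $\Xi_t^2\le\alpha\|M_c\omega_t\|^2 = 2\alpha\beta_t J_t(c)\le 2\alpha\beta_t\max_{c'\in\aA}J_t(c')$, i.e.\ the analogue of \cref{eq:global-1} with $I_t$ replaced by $J_t$; the rest of the proof of \cref{thm:regret-global} (the $\kappa=3$ trade-off against $c_t=\argmax_a J_t(a)$ via \cref{lem:psi-kappa-bound}, with $p=\hat\Delta_t(\hat a_t)/(2\hat\Delta_t(c_t))$, and the unchanged bound $\hat\Delta_t(c_t)\le\Delta_{\max}+4B$) goes through and gives $\Psi_{3,t}(\mu_t)\le\oO(\alpha\beta_t(\Delta_{\max}+B))$. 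In the locally observable case, for $\eta>0$ set $c_t=\argmax_{a\in\bar\pP_\eta(\eE_t)}J_t(a)$; as in \cref{eq:local-1} one has $\eta^{-1}\hat\Delta_t(c_t)\le\Xi_t$, and the local alignment constant \cref{eq:local-alignment} with $\nu=\omega_t$ produces $c\in\bar\pP_\eta(\eE_t)$ with $\Xi_t^2\le\alpha_\eta(\eE_t)\|M_c\omega_t\|^2 = 2\alpha_\eta(\eE_t)\beta_t J_t(c)\le 2\alpha_\eta(\eE_t)\beta_t J_t(c_t)$, so $\Psi_t(\mu_t)\le\Psi_t(\dirac{c_t})= \hat\Delta_t(c_t)^2/J_t(c_t)\le 2\eta^2\alpha_\eta(\eE_t)\beta_t$ and, taking $\inf_{\eta>0}$, $\Psi_t(\mu_t)\le 2\alpha(\eE_t)\beta_t$. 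Feeding these ratio bounds and $\sum_t J_t(a_t)\le 8\gamma_n$ into \cref{thm:ids-regret-general} ($\kappa=3$, resp.\ $\kappa=2$) and adding the estimation error \cref{eq:error} reproduces \cref{thm:regret-global,thm:regret-local} up to constants.

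Two small points need care. If $\theta_t^+=\theta_t^-$ then $J_t\equiv 0$ and IDS is not defined as stated; but then $\Xi_t=0$ forces $\hat\Delta_t(\hat a_t)=0$ (all plausibly optimal actions agree on reward over $\eE_t$), so IDS plays a greedy action, which is optimal on the event $\theta^*\in\eE_t$, and such rounds only contribute to the $\oO(\Delta_{\max})$ term via \cref{eq:error}; they can therefore be dropped from the sums above. Apart from this edge case, the only genuine bookkeeping I expect is in the first paragraph --- confirming that the $\beta_t^{-1}$ scaling keeps $\sum_t J_t(a_t)$ comparable to $\gamma_n$ rather than inflating it by a factor $\beta_t$ --- together with verifying that each ``may be taken Pareto optimal'' replacement really lands inside $\pP(\eE_t)$, so that the definition of $\theta_t^\pm$ and the alignment constants actually apply; neither should be difficult.
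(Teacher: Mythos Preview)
Your proposal is correct and follows essentially the same route as the paper: first bound $J_t(a)\le \oO(1)\cdot I_t(a)$ pointwise (so the total gain is controlled by $\gamma_n$), then rerun the information-ratio arguments of \cref{thm:regret-global,thm:regret-local} with the single substitution that the direction in the alignment constant is $\omega_t=\theta_t^+-\theta_t^-$ rather than a generic $\theta-\hat\theta_t$ or $\theta-\nu$; the paper's proof does exactly this, in the same order. Your explicit handling of the degenerate case $\theta_t^+=\theta_t^-$ and your careful check that the maximisers land in $\pP(\eE_t)$ are details the paper glosses over, but the argument is the same.
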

\begin{proof}
	First note that the new information gain is upper bounded by the information gain $I_t$ (\cref{eq:info}) as follows,
	\begin{align*}
		J_t(c) =  \beta_t^{-1} \|M_c (\theta_t^+ - \theta_t^{-})\|^2 \leq 4 I_t(c)\,.
	\end{align*}
	The inequality follows along the same lines as \cref{eq:I-bound}. In particular, the total information gain remains bounded, $\gamma_{n,J} = \sum_{t=1}^n J_t(a_t) \leq 4 \sum_{t=1}^n I_t(a_t) = 4 \gamma_{n,I}$.
	
	Therefore it remains to show that the information ratio is bounded. For the globally observable case, note that
	\begin{align*}
		\hat \Delta_t(\hat a_t)^2 \leq \max_{b \in \pP} \max_{\nu \in \eE_t} \ip{\phi_b - \phi_{\hat a_t}, \nu - \hat \theta_t}^2
		&\leq \max_{b \in \pP} \ip{\phi_b - \phi_{\hat a_t}, \theta^+ - \theta^{-}}^2\\
		&\leq \alpha \max_{c \in \pP} \|M_c (\theta^+ - \theta^-)\| = \alpha \beta_t \max_{c \in \aA} J_t(c)\,.
	\end{align*}
	This follows along the same lines as \cref{eq:global-1}, and the remaining proof in the globally observable case remains unchanged.
	
	In the locally observable case, we reproduce \cref{eq:local-1} as follows
	\begin{align*}
		\hat \Delta_t(c_t)^2 &\leq \max_{a, b \in \pP(\eE_t)} \ip{\phi_b - \phi_{\hat a}, \theta^+ - \theta^{-}}^2\\
		&\leq \alpha(\eE_t) \max_{c \in \bar \pP(\eE_t)} \|M_c (\theta^+ - \theta^-)\|^2 = \beta_t \alpha(\eE_t) \max_{c \in \aA^+(\eE_t)} J_t(c)\,.
	\end{align*}
	The remaining proof follows unchanged.
\end{proof}

\section{Contextual Partial Monitoring}\label{sec:contextual}

\newcommand{\zdistr}{\chi}

\looseness -1 In the contextual bandit problem, the environment provides a context in each round \emph{before} the learner chooses an action \citep{langford2008epochgreedy}. Depending on our assumptions, the context may be either sampled from a fixed distribution or chosen adversarially by the environment. The reward depends on the context and the chosen action, and the learner competes with the best context-dependent action. In applications, the context can represent additional information available to the learner, such as temperature measurements, time of day, or the profile of a user visiting a website. We now introduce a \emph{contextual version of linear partial monitoring}. This setting directly generalizes the linear contextual bandit model.

Let $\zZ$ be a compact context set and $z_t \in \zZ$ the context presented at \mbox{time $t$}. 
 Reward features $\phi_a^z \in \RR^d$ and feedback map $M_a^z \in \RR^{d\times m}$ depend on the context $z \in \zZ$. For each context $z \in \zZ$ a subset of actions $\aA(z) \subset \aA$ is available for playing. The reward function is parameterized by a single $\thetaopt \in \RR^d$ shared among all contexts. The feedback in round $t$ for action $a_t$ and context $z_t$ is $y_t = M_{a_t}^{z_t} \thetaopt + \epsilon_t$, where $\epsilon_t \in \RR^m$ is conditionally independent $\rho$-sub-Gaussian noise.  The regret is defined so that the learner competes with the best action $a^*(z) = \argmax_{a \in \aA(z)} f(a, z)$ chosen in hindsight for each context $z \in \zZ$. For a sequence of contexts $(z_t)_{t=1}^n$, the \emph{contextual regret} is
\begin{align*}
	\R_n(\pi,\theta^*, (z_t)_{t=1}^n) = \EE[\sum_{t=1}^n  \ip{\phi_{a^*(z_t)}^{z_t} - \phi_{a_t}^{z_t}, \theta^*}]\,.
\end{align*}

Over the next two sections, we develop IDS policies for the contextual partial monitoring setting. The first variant directly extends \cref{alg:ids} by conditioning the information ratio on the observed context $z_t$. We refer to this variant as \emph{conditional IDS}. The second variant assumes that the context is sampled from a fixed and known distribution. This allows the learner to optimize the information ratio directly over the joint action-context distribution, which we refer to \emph{contextual IDS}. We show that this allows for much weaker conditions where sublinear regret is possible.

For simplicity, our presentation focuses on the case with unconstrained parameter set, $\Theta = \RR^d$. As before, we assume boundedness of the observation and reward features $\|M_a^z\|_2 \leq L$ and $\diam(\phi(a,z) : a \in \aA, z \in \zZ) \leq 1$ and parameter $\|\theta^*\|_2 \leq B$. The case with constrained parameter set can be handled as explained in \cref{sec:ids} by introducing appropriate projections. The product action space over the contexts with some fixed ordering is $\aA(\zZ) \eqdef \times_{z \in \zZ} \aA(z)$.

\subsection{Conditional IDS}
The definitions of the gap estimate and the information gain directly extend to the contextual setting. As before $\hat \theta_t$ denotes the least squares estimate defined in \cref{eq:ls}, and $\eE_t = \eE_{t,1/t^2}$ the calibrated confidence set defined in \cref{lem:confidence}.
The conditional gap estimate for a context $z \in \zZ$ and action $a \in \aA(z)$ is 
\begin{align}
	\hDelta_t(a,z) &= \max_{b \in \aA(z)} \max_{\theta \in \eE_t} \ip{\phi_b^z-\phi^z_a,\theta}\,. \label{eq:gap-contextual}
\end{align}
The (undirected) information gain for action $a \in \aA$ and context $z \in \zZ$ is
\begin{align}
	I_t(a,z) &= \sdfrac{1}{2} \log \det(\eye + M_a^{z\T} V_{t}^{-1} M_a^z)\,. \label{eq:info-contextual}
\end{align}

\emph{Conditional IDS} is the policy that optimizes the information ratio conditioned on the observed context $z_t$ at time $t$,
\begin{align}
	\mu_t(z_t) = \argmin_{\mu \in \sP(\aA(z_t))} \left\{\Psi(\mu, z_t) = \frac{\hat \Delta_t(\mu, z_t)^2}{I_t(\mu, z_t)}\right\}\,, \label{eq:conditional-ids}
\end{align}
and samples the action according to $a_t \sim \mu_t(z_t)$.
%
%
%
The computational complexity required to find the minimizer of the information ratio is the same as in the non-contextual case. 

\paragraph{Regret Bounds} For the analysis, we extend the notion of the alignment constant with the contextual argument. We let $\pP(z)$ be the set of Pareto optimal actions for context $z \in \zZ$ defined as actions that are uniquely optimal for some parameter $\theta$ and context $z$. The conditional global alignment constant is
\begin{align}\label{eq:conditional-alignment}
	\alpha(z) \eqdef \max_{\omega \in \RR^d} \max_{a,b \in \pP(z)} \min_{c \in \aA(z)} \frac{\ip{\phi^z_a - \phi^z_b,\omega}^2}{\|M_c^z \omega\|^2}\,.
\end{align}
 Further, let $\Delta(a,z|\theta) = \max_{b\in \aA(z)} \ip{\phi_b^z - \phi_a^z, \theta}$ be the gap of action $a \in \aA(z)$ under parameter $\theta \in \Theta$ and context $z \in \zZ$ and define contextual extensions of the (extended) plausible Pareto optimal actions as follows:
\begin{align*}
	\pP(\eE, z) &\eqdef \cup_{\theta \in \eE} \{a \in \pP(z): \ip{\phi_a^z,\theta} = \max_{b \in \aA(z)}\ip{\phi_b^z,\theta}\}\,,\\
	\bar \pP_\eta(\eE, z) &\eqdef \{a \in \aA(z) : \Delta(a,z|\theta) \leq \eta \cdot \max_{b \in \pP(\eE,z)} \Delta(a,z|\theta)\}\,. 
\end{align*} 
The local alignment constant for $\eta > 0$ under context $z \in \zZ$ is then
\begin{align}\label{eq:conditional-alignment-local}
	\alpha_\eta(\eE, z) \eqdef \max_{\omega \in \RR^d} \max_{a,b \in \pP(\eE, z)} \min_{c \in \bar \pP_\eta(\eE, z)} \frac{\ip{\phi^z_a - \phi^z_b,\omega}^2}{\|M_c^z \omega\|^2}\,,
\end{align}
and we let $\alpha(\eE, z) \eqdef \inf_{\eta > 0} \eta^2 \alpha_\eta(\eE, z)$.

The next two results are immediate extensions of the upper bound  for globally and locally observable games that replace the bounds on the information ratio with their conditional counterpart. These bounds are only meaningful if the game is globally or locally observable for each observed context $z_t \in \zZ$.
\begin{corollary}\label{cor:global-upper-conditional}
	For any $\fF_t$-predictable sequence $(z_t)_{t=1}^n$ in $\zZ$, the regret of conditional IDS satisfies,
	\begin{align*}
		\R_n \leq \oO \left(n^{2/3}(\EE[\bar \alpha_n\beta_n]\EE[\gamma_n])^{1/3}\right)\,,
	\end{align*}
	where $\bar \alpha_n = \frac{1}{n} \sum_{t=1}^n \alpha(z_t)$ is the average global alignment on the observed sequence of contexts.
\end{corollary}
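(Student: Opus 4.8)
The plan is to transcribe the proof of \cref{thm:regret-global} almost verbatim, now carrying the observed context $z_t$ along as a spectator index, and to let the $\bar\alpha_n$-form of \cref{thm:ids-regret-general} convert the per-round alignment constants $\alpha(z_t)$ into the average $\bar\alpha_n$. First I would reduce everything to a pointwise bound on the conditional information ratio. Since $(z_t)_{t=1}^n$ is $\fF_t$-predictable, the conditional IDS distribution $\mu_t(z_t)$ of \cref{eq:conditional-ids} is $\fF_{t-1}$-measurable, so the tower-rule and Cauchy--Schwarz/Hölder manipulations underlying \cref{thm:ids-regret-general} apply unchanged with $\hDelta_t(a_t,z_t)$, $I_t(a_t,z_t)$ and the instantaneous regret $\Delta(a_t,z_t|\thetaopt)=\ip{\phi_{a^*(z_t)}^{z_t}-\phi_{a_t}^{z_t},\thetaopt}$ in place of their non-contextual counterparts. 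Taking $\kappa=3$ then yields
\begin{align*}
	\R_n &\le n^{2/3}\,\EE[\gamma_n]^{1/3}\,\EE\Big[\tfrac1n\sum_{t=1}^n\Psi_{3,t}(\mu_t(z_t),z_t)\Big]^{1/3}\\
	&\quad + \sum_{t=1}^n\EE\big[\Delta(a_t,z_t|\thetaopt)-\hDelta_t(a_t,z_t)\big]\,,
\end{align*}
and since $\hDelta_t(a,z)\ge\Delta(a,z|\thetaopt)$ whenever $\thetaopt\in\eE_t$ (which fails with probability at most $t^{-2}$ by \cref{lem:confidence}), the estimation-error term is $\oO(\Delta_{\max})$ exactly as in \cref{eq:error}.

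Next I would bound the conditional information ratio pointwise. Fix $t$, write $z=z_t$, and pick the greedy action $\hat a_t=\argmax_{a\in\aA(z)}\ip{\phi_a^z,\htheta_t}$, chosen Pareto optimal for $z$. As in \cref{thm:regret-global}, optimality of $\hat a_t$ under $\htheta_t$ together with the fact that the inner maximizer over $b$ can be taken in $\pP(z)$ gives $\hDelta_t(\hat a_t,z)\le\max_{\theta\in\eE_t}\max_{b\in\pP(z)}\ip{\phi_b^z-\phi_{\hat a_t}^z,\theta-\htheta_t}$. Squaring and invoking the conditional global alignment constant \cref{eq:conditional-alignment} yields $\hDelta_t(\hat a_t,z)^2\le\alpha(z)\max_{\theta\in\eE_t}\max_{c\in\aA(z)}\|M_c^z(\theta-\htheta_t)\|^2$. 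Because $\Theta=\RR^d$ here (so the basis is trivial) and $\lambda\ge L$, the chain of inequalities behind \cref{eq:I-bound,eq:norm-logdet-bound} gives $\|M_c^z(\theta-\htheta_t)\|^2\le 4\beta_t I_t(c,z)$ for every $\theta\in\eE_t$, hence $\hDelta_t(\hat a_t,z)^2\le 4\alpha(z)\beta_t\max_{c\in\aA(z)}I_t(c,z)$. Applying \cref{lem:psi-kappa-bound} to the action set $\aA(z)$ and optimizing the two-point trade-off between $\hat a_t$ and $c_t=\argmax_{a\in\aA(z)}I_t(a,z)$ — together with the crude bound $\hDelta_t(c_t,z)\le\Delta_{\max}+4B$ from \cref{ass:bounded} — reproduces the estimate $\Psi_{3,t}(\mu_t(z),z)\le 54\,\alpha(z)\,\beta_t\,(\Delta_{\max}+4B)$, the conditional analog of the information-ratio bound at the end of the proof of \cref{thm:regret-global}.

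It then remains to average over the contexts. Since $\beta_t$ is nondecreasing in $t$, we get $\tfrac1n\sum_{t=1}^n\Psi_{3,t}(\mu_t(z_t),z_t)\le 54(\Delta_{\max}+4B)\,\beta_n\,\tfrac1n\sum_{t=1}^n\alpha(z_t)=54(\Delta_{\max}+4B)\,\beta_n\,\bar\alpha_n$, and substituting this into the first display and bounding $\beta_n,\gamma_n\le\oO(d\log(1+n))$ via \cref{lem:total-information} gives $\R_n\le\oO\big(n^{2/3}(\EE[\bar\alpha_n\beta_n]\EE[\gamma_n])^{1/3}\big)$, as claimed. I do not anticipate a genuine obstacle: every ingredient is a verbatim transcription of the proof of \cref{thm:regret-global}, and the only point worth a moment's care is that the per-round information ratio is controlled by $\alpha(z_t)$ — the alignment constant of the \emph{observed} context — so that only the average $\bar\alpha_n$, and not a worst-case $\sup_{z}\alpha(z)$, enters the bound; this is exactly what makes the corollary valid even for adversarially (but predictably) chosen $(z_t)_{t=1}^n$.
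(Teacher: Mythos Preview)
Your proposal is correct and follows exactly the approach the paper indicates: the paper's own proof consists of the single sentence ``The claim follows along the same lines of as \cref{thm:regret-global},'' and you have faithfully carried out that transcription, tracking the context $z_t$ through each step and letting the per-round alignment $\alpha(z_t)$ average to $\bar\alpha_n$ via \cref{thm:ids-regret-general}.
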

\begin{proof}
	The claim follows along the same lines of as \cref{thm:regret-global}.
\end{proof}
The result for the locally observable case is stated similarly.
\begin{corollary}\label{cor:local-upper-conditional}
	For any $\fF_t$-predictable sequence $(z_t)_{t=1}^n$ in $\zZ$, the regret of conditional IDS satisfies
	\begin{align*}
		\R_n \leq \sqrt{8\EE[\bar \alpha_n \beta_n] \EE[\gamma_n] n} + \oO(1)\,,
	\end{align*}
	where $\bar \alpha_n = \frac{1}{n}\sum_{t=1}^n \alpha(\eE_t, z_t)$ is the average local alignment constant for the sequence of confidence sets $(\eE_t)_{t=1}^n$ realized by the algorithm.
\end{corollary}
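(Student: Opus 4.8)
The plan is to mirror the proof of \cref{thm:regret-local}, replacing every geometric object by its context-relativized counterpart and carrying an average over the realized contexts.

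First I would record the conditional version of the regret decomposition underlying \cref{thm:ids-regret-general} with $\kappa = 2$. Since $z_t$ is $\fF_{t-1}$-measurable (it is revealed before $a_t$ is chosen), the tower rule gives $\EE[\hDelta_t(a_t, z_t)] = \EE[\EE_t[\hDelta_t(a_t,z_t)]] = \EE[\hDelta_t(\mu_t(z_t), z_t)]$, and by the definition of the conditional IDS distribution in \cref{eq:conditional-ids} we have $\hDelta_t(\mu_t(z_t), z_t) = \sqrt{\Psi(\mu_t(z_t), z_t)\, I_t(\mu_t(z_t), z_t)}$. Summing over $t$, applying Cauchy--Schwarz and one more application of the tower rule yields
\[
\EE\Big[\sum_{t=1}^n \hDelta_t(a_t, z_t)\Big] \le \sqrt{\EE\Big[\sum_{t=1}^n \Psi(\mu_t(z_t), z_t)\Big]\;\EE[\gamma_n]}\,,
\]
where $\gamma_n = \sum_{t=1}^n I_t(a_t, z_t)$. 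The log-determinant information gain is context-agnostic in the sense that the running covariance $V_t$ accumulates the outer products of the realized feedback maps regardless of which context produced them, so \cref{lem:total-information} still gives $\gamma_n, \beta_n \le \oO(r \log(1+n))$.

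Second I would bound the conditional information ratio almost surely. Fix a round $t$ and condition on $z_t$. The argument of \cref{thm:regret-local} applies verbatim with $\pP(z_t)$, $\pP(\eE_t, z_t)$, $\bar\pP_\eta(\eE_t, z_t)$ and $\alpha_\eta(\eE_t, z_t)$ in place of their unconditional analogues: choosing $c_t(\eta) = \argmax_{a \in \bar\pP_\eta(\eE_t, z_t)} I_t(a, z_t)$, the definition of $\bar\pP_\eta(\eE_t, z_t)$ bounds $\eta^{-1}\hDelta_t(c_t, z_t)$ by the largest plausible reward gap among $\pP(\eE_t, z_t)$; the definition of $\alpha_\eta(\eE_t, z_t)$ converts the square of this into $\alpha_\eta(\eE_t, z_t)\max_{c}\|M_c^{z_t}(\nu - \omega)\|^2$ over plausible parameters $\nu, \omega \in \eE_t$; bounding the parameter deviation by the confidence set and applying the $x \le 2\log(1+x)$ estimate from \cref{eq:norm-logdet-bound} gives $\hDelta_t(c_t, z_t)^2 \le 8\eta^2\alpha_\eta(\eE_t, z_t)\beta_t\, I_t(c_t, z_t)$. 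Since playing $c_t$ is feasible, $\Psi(\mu_t(z_t), z_t) \le \inf_{\eta>0} 8\eta^2\alpha_\eta(\eE_t, z_t)\beta_t = 8\,\alpha(\eE_t, z_t)\,\beta_t$.

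Finally, since $(\beta_t)$ is increasing, $\sum_{t=1}^n \Psi(\mu_t(z_t), z_t) \le 8\beta_n \sum_{t=1}^n \alpha(\eE_t, z_t) = 8 n \beta_n \bar\alpha_n$, so $\EE[\sum_t \Psi(\mu_t(z_t), z_t)] \le 8 n \EE[\beta_n \bar\alpha_n]$; the estimation error $\sum_{t=1}^n \EE[\Delta(a_t, z_t | \theta^*) - \hDelta_t(a_t, z_t)]$ is $\oO(1)$ exactly as in \cref{eq:error}, using the calibration $\eE_t = \eE_{t, 1/t^2}$ from \cref{lem:confidence} and the uniform boundedness of the gaps. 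Combining these with the regret identity $\R_n = \EE[\sum_t \Delta(a_t, z_t|\theta^*)]$ gives the claimed bound. The one point requiring care is the first step: one must verify that $z_t$ being $\fF_{t-1}$-measurable is exactly what makes the tower-rule manipulations valid and that the conditional minimizer $\mu_t(z_t)$ still satisfies the conditional information-ratio identity, so that Cauchy--Schwarz applies unchanged; a secondary caveat, implicit in the hypothesis, is that the bound is vacuous unless the game is locally observable for every realized context, i.e. $\alpha(\eE_t, z_t) < \infty$ almost surely.
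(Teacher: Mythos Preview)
Your proposal is correct and is precisely the approach the paper takes: the paper's proof reads in full ``Along the lines of \cref{thm:regret-local},'' and you have spelled out exactly that adaptation, replacing $\pP(\eE_t)$, $\bar\pP_\eta(\eE_t)$ and $\alpha(\eE_t)$ by their context-conditioned versions and relying on the $\fF_t$-predictability of $z_t$ to make the tower-rule step in \cref{thm:ids-regret-general} go through. Your caveats (measurability of $z_t$ and the need for $\alpha(\eE_t,z_t)<\infty$ on each realized context) are also the ones the paper flags just before the corollary.
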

\begin{proof}
	Along the lines of \cref{thm:regret-local}.
\end{proof}
In the contextual linear bandit setting, $\alpha(\eE_t, z) \leq 4$ holds for all $z \in \zZ$ (cf.~\cref{tab:examples}). Therefore, \cref{cor:local-upper-conditional} recovers the same bounds as the optimistic approach in the linear contextual bandit setting \citep[cf.,][]{abbasi2011improved}. Moreover, we immediately get a regret bound for the contextual extension of the dueling bandit setting in \cref{thm:regret-local}.

\subsection{Using the Context Distribution for Exploration}

Perhaps surprisingly, the contextual case allows for much weaker conditions under which the learner can achieve sublinear regret. This is possible if the learner exploits the distribution of contexts. Here we study the case where the context follows a fixed and known distribution $\zdistr \in \sP(\zZ)$. We only address the case where $\zdistr$ is known exactly. If the distribution is unknown, it is natural to replace $\zdistr$ with an online estimate of the context distribution \citep[cf.][]{tirinzoni2020asymptotically}. Extending the analysis to unknown context distribution is left as an important question for future work.

\begin{example}[Non-Informative Context]
	Consider the case where for some $z \in \zZ$ the learner obtains no information, i.e.~$M_a^z = 0$ for all $a \in \aA(z)$. In this case, the only sensible choice is the greedy action~$a^*(\hat \theta_t)$. The learner has to explore in rounds where information is available and the sampling distribution needs to be sufficiently diverse to account for rounds where the learner is forced to play greedily. Note that while there can be vanishing information gain in {\em some} rounds, the {\em expected} information gain with respect to the context distribution $\zdistr$ is non-zero. A natural application is in customer surveys: Clients who agree to provide feedback can be asked specifically targeted questions, whereas feedback from other customers is never observed.
\end{example}
\begin{example}[Greedy Exploration]
	Another interesting case is when feedback from the optimal action in each context is sufficiently diverse to allow estimation of the parameter \emph{without} further exploration. In such cases, the greedy algorithm can be highly effective. This effect has been studied in the bandit literature before \citep{bastani2017mostly,hao2019adaptive}. Conceptually, one can think of the context as part of the action space, where the sampling distribution is imposed by the environment.
\end{example}

%

The conditional IDS distribution (\cref{eq:conditional-ids}) is independent of the context distribution. It is easy to see that it behaves sub-optimally in both examples, and the bounds in \cref{cor:global-upper-conditional,cor:local-upper-conditional} become vacuous when a context occurs where the information gain is zero for all actions. To exploit the randomness of the context in the regret bounds, we include the contextual distribution in the information ratio. By optimizing the joint distribution over action and context, we obtain an IDS algorithm that makes use of the contextual distribution for exploration.

\newcommand{\sPZ}{\sP_\times(\aA(\zZ))}

Denote by $\sPZ = \times_{z \in \zZ} \sP(\aA(z))$ the set of probability kernels that assign each context $z$ a sampling distribution over $\aA$. Suggestively we write $\xi(a|z) = \xi(a,z)$ for elements in $\sPZ$. For context distribution $\zdistr \in \sP(\zZ)$ and kernel $\xi \in \sPZ$, we extend the definition of the gap estimates in \cref{eq:gap-contextual} to
\begin{align}
	\hat \Delta_t(\xi, \chi) \eqdef \int_\zZ  \int_{\aA(z)} \hat \Delta_t(a,z) d\xi(a|z)d\chi(z) \,.\label{eq:gallery-gap-integrated}
\end{align}
As information gain we use \cref{eq:info-contextual} with the same convention that 
\[I_t(\xi, \zdistr) = \int_\zZ \int_{\aA(z)} I_t(a,z) d\xi(a| z) d\zdistr(z)\,.\]
For $a \in \aA(\zZ)$ we also write $\hat \Delta_t(a, \chi) = \int_\zZ \hat \Delta_t(a(z),z) d\chi(z)$ and $I_t(a, \chi)= \int_\zZ I_t(a(z),z) d\chi(z)$.
\emph{Contextual IDS} is defined to optimize the conditional distribution  $\xi \in \sP_\times(\aA(\zZ))$,
\begin{align}
	\xi_t \eqdef \argmin_{\xi \in \sPZ} \left\{\Psi_t(\xi, \chi) = \frac{\hat \Delta_t(\xi, \zdistr)^2}{I_t(\xi,\zdistr)} \right\}\,.\label{eq:contextual-ids}
\end{align}
The action $a_t \sim \xi_t^\IDS(\cdot | z_t)$ is sampled from the conditional distribution corresponding to the observed context $z_t$. 
In the joint minimization of the information ratio, the contextual distribution contributes to exploration and a smaller information ratio. By Jensen's inequality and \cref{lem:ratio-convexity}, the contextual information ratio is never worse then the conditional information ratio:
\begin{align*}
 \min_{\xi \in \sPZ} \Psi_t(\xi, \chi) \leq \min_{\xi \in \sPZ} \int_{\zZ}  \Psi_t(\xi(z),z) d\chi(z) = \int_{\zZ} \min_{\mu \in \sP(\aA(z))} \Psi_t(\mu, z) d\chi(z)
\end{align*}

\paragraph{Regret Bounds}
Before presenting the regret bounds, we extend the definition of the alignment constant. Let $\pP(\zZ) \eqdef \times_{z \in \zZ}{\pP(z)}$ be the set of functions that map a context $z$ to a Pareto optimal action $a(z) \in \pP(z)$.  For context distribution $\zdistr \in \sP(\zZ)$, the global alignment constant is
\begin{align}
	\alpha(\zdistr) \eqdef \max_{\omega: \zZ\rightarrow \RR^{d}}\max_{a,b \in \pP(\zZ)}  \min_{c \in \aA(\zZ)} \frac{\big(\int_\zZ \ip{\phi_{a(z)}^z - \phi_{b(z)}^z,\omega^z} d\zdistr(z)\big)^2}{\int_{\zZ} \int_{\zZ} \|M_{c(z')}^{z'} \omega^{z}\|^2  d\zdistr(z) d\zdistr(z')}\,.\label{eq:contextual-alignment-glob}
\end{align}

For the locally observable case, the definition of plausible maximizers is extended to the product space $\pP(\eE, \zZ) \eqdef \times_{z \in \zZ} \pP(\eE, z)$ and $\bar \pP_\eta(\eE, \zZ) \eqdef \times_{z \in \zZ} \bar \pP_\eta(\eE, z)$.
Correspondingly, the local alignment constant for $\eE \subset \Theta$ is 
\begin{align}
	\alpha_\eta(\eE, \zdistr) \eqdef \max_{\omega \in \RR^{d\times \zZ}}\max_{a,b \in \pP(\eE, \zZ)}  \min_{c \in \bar \pP_\eta(\eE, \zZ)} \frac{\big(\int_\zZ \ip{\phi_{a(z)}^z - \phi_{b(z)}^z,\omega^z} d\zdistr(z)\big)^2}{\int_{\zZ} \int_{\zZ} \|M_{c(z')}^{z'} \omega^{z}\|^2  d\zdistr(z) d\zdistr(z')}\,.\label{eq:contextual-alignment-loc}
\end{align}
Finally, $\alpha(\eE, \zdistr) \eqdef \inf_{\eta > 0} \eta^2 \alpha_\eta(\eE, \zdistr)$.

Note that, reassuringly, the alignment constant $\alpha(\eE, \zdistr)$ is finite as long as any direction $\phi_a^z - \phi_b^z$ for context $z$ and Pareto optimal actions $a,b \in \pP(z)$ can be estimated under \emph{some} context $z' \in \zZ$ that occurs with positive probability $\chi(z') > 0$. Specifically,
\begin{align*}
	\alpha_\eta(\eE, \zdistr) \leq  \max_{z \in \zZ, \omega \in \RR^d} \max_{a,b \in \pP(\eE, \zZ)}  \min_{z' \in \zZ, c \in \bar \pP_\eta(\eE, \zZ)} \frac{\ip{\phi_{a(z)}^z - \phi_{b(z)}^z,\omega}^2}{ \chi(z') \|M_{c(z')}^{z'} \omega\|^2}\,.
\end{align*}

%


%

\begin{theorem}\label{thm:regret-global-contextual}
	For fixed $\zdistr \in \sP(\zZ)$ and context sequence $(z_t)_{t=1}^n$ sampled independently from $\zdistr$, the regret of contextual IDS (\cref{eq:contextual-ids}) satisfies
	\begin{align*}
		\EE[\R_n] \leq n^{2/3} \left(54 \alpha(\chi) \EE[\beta_n ]\EE[\gamma_n]\right)^{1/3} + \oO(\Delta_{\max})\,,
	\end{align*}
	where the expectation is over the random context.
\end{theorem}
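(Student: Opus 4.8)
The plan is to mirror the proof of \cref{thm:regret-global} almost verbatim, treating the context--action pair $(z_t,a_t)$ as the ``action'' and the joint law $\xi_t\otimes\chi$ on $\zZ\times\aA$ as the sampling distribution. First I would observe that the chain of inequalities behind \cref{thm:ids-regret-general} transfers unchanged to this setting: since the contexts are i.i.d.\ from $\chi$ and independent of the history, conditionally on $\fF_{t-1}$ one has $z_t\sim\chi$ and then $a_t\sim\xi_t(\cdot\,|z_t)$, so $\EE_t[\hDelta_t(a_t,z_t)]=\hDelta_t(\xi_t,\chi)$, $\EE_t[I_t(a_t,z_t)]=I_t(\xi_t,\chi)$, and $\hDelta_t(\xi_t,\chi)=\big(\Psi_{3,t}(\xi_t,\chi)\,I_t(\xi_t,\chi)\big)^{1/3}$ by definition of the generalized information ratio in \cref{eq:contextual-ids}. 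Applying the tower rule, Hölder and Jensen exactly as in the proof of \cref{thm:ids-regret-general} with $\kappa=3$ then gives $\EE[\sum_{t=1}^n\hDelta_t(a_t,z_t)]\le n^{2/3}\,\EE[\bar\alpha_n]^{1/3}\,\EE[\gamma_n]^{1/3}$, where $\gamma_n=\sum_t I_t(a_t,z_t)$, $\alpha_t$ is any almost-sure upper bound on $\Psi_{3,t}(\xi_t,\chi)$ and $\bar\alpha_n=\tfrac1n\sum_t\alpha_t$. The estimation error $\EE[\sum_t(\Delta(a_t,z_t)-\hDelta_t(a_t,z_t))]$ is handled as in \cref{eq:error} through the calibrated confidence sets $\eE_t=\eE_{t,1/t^2}$ of \cref{lem:confidence}, contributing $\oO(\Delta_{\max})$, and $\gamma_n$ and $\beta_n$ are bounded by $\oO(r\log(1+n))$ via \cref{lem:total-information}, because the log-determinant potential telescopes regardless of which contexts appear (only $\|M_a^z\|_2\le L$ enters).

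The crux is the bound on the contextual information ratio $\Psi_{3,t}(\xi_t,\chi)$. I would take $\hat a_t(z)=\argmax_{a\in\aA(z)}\ip{\phi_a^z,\hat\theta_t}$ as a greedy kernel, chosen Pareto optimal so that $\hat a_t(z)\in\pP(z)$, and for each $z$ pick $b^*(z)\in\pP(z)$ and $\theta^z\in\eE_t$ attaining $\hDelta_t(\hat a_t(z),z)$ (restricting to $\pP(z)$ loses nothing, exactly as in \cref{thm:regret-global}). Greediness of $\hat a_t(z)$ for $\hat\theta_t$ gives $\ip{\phi_{b^*(z)}^z-\phi_{\hat a_t(z)}^z,\hat\theta_t}\le0$, so writing $\omega^z\eqdef\theta^z-\hat\theta_t$ we get $\ip{\phi_{b^*(z)}^z-\phi_{\hat a_t(z)}^z,\omega^z}\ge\hDelta_t(\hat a_t(z),z)\ge0$ and hence $\hDelta_t(\hat a_t,\chi)\le\int_\zZ\ip{\phi_{b^*(z)}^z-\phi_{\hat a_t(z)}^z,\omega^z}\,d\chi(z)$. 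Now $(b^*,\hat a_t)\in\pP(\zZ)\times\pP(\zZ)$ and the direction field $\omega$ form an admissible choice in \cref{eq:contextual-alignment-glob}, so there is a kernel $c_t\in\aA(\zZ)$ with $\hDelta_t(\hat a_t,\chi)^2\le\alpha(\chi)\int_\zZ\int_\zZ\|M_{c_t(z')}^{z'}\omega^z\|^2\,d\chi(z)\,d\chi(z')$. The decisive point is that, as in \cref{eq:I-bound}--\cref{eq:norm-logdet-bound} and using $\lambda\ge L$, $\|M_{c_t(z')}^{z'}\omega^z\|^2\le\|\omega^z\|_{V_t}^2\,\|M_{c_t(z')}^{z'}\|_{V_t^{-1}}^2\le4\beta_t\,I_t(c_t(z'),z')$, and this bound is uniform in $z$, because $\theta^z\in\eE_t$ forces $\|\omega^z\|_{V_t}^2\le\beta_t$ for every $z$. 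Hence the double integral collapses to $4\beta_t\int_\zZ I_t(c_t(z'),z')\,d\chi(z')=4\beta_t\,I_t(c_t,\chi)$, giving $\hDelta_t(\hat a_t,\chi)^2\le4\alpha(\chi)\beta_t\,I_t(c_t,\chi)$. I would finish by using the context-wise two-point kernel $(1-p)\hat a_t+p\,c_t$, bounding $I_t\big((1-p)\hat a_t+p\,c_t,\chi\big)\ge p\,I_t(c_t,\chi)$ and $\hDelta_t(c_t(z),z)\le\Delta_{\max}+4B$, optimizing over $p$ exactly as in \cref{eq:Psi_3 bound}, and invoking \cref{lem:psi-kappa-bound} on the convex set $\sPZ$ — legitimate since $\xi\mapsto\hDelta_t(\xi,\chi)$ and $\xi\mapsto I_t(\xi,\chi)$ are linear, so $\Psi_{\kappa,t}(\cdot,\chi)$ has the same linear-over-linear form as in the non-contextual case. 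This yields $\Psi_{3,t}(\xi_t,\chi)\le54\,\alpha(\chi)\,\beta_t(\Delta_{\max}+4B)$.

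Combining the two parts with $\alpha_t=54\,\alpha(\chi)\beta_t(\Delta_{\max}+4B)$, using that $\beta_t$ is nondecreasing so $\bar\alpha_n\le54\,\alpha(\chi)(\Delta_{\max}+4B)\beta_n$, and bounding $\beta_n,\gamma_n$ through \cref{lem:total-information}, gives $\EE[\R_n]\le n^{2/3}\big(54\,\alpha(\chi)(\Delta_{\max}+4B)\,\EE[\beta_n]\,\EE[\gamma_n]\big)^{1/3}+\oO(\Delta_{\max})$, which is the stated bound once one notes, as in \cref{thm:regret-global}, that \cref{ass:bounded} makes $\Delta_{\max}+4B$ an absolute constant. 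The one genuinely new ingredient compared with \cref{thm:regret-global} is the collapse of the inner integral in the denominator of \cref{eq:contextual-alignment-glob}, which is exactly what makes that two-layer averaging the ``correct'' definition of the contextual alignment constant, and I expect getting that identification right to be the main obstacle; the remaining new point — that the support-reduction argument and \cref{lem:psi-kappa-bound} survive the passage from $\sP(\aA)$ to $\sPZ$ — follows immediately from the linearity of the gap and information functionals in the kernel.
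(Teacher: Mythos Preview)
Your proposal is correct and follows essentially the same approach as the paper: define the greedy kernel $\hat a_t(z)$, bound $\hDelta_t(\hat a_t,\chi)^2$ via the contextual alignment constant and the $\|M\omega\|^2\le\beta_t\|M\|_{V_t^{-1}}^2\le4\beta_t I_t$ chain, then trade off against an informative kernel $c_t$ and invoke \cref{lem:psi-kappa-bound}. You are in fact more explicit than the paper about two points it leaves implicit---that the inner $z$-integral collapses because the bound $\|\omega^z\|_{V_t}^2\le\beta_t$ is uniform in $z$, and that \cref{lem:psi-kappa-bound} carries over to $\sPZ$ because the gap and information functionals remain linear in the kernel---both of which are exactly the right observations.
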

\begin{proof}
	The proof follows along the lines of \cref{thm:regret-global}. Define $\hat a_t \in \pP(\zZ)$ by $\hat a_t(z) = \argmax_{a \in \pP(z)} \ip{\phi_a^z,\theta_t}$. It follows from the same steps in the aforementioned theorem and the definition of $\alpha(\chi)$, 
	\begin{align*}
		\hat \Delta_t(\hat a_t, \zdistr)^2 &\leq \max_{b \in \pP(\zZ)}  \left(\int_{\zZ} \max_{\theta \in \eE_t} \ip{\phi_{b(z)}^z - \phi_{\hat a_t(z)}^z, \theta - \hat \theta_t} d\chi(z)\right)^2 \\
		&\leq \alpha(\chi) \max_{\theta \in \eE_t}\max_{c \in \aA(\zZ)} \int_\zZ \int_\zZ \|M_{c(z')}^{z'} (\theta^z - \hat \theta_t)\|^2 d\chi(z') d\chi(z)\\
		&\leq 4 \beta_t \max_{c \in \aA(\zZ)} \int_{\zZ} \log \det(\eye_m + M_c^z V_t^{-1} M_c^{z\T}) d\chi(z)\\
		&\leq 8 \max_{c \in \aA(\zZ)} \beta_t I_t(c, \chi)\,.
	\end{align*}
	The information ratio is bounded by optimizing the trade-off between the greedy action $\hat a_t$ and $c_t = \argmax_{c \in \aA(\zZ)} I_t(c, \zdistr)$. Similar to \cref{eq:Psi_3 bound}, we obtain
	\begin{align*}
	\Psi_{3,t}(\xi_t, \chi) \eqdef \frac{\hat \Delta_t(\xi_t, \chi)^3}{I_t(\xi_t,\chi)} \leq  \alpha(\chi) \beta_t( \Delta_{\max}+ 4B)
	\end{align*}
	The proof is concluded with \cref{thm:ids-regret-general} and bounding the estimation error.
\end{proof}
The result for the locally observable case is stated in the next theorem.
\begin{theorem}\label{thm:regret-local-contextual}
	For fixed $\zdistr \in \sP(\zZ)$ and context sequence $(z_t)_{t=1}^n$ sampled independently from $\zdistr$, the regret of contextual IDS (\cref{eq:contextual-ids}) satisfies,
	\begin{align*}
		\EE[\R_n] \leq \sqrt{8 \EE[\bar \alpha_n \beta_n] \EE[\gamma_n] n} + \oO(\Delta_{\max})\,,
	\end{align*}
	where $\bar \alpha_n = \frac{1}{n}\alpha(\eE_t, \chi)$ is the average local alignment constant for the sequence of confidence sets $(\eE_t)_{t=1}^n$ realized by the algorithm.
\end{theorem}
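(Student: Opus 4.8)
The plan is to mirror the proof of \cref{thm:regret-local} verbatim, replacing every occurrence of the non-contextual gap estimate, information gain, confidence-set norm argument and alignment constant by its contextual counterpart, so that the only substantive new ingredient is the bound on the \emph{contextual} information ratio $\Psi_{2,t}(\xi_t,\chi)$. First I would invoke \cref{thm:ids-regret-general} with $\kappa=2$ applied to contextual IDS (whose sampling distribution $\xi_t$ myopically minimizes $\Psi_{2,t}(\xi,\chi)$ by \cref{eq:contextual-ids}), together with the estimation-error control \cref{eq:error} (which is unchanged since $\hat\Delta_t(a,z)$ is still a high-probability upper bound on the true gap, via \cref{lem:confidence}), to reduce the claim to showing $\Psi_{2,t}(\xi_t,\chi)\le 8\,\alpha(\eE_t,\chi)\,\beta_t$ almost surely.

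The heart of the argument is to exhibit a feasible kernel achieving this bound, exactly as in \cref{thm:regret-local}. Fix $\eta>0$ and let $c_t=c_t(\eta)\in\bar\pP_\eta(\eE_t,\zZ)$ be the function of contexts maximizing $z\mapsto I_t(\cdot,z)$ pointwise (so $c_t\in\aA(\zZ)$ and for each $z$ it lies in $\bar\pP_\eta(\eE_t,z)$). By the definition of $\bar\pP_\eta(\eE_t,z)$ applied under the integral, $\eta^{-1}\hat\Delta_t(c_t,z)\le\max_{a,b\in\pP(\eE_t,z)}\max_{\theta\in\eE_t}\ip{\phi^z_a-\phi^z_b,\theta}$ pointwise in $z$, and then (as in \cref{eq:local-1}) one can pick $\nu\in\eE_t$ for which $b$ is optimal, so that the right side is bounded by $\max_{a,b\in\pP(\eE_t,z)}\max_{\theta,\nu\in\eE_t}\ip{\phi^z_a-\phi^z_b,\theta-\nu}$. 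Integrating against $\chi$, squaring, and applying the definition of the contextual local alignment constant \cref{eq:contextual-alignment-loc} with the choice $\omega^z=\theta^z-\nu^z$ (the double integral in the denominator of \cref{eq:contextual-alignment-loc} is precisely what appears when one expands the squared integral of inner products and bounds each by its alignment ratio) gives
\begin{align*}
\hat\Delta_t(c_t,\chi)^2\le\eta^2\alpha_\eta(\eE_t,\chi)\max_{\theta,\nu\in\eE_t}\max_{c\in\bar\pP_\eta(\eE_t,\zZ)}\int_\zZ\int_\zZ\|M^{z'}_{c(z')}(\theta^z-\nu^z)\|^2\,d\chi(z')\,d\chi(z)\,.
\end{align*}
Then the norm-to-information-gain step is identical to the one in \cref{thm:regret-local}: using \cref{eq:W-def} (with $W=\eye_d$ here since $\Theta=\RR^d$), Cauchy--Schwarz and the confidence set $\eE_t$ one gets $\|M^{z'}_{c(z')}(\theta^z-\nu^z)\|^2\le 2\beta_t\|M^{z'}_{c(z')}\|_{V_t^{-1}}^2$, and by the argument in \cref{eq:norm-logdet-bound} this is at most $4\beta_t I_t(c(z'),z')$; integrating and using that $c_t(z')$ maximizes $I_t(\cdot,z')$ yields $\hat\Delta_t(c_t,\chi)^2\le 8\eta^2\alpha_\eta(\eE_t,\chi)\beta_t I_t(c_t,\chi)$. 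Hence the Dirac kernel on $c_t$ has information ratio at most $8\eta^2\alpha_\eta(\eE_t,\chi)\beta_t$; taking the infimum over $\eta$ and using $\alpha(\eE_t,\chi)=\inf_{\eta>0}\eta^2\alpha_\eta(\eE_t,\chi)$ together with the optimality of $\xi_t$ gives $\Psi_{2,t}(\xi_t,\chi)\le 8\alpha(\eE_t,\chi)\beta_t$. Substituting into \cref{thm:ids-regret-general}, bounding the estimation error by $\oO(\Delta_{\max})$, and using monotonicity of $(\beta_t)$ and \cref{lem:total-information} for $\gamma_n$ finishes the proof, with $\bar\alpha_n=\frac1n\sum_{t=1}^n\alpha(\eE_t,\chi)$.

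The main obstacle I anticipate is purely bookkeeping rather than conceptual: one must be careful that the quantity appearing after squaring $\int_\zZ\ip{\phi^z_{a(z)}-\phi^z_{b(z)},\theta^z-\nu^z}d\chi(z)$ and bounding it is genuinely the denominator of \cref{eq:contextual-alignment-loc} — i.e.\ that the cross-context double integral $\int\int\|M^{z'}_{c(z')}\omega^z\|^2 d\chi(z)d\chi(z')$ is exactly what the alignment constant was defined to control, so no stray context-coupling term is lost — and that the maximizing action-function $c_t$ may legitimately be chosen measurably in $z$ (which is immediate since $\bar\pP_\eta(\eE_t,z)$ and $I_t(\cdot,z)$ are measurable in $z$ and $\aA(z)$ is compact, so $c_t\in\aA(\zZ)=\times_{z}\aA(z)$ is a valid element of the product space). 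Everything else is a line-by-line transcription of \cref{thm:regret-local}.
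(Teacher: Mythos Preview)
Your proposal is correct and follows essentially the same route as the paper's proof: both exhibit the Dirac kernel on the most informative action-function $c_t\in\bar\pP_\eta(\eE_t,\zZ)$, bound $\hat\Delta_t(c_t,\chi)$ via the integrated version of \cref{eq:local-1}, apply the contextual alignment constant \cref{eq:contextual-alignment-loc}, and convert the resulting double-integral norm term into $I_t(c_t,\chi)$ via the Cauchy--Schwarz/log-determinant argument of \cref{eq:norm-logdet-bound}, yielding $\Psi_{2,t}(\xi_t,\chi)\le 8\alpha(\eE_t,\chi)\beta_t$. Your explicit $\max_{c\in\bar\pP_\eta}$ in the intermediate display is in fact slightly cleaner than the paper's shortcut of writing $c_t$ directly there, and your measurability remark is a fair (if minor) point that the paper leaves implicit.
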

\begin{proof}
	Again, we generalize the proof of \cref{thm:regret-local}. Denote $\eE_t = \times_{z \in \zZ} \eE_t$. Let $c_t = \argmax_{a \in \bar \pP(\eE_t, \zZ)} I_t(a)$ be the most informative action in $\bar \pP_\eta(\eE_t, \zZ)$ for some $\eta > 0$. For the gap estimate of $c_t$ we find
	\begin{align*}
	\eta^{-1}	\hat \Delta_t(c_t) &\leq \max_{a,b \in \pP(\eE_t, \zZ)} \int_\zZ \max_{\theta \in \eE_t}\ip{\phi_a^z - \phi_b^z, \theta} d\chi(z) \leq \max_{a,b \in \pP(\eE_t)} \max_{\theta, \nu \in \eE_t(\zZ)} \int_\zZ \ip{\phi_a^z - \phi_b^z, \theta - \nu} d\chi(z) \,.
	\end{align*}
	Consequently we can bound the information ratio for the action $c_t$ as follows,
	\begin{align*}
		\hat \Delta_t(c_t)^2 &\leq  \eta^2\alpha_\eta(\eE_t, \chi)  \max_{\theta, \nu \in \eE_t(\zZ)} \int_\zZ \int_{\zZ} \|M_{c_t}^{z'}(\theta^z - \nu^z)\|^2 d\chi(z') d\chi(z) \\
		&\leq 2  \eta^2\alpha_\eta(\eE_t, \chi) \beta_t  \int_\zZ   \lambda_{\max}\big(M_{c_t(z)}^{z} V_t^{-1} M_{c_t(z)}^{z^\T} \big) d\chi(z) \leq 8 \eta^2 \alpha_\eta(\eE_t, \chi) \beta_t I_t(c_t, \chi)\,.
	\end{align*}
	We conclude that $\Psi_t(c_t,\chi) \leq 8 \inf_{\eta > 0}\eta^2 \alpha_\eta(\eE_t, \chi)\beta_t$ and the result follows.
\end{proof}

\paragraph{Computation} Note that optimizing the conditional distribution is computationally more demanding than optimizing the conditional information ratio. Since the information ratio is a convex function of the distribution (\cref{lem:ratio-convexity}), we can optimize the conditional distribution using standard convex solvers. A particularly simple implementation uses the Frank-Wolfe algorithm \citep{frank1956algorithm}, that only relies on solving linear functions over $\sPZ$. The contextual IDS algorithm with Frank-Wolfe is summarized in \cref{alg:ids-context}. The gradient is
\begin{align*}
	\nabla_\xi \Psi(\xi, \chi) = \Psi(\xi,\chi) \left(\frac{2 \hat \Delta(\cdot, \cdot)}{\hat \Delta_t(\xi, \chi)} - \frac{\hat I_t(\cdot, \cdot)}{I_t(\xi, \chi)}\right) \in \RR^{\aA \times \zZ}
\end{align*}
Convergence of Frank-Wolfe is guaranteed assuming that the gradient is Lipschitz  \citep[Theorem~1]{jaggi2013revisiting}. In this case, the iterates $\xi_t^{(k)}$ approach the exact IDS distribution $\xi_t$ at rate
\begin{align*}
	\Psi(\xi_t^{(k)}, \chi) - \Psi(\xi_t, \chi) \leq  \oO\left(\frac{1}{k+2}\right)\,.
\end{align*}

Unfortunately, smoothness of the gradient $\nabla_{\xi} \Psi_t(\xi, \chi)$ is not guaranteed in general, in particular if for some $\xi$ the information gain $I_t(\xi, \chi) \approx 0$ is vanishing. On the other hand, for our choice of information gain and gap estimate, the gradient is smooth around the IDS distribution $\xi_t$. This follows from $\hat \Delta_t(a) \geq \Omega(t^{-1/2})$ and the bound on the information ratio, which implies that the information gain is $I_t(\xi_t) \geq \Omega(1/t)$. More explicitly, we can ensure smoothness by bounding the information gain away from zero. Define $I_t^\epsilon(a,z) = I_t(a,z) + \epsilon$. Using $\epsilon=1/t$ ensures that the gradient is $\oO(1/t^2)$-Lipschitz while only marginally increasing the total information gain. This suggests that $K=t^2$ iterations of Frank-Wolfe suffice to obtain a good approximation of the contextual IDS distribution.


\begin{algorithm2e}[t]
	\KwIn{Action set $\aA$, context set $\zZ$, context distribution $\zdistr \in \sP(\zZ)$, action-context features $\phi_a^z$, feedback maps $M_a^z$.}
	\caption{Contextual IDS with Frank-Wolfe} \label{alg:ids-context}
	\For{$t=1,2,3, \dots, n$}{
		$\hat \Delta_{t}(a, c) \gets \max_{\theta \in \eE_t} \max_{b \in \aA(z)} \ip{\phi_b^z - \phi_a^z, \theta}$ \tcp*{gap-estimates}
		$I_t(a,z) \gets \frac{1}{2} \log \det(\eye_m + M_a^z V_t^{-1}M_a^{z,\T})$\tcp*{information gain}
		$\xi_t^{(1)}(a,z) \gets 1/|\aA|, \,\forall a \in \aA, z \in \zZ$\;
		\For{$k=1,\dots,t^2$}{
			$\bar \Delta^{(k)} \gets \sum_{z \in \zZ, a \in \aA} \chi(z) \xi_t^{(k-1)}(a,z) \hat \Delta_t(a,z)$\;
			$\bar I^{(k)} \gets \sum_{z \in \zZ, a \in \aA} \chi(z) \xi_t^{(k-1)}(a,z) I_t(a,z)$\;
			\tcp{Gradient $\nabla_{\xi}\Psi_t(\xi, \chi)|_{\xi=\xi_{t}^{(k-1)}}$, up to a positive factor:}
			$G^{(k)}(a,z) \gets 2 \chi(z) \hat \Delta_t(a,z) \bar \Delta^{(k)} \bar I^{(k)} - \chi(z) I_t(a,z) (\bar \Delta^{(k)})^2$\;
			\tcp{Frank-Wolfe step}
			$a^*(z) \gets \argmin_{a \in \aA} G^{(k)}(a,z)$, $\quad\forall z \in \zZ$\;
			$\xi_t^{(k)}(a,z) \gets (1-\frac{2}{k+2}) \xi_t^{(k-1)}(a,z) + \frac{2}{k+2} \chf{a = a^*(z)},\quad \forall a \in \aA, z \in \zZ$\;
		}
		\texttt{Observe context:} $z_t \sim \zdistr$\;
		\texttt{Sample action:} $a_t \sim \xi_t^{(k)}(\cdot, z_t)$\;
		Choose $a_t$, observe $y_t = \ip{M_{a_t}^{z_t}, \theta} + \epsilon_t$\;
	}
\end{algorithm2e}

\paragraph{Tighter Gap Estimates and Alignment Constants}
We can obtain a tighter definition of the gap estimates using integrated reward features $\phi_\xi^\zdistr \eqdef \int_\zZ \int_{\aA(z)} \phi_a^z d\xi(a| z) d\zdistr(z)$ and defining the gap estimates $\bar \Delta_t(\xi, \chi) \eqdef \max_{\theta \in \eE_t} \max_{b \in \aA(\zZ)} \ip{\phi^\chi_{b} - \phi^\chi_\xi, \theta}$. By Jensen's inequality we get $\bar \Delta_t(\xi, \chi) \leq \hat \Delta_t(\xi, \chi)$. It is easy to verify that the definition of the alignment constant can be simplified and tightened for this choice of gap estimate: 
\begin{align}
\bar \alpha_\eta(\eE, \zdistr) \eqdef \max_{\omega \in \RR^d}\max_{a,b \in \pP(\eE, \zZ)}  \min_{c \in \bar \pP_\eta(\eE, \zZ)} \frac{\int_\zZ \ip{\phi_{a(z)}^z - \phi_{b(z)}^z,\omega}^2 d\zdistr(z)}{\int_{\zZ} \|M_{c(z)}^z \omega\|^2  d\zdistr(z)}\,.\label{eq:alpha-bar}
\end{align}
Jensen's inequality implies the natural property $\bar\alpha_\eta(\eE , \zdistr) \leq \int_\zZ \alpha_\eta(\eE, z) d\chi(z)$ where $\alpha(\eE, z)$ is the conditional alignment constant in \cref{eq:conditional-alignment}. Moreover, the definition is never worse than the contextual alignment constant in \cref{eq:contextual-alignment-loc}, i.e.~$\bar\alpha(\eE , \zdistr) \leq \alpha(\eE, \zdistr)$ where $\alpha(\eE, \zdistr)$. The downside is that computing the tighter gap estimate $\bar \Delta_t(\xi, \chi)$ appears to require a search over $\aA(\zZ)$, which leads computationally intractable methods in general.

\section{Kernelized Partial Monitoring}\label{sec:kernelized}
\newcommand{\fstar}{f^{\star}}

\looseness -1 Linear partial monitoring captures the relationship between reward and observation through linear reward features and linear feedback maps. 
Whereas we have focused on the finite-dimensional setting so far, we now consider the infinite-dimensional setting.
In particular, let $\hH$ be a Hilbert space over $\RR$ with norm $\|\cdot \|_{\hH}$ and inner product $\ip{\cdot, \cdot}_{\hH}$. We identify the unknown true reward function with a vector in $\fstar \in \hH$ that satisfies the known bound $\|\fstar\|_\hH \leq B$. The reward features $\phi_a \in \hH$ correspond to linear functionals and we adopt the notation $\fstar(a) = \ip{\phi_a, \fstar}_{\hH}$. Similarly, the feedback maps are modeled as linear operators $M_a : \hH \rightarrow \RR^m$ that observe the parameter on the subspace $\im(M_a^*) \subset \hH$, where $M_a^*$ is the adjoint mapping.
While this adds a great amount of flexibility, it poses two additional challenges.

First, computing the least-square estimate in the feature space requires $\Omega(d^2)$ memory and computation, which becomes prohibitive if $d$ is large or infinite. Kernelized methods circumvent this limitation using a \emph{representer theorem} \citep{kimeldorf1970correspondence,girosi1998equivalence,scholkopf2001generalized}. Such results express all quantities of interest as inner products that are specified by a known kernel function. Kernel methods are widely used in machine learning \citep{scholkopf2002learning}, and several kernelized bandit algorithms have been analyzed \citep{srinivas2009gaussian,abbasi2012online,valko2013kernelised,valko2014spectral,chaudhuri2016phased}. More broadly, by interpreting kernel regression as a Gaussian process \citep{rassmussen2004gp,kanagawa2018gaussian}, the field of Bayesian optimization is understood to solve a closely related problem \citep{mockus1982bayesian,srinivas2009gaussian}.

A second imminent issue that arises when the feature dimension is large or infinite, is that the dimension renders the previous results vacuous. In the literature on kernelized bandits, this challenge is circumvented by replacing the dimension by an appropriate notion of an \emph{effective dimension} \citep{valko2013kernelised}, or bounding the log-determinant using the eigendecay of the covariance matrix \citep{srinivas2009gaussian,vakili2020information}. 

\subsection{Kernel Regression for Partial Monitoring Feedback}

The least-squares estimate is defined over the Hilbert space $\hH$, using the observations $y_t = M_{a_t} \fstar + \epsilon_t$ and regularizer $\lambda > 0$,
\begin{align}
	\hat f_t \eqdef \argmin_{f \in \hH} \sum_{s=1}^{t-1} \|M_{a_s}f - y_s\|^2 + \lambda \|f\|_{\hH}^2\,. \label{eq:rkhs-ls}
\end{align}
For simplicity of the exposition, we present the version without additional parameter constraints. We remark that certain types of affine linear constraints can be handled by solving a finite-dimensional SOC-constrained problem \citep[e.g.][]{bagnell2015learning,aubin2021handling}
As usual, the regularized least-squares solution is always contained in a finite-dimensional subspace spanned by the data,
\begin{align*}
	\hat f_t \in \opspan{M_{a_s}^\T}{s \in [t-1]}\,.
\end{align*}
In other words, the least-squares solution can be parameterized by $\alpha_1, \dots, \alpha_{t-1} \in \RR^m$ such that $\hat f_t = \sum_{s=1}^{t-1} M_{a_s}^*\alpha_i$. We are interested in sufficient conditions to ensure that the coefficients and the evaluations maps $\ip{\phi_a, \hat f_t}$ can be computed efficiently. Define the \emph{joint evaluation map} for $a,b \in \aA$,
\begin{align}
	E_{a,b} : \hH \rightarrow \RR^{m+1},\quad f \mapsto [\phi_a f, (M_bf)^\T]^\T\,.\label{eq:gallery-J}
\end{align}
To enable efficient computation in the potentially infinite-dimensional Hilbert space $\hH$, we now assume that $E_{a,b}$ is an evaluation functional of a vector-valued reproducing kernel Hilbert space \citep[RKHS;][]{aronszajn1950theory,pedrick1957theory}. For a modern introduction to RKHS theory see \citep{carmeli2006vector}. 

\begin{assumption}[RKHS]\label{ass:rkhs}
	The subspace $\opspan{E_{a,b}^*}{a,b \in \aA}  \subset \hH$ is a $\RR^{m+1}$-valued RKHS over $\aA \times \aA$ with evaluation functionals $E_{a,b}$ defined in \cref{eq:gallery-J} and a known corresponding kernel \[k: \aA^2 \times \aA^2 \rightarrow \RR^{(m+1)\times(m+1)},\quad k(a,b,a',b') = E_{a,b}E_{a',b'}^*\,.\]
\end{assumption}
The assumption gives access to the covariance of actions $k_{\phi}(a,b) \eqdef \ip{\phi_a, \phi_b} \in \RR$, feedback $k_M(a,b) \eqdef M_a M_b^* \in \RR^{m \times m}$ and action-feedback $k_{\phi,M}(a,b) \eqdef \phi_a M_b^* \in \RR^{1 \times m}$. The next lemma shows that the reward estimate $\hat f_t(a) \eqdef \ip{\phi_a, \hat f_t}$ can be computed efficiently from finite-dimensional quantities.

\newcommand{\by}{{\bm y}}
\begin{theorem}[Partial Monitoring Representer Theorem]\label{lem:representer}
	Under \cref{ass:rkhs} the kernel least-squares estimate \cref{eq:rkhs-ls} is for any $a \in \aA$:
	\begin{align*}
		\hat f_t(a) = k_t(a)^\T (K_t + \lambda\eye_{m(t-1)})^{-1} \by_t\,,
	\end{align*}
	where we define the following finite-dimensional expressions:
	\begin{align*}
		\by_t &\eqdef [y_1^\T, \dots, y_{t-1}^\T]^\T \in \RR^{m(t-1)} &&\text{(the observation vector)}\\
		K_t &\eqdef [M_{a_r}M_{a_s}^*]_{r,s=1,\dots,t-1} \in \RR^{m(t-1) \times m(t-1)} &&\text{(kernel matrix)}\\
		k_t(a) &\eqdef [\phi_a M_{a_s}^*]_{s=1,\dots,t-1}^\T \in \RR^{m(t-1)} && \text{(evaluation weights)}
	\end{align*}
	In particular, the quantities above are defined by the kernel:
	\begin{align*} 
		k(a,b,a',b') \eqdef \begin{bmatrix}
			k_\phi(a,a') & k_{\phi,M}(a,b')\\
			k_{\phi,M}(a',b)^\T & k_M(b,b')
		\end{bmatrix} = \begin{bmatrix}
			\phi_a \phi_{a'}^* & \phi_a M_{b'}^*\\
			M_{b}\phi_{a'}^* & M_{b} M_{b'}^*\\
		\end{bmatrix}\,.
	\end{align*}
\end{theorem}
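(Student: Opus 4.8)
The plan is to follow the classical representer-theorem argument, adapted to the operator-valued feedback maps $M_a$. First I would reduce the infinite-dimensional problem \cref{eq:rkhs-ls} to a finite-dimensional one by showing the minimizer lies in $\hH_t \eqdef \opspan{M_{a_s}^*}{s \in [t-1]}$ (as already noted in the text just before the statement). Writing any $f \in \hH$ as $f = f_\parallel + f_\perp$ with $f_\parallel \in \hH_t$ and $f_\perp \in \hH_t^\perp$, the $j$-th entry of $M_{a_s} f_\perp$ equals $\ip{M_{a_s}^* e_j, f_\perp}_\hH = 0$ because $M_{a_s}^* e_j \in \hH_t$, so $M_{a_s} f = M_{a_s} f_\parallel$ and the loss term depends only on $f_\parallel$; since $\|f\|_\hH^2 = \|f_\parallel\|_\hH^2 + \|f_\perp\|_\hH^2$, dropping $f_\perp$ never increases the objective. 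The objective restricted to $\hH_t$ is continuous and coercive, so a minimizer exists there, and by the previous observation it is a global minimizer; hence $\hat f_t = \sum_{s=1}^{t-1} M_{a_s}^* \alpha_s$ for some $\alpha_s \in \RR^m$.

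Next I would rewrite the objective in the stacked coefficient vector $\alpha = [\alpha_1^\T, \dots, \alpha_{t-1}^\T]^\T$. Using $M_{a_r} \hat f_t = \sum_s (M_{a_r} M_{a_s}^*) \alpha_s$ and $\ip{M_{a_r}^* \alpha_r, M_{a_s}^* \alpha_s}_\hH = \alpha_r^\T (M_{a_r} M_{a_s}^*) \alpha_s$, and recalling $K_t = [M_{a_r} M_{a_s}^*]_{r,s}$, the objective becomes $\|K_t \alpha - \by_t\|_2^2 + \lambda\, \alpha^\T K_t \alpha$, a convex quadratic whose stationarity condition is $K_t\big((K_t + \lambda \eye_{m(t-1)})\alpha - \by_t\big) = 0$. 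This is satisfied by $\alpha = (K_t + \lambda \eye_{m(t-1)})^{-1} \by_t$, which is well-defined since $K_t \succeq 0$ and $\lambda > 0$. Even if $K_t$ is singular and $\alpha$ is not unique, the element $\hat f_t = \sum_s M_{a_s}^* \alpha_s \in \hH$ is the same for every stationary point: two such differ by $v \in \ker K_t$, and $v^\T K_t v = \big\|\sum_s M_{a_s}^* v_s\big\|_\hH^2 = 0$ forces $\sum_s M_{a_s}^* v_s = 0$. Finally, evaluating the reward estimate,
\[
	\hat f_t(a) = \ip{\phi_a, \hat f_t}_\hH = \sum_{s=1}^{t-1} \phi_a M_{a_s}^* \alpha_s = k_t(a)^\T \alpha = k_t(a)^\T (K_t + \lambda \eye_{m(t-1)})^{-1} \by_t\,,
\]
using $k_t(a) = [\phi_a M_{a_s}^*]_s^\T$.

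The block form of the kernel then follows by simply expanding $k(a,b,a',b') = E_{a,b} E_{a',b'}^*$ for $E_{a,b} f = [\phi_a f, (M_b f)^\T]^\T$, which gives the four blocks $\phi_a \phi_{a'}^*$, $\phi_a M_{b'}^*$, $M_b \phi_{a'}^*$ and $M_b M_{b'}^*$. Since $K_t$ and $k_t(a)$ are exactly the $k_M$ and $k_{\phi,M}$ sub-blocks of $k$ evaluated at data points, every quantity in the formula is computable from the kernel alone, which is the point of \cref{ass:rkhs}.

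The main obstacle I anticipate is bookkeeping around the operator-valued (rather than scalar) kernel: carefully checking that $M_{a_s}$ annihilates $\hH_t^\perp$ coordinate-wise so that the loss depends only on $f_\parallel$, and handling the possible rank deficiency of $K_t$ so the representer formula remains valid even though the coefficient vector $\alpha$ is not uniquely pinned down. Everything else is the standard RKHS least-squares computation.
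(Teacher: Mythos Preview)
Your proof is correct but takes a different route from the paper. The paper's argument is a short operator-theoretic one: it introduces the stacked evaluation operator $\Phi_t : \hH \to \RR^{m(t-1)}$, $f \mapsto [(M_{a_1} f)^\T, \ldots, (M_{a_{t-1}} f)^\T]^\T$, writes the regularized least-squares solution directly as $\hat f_t = (\Phi_t^* \Phi_t + \lambda \eye_\hH)^{-1} \Phi_t^* \by_t$, and then applies the push-through identity $(\Phi_t^* \Phi_t + \lambda \eye_\hH)^{-1} \Phi_t^* = \Phi_t^* (\Phi_t \Phi_t^* + \lambda \eye_{m(t-1)})^{-1}$ to move the inverse to the finite-dimensional side, where $\Phi_t \Phi_t^* = K_t$; evaluating $\phi_a \hat f_t$ then yields the formula in one line. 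You instead run the classical representer-theorem decomposition (orthogonal projection onto $\hH_t$) and solve the resulting finite-dimensional quadratic in the coefficient vector $\alpha$. Your route is more elementary and self-contained, and your explicit treatment of the rank-deficient case for $K_t$ is a nice touch that the paper's identity-based argument sidesteps. The paper's approach is terser and has the side benefit that the same $\Phi_t$ and the companion identity $\lambda V_t^{-1} = \eye_\hH - \Phi_t^* (\Phi_t \Phi_t^* + \lambda \eye)^{-1} \Phi_t$ are reused immediately afterward to kernelize the confidence bounds.
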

\begin{proof}
	We define the map 
	\begin{align}
		\Phi_t : \hH \rightarrow \RR^{m(t-1)}\,,\quad \theta \mapsto [(M_{a_1}\theta)^\T, \dots, (M_{a_{t-1}}\theta)^\T]^\T \label{eq:kernel-design-matrix}
	\end{align}
	as the stack of evaluation maps in the observation history. The regularized least-square solution of \cref{eq:rkhs-ls} is $\hat \theta_t = V_t^{-1} \Phi_t^* \by_t$, where $V_t \theta \eqdef (\Phi_t^*\Phi_t + \lambda \eye_\hH)\theta$ is an invertable linear map $\hH \rightarrow \hH$ and $\eye_{\hH}$ is the identity operator. The claim follows with the identity  $(\Phi_t^*\Phi_t + \lambda \eye_\hH)^{-1} \Phi_t^*$ = $\Phi_t^* (\Phi_t \Phi_t^* + \lambda\eye_t)^{-1}$ and replacing the inner products with the kernel expressions.
\end{proof}

In order to make use of the estimator in the IDS algorithm, we also need a kernelized statement of the confidence bounds. The next lemma directly extends the confidence bounds by \citet[Corollary 3.15]{abbasi2012online} to linear partial monitoring feedback.

\begin{lemma}\label{lem:ls-kernel}
	Let $V_t = \sum_{s=1}^{t-1} M_{a_s}^*M_{a_s} + \lambda\eye_{\hH}$ and $\eE_{t,\delta} = \{f \in \hH : \|f - \hat f_t\|_{V_t}^2 \leq \beta_{t,\delta}\}$ where $\beta_{t,\delta}^{1/2} = \rho\sqrt{2 \log \frac{1}{\delta} + \log \det (\eye + \lambda^{-1}K_t)} + \lambda ^{1/2} B$. Let $(a_t)_{t=1}^\infty$ be a $\fF_t$-adapted sequence of actions and corresponding observations $y_t = M_{a_t}\theta + \epsilon_t \in \RR^m$ with  conditionally independent $\rho$-sub-Gaussian vector $\epsilon_t$. If $\|f\|_{\hH} \leq B$, then
	\begin{align*}
		\PP[\forall t \geq 1, \fstar \in \eE_t] \geq 1 - \delta\,.
	\end{align*}
	Further, with probability at least $1-\delta$, for all $t \geq 1$,
	\begin{align*}
		|\hat f_t(a) - \hat f_t(b) - (\fstar(a) - \fstar(b))| = |\ip{\phi_a - \phi_b, \hat  f_t - \fstar}| \leq \sqrt{\beta_{t,\delta} \psi_t(a,b)}\,,
	\end{align*}
	where $\psi_t(a,b) \eqdef \frac{1}{\lambda}\Big(\psi(a,b) - (k_t(a) - k_t(b))^\T(K_t + \lambda \eye)^{-1}(k_t(a) - k_t(b))\Big)$ and the kernel metric is $\psi(a,b) \eqdef k_\phi(a,a) + k_\phi(b,b) - 2 k_\phi(a,b)$. The evaluation weights $k_t(a)$ and kernel matrix $K_t$ are defined in \cref{lem:representer}.
\end{lemma}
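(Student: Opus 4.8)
The plan is to reduce both claims to the self-normalized concentration inequality of \citet{abbasi2011improved}, in the Hilbert-space form of \citet[Corollary 3.15]{abbasi2012online}, combined with the push-through (matrix inversion) identity already used in the proof of \cref{lem:representer}. Throughout I work with the design operator $\Phi_t : \hH \to \RR^{m(t-1)}$ from \cref{eq:kernel-design-matrix}, for which $V_t = \Phi_t^*\Phi_t + \lambda \eye_\hH$, $\Phi_t\Phi_t^* = K_t$, $\hat f_t = V_t^{-1}\Phi_t^*\by_t$, and $\by_t = \Phi_t \fstar + \bm\epsilon_t$ with $\bm\epsilon_t = [\epsilon_1^\T,\dots,\epsilon_{t-1}^\T]^\T$.

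First, for the confidence set: from $\hat f_t = V_t^{-1}\Phi_t^*\by_t$ one gets $\hat f_t - \fstar = V_t^{-1}(\Phi_t^*\bm\epsilon_t - \lambda\fstar)$, and since $\|V_t^{-1}w\|_{V_t} = \|w\|_{V_t^{-1}}$ the triangle inequality gives $\|\hat f_t - \fstar\|_{V_t} \le \|\Phi_t^*\bm\epsilon_t\|_{V_t^{-1}} + \lambda\|\fstar\|_{V_t^{-1}}$. Because $V_t \succcurlyeq \lambda\eye_\hH$ and $\|\fstar\|_\hH \le B$, the second term is at most $\lambda^{1/2}B$. For the first term, $a_s$ is $\fF_{s-1}$-measurable and $\epsilon_s$ is conditionally $\rho$-sub-Gaussian, so the stacked noise is a vector-valued martingale-difference sequence adapted to $(\fF_s)$, and the self-normalized tail bound yields, with probability at least $1-\delta$ and simultaneously for all $t$, $\|\Phi_t^*\bm\epsilon_t\|_{V_t^{-1}}^2 \le 2\rho^2\log\big(\delta^{-1}\det(\lambda^{-1}V_t)^{1/2}\big) = \rho^2\big(2\log\tfrac1\delta + \log\det(\eye_{m(t-1)} + \lambda^{-1}K_t)\big)$, using the Weinstein--Aronszajn identity $\det(\eye_\hH + \lambda^{-1}\Phi_t^*\Phi_t) = \det(\eye_{m(t-1)} + \lambda^{-1}K_t)$ (valid since $\Phi_t^*\Phi_t$ is finite-rank). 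Adding the two bounds gives exactly $\|\hat f_t - \fstar\|_{V_t} \le \beta_{t,\delta}^{1/2}$, i.e.\ $\fstar \in \eE_{t,\delta}$ for all $t$ on this event.

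Next, for the pairwise estimate, fix $a,b$ and set $u = \phi_a - \phi_b \in \hH$. On the event above, Cauchy--Schwarz gives $|\ip{u, \hat f_t - \fstar}| \le \|u\|_{V_t^{-1}}\|\hat f_t - \fstar\|_{V_t} \le \sqrt{\beta_{t,\delta}}\,\|u\|_{V_t^{-1}}$, so it remains to identify $\|u\|_{V_t^{-1}}^2$ with $\psi_t(a,b)$. Applying the same push-through identity as in \cref{lem:representer}, $(\Phi_t^*\Phi_t + \lambda\eye_\hH)^{-1} = \lambda^{-1}\big(\eye_\hH - \Phi_t^*(\Phi_t\Phi_t^* + \lambda\eye)^{-1}\Phi_t\big)$, yields $\|u\|_{V_t^{-1}}^2 = \lambda^{-1}\big(\|u\|_\hH^2 - (\Phi_t u)^\T(K_t + \lambda\eye)^{-1}(\Phi_t u)\big)$. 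Here $\|u\|_\hH^2 = k_\phi(a,a) + k_\phi(b,b) - 2k_\phi(a,b) = \psi(a,b)$, and the $s$-th block of $\Phi_t u$ is $M_{a_s}(\phi_a - \phi_b)$, which equals the $s$-th block of $k_t(a) - k_t(b)$ since $\phi_a M_{a_s}^* = (M_{a_s}\phi_a)^\T$; hence $\Phi_t u = k_t(a) - k_t(b)$ and $\|u\|_{V_t^{-1}}^2 = \psi_t(a,b)$, completing the proof.

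The main obstacle is the first step: one must verify that the self-normalized bound of \citet{abbasi2012online}, stated for scalar observations in a possibly infinite-dimensional feature space, applies when each round contributes an $m$-dimensional observation $y_s = M_{a_s}\fstar + \epsilon_s$. This is handled by treating the whole history as a single vector-valued martingale (stacking the noise vectors $\epsilon_s$ and the operators $M_{a_s}$), so that the effective data dimension enters only through the $m(t-1)\times m(t-1)$ Gram matrix $K_t$; the remaining ingredients — bounded invertibility of $V_t$ (from $\lambda > 0$), the fact that $\hat f_t$ lies in the finite-dimensional data subspace, and the determinant identity — are routine. Everything else is bookkeeping in Hilbert-space notation.
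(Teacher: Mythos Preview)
Your proof is correct and follows essentially the same route as the paper: invoke the self-normalized concentration bound of \citet[Corollary~3.15]{abbasi2012online} for the confidence set, then combine Cauchy--Schwarz with the Sherman--Morrison/push-through identity $\lambda V_t^{-1} = \eye_\hH - \Phi_t^*(\Phi_t\Phi_t^* + \lambda\eye)^{-1}\Phi_t$ to identify $\|\phi_a - \phi_b\|_{V_t^{-1}}^2$ with the kernelized expression $\psi_t(a,b)$. You spell out more detail than the paper---in particular the stacking argument to accommodate $m$-dimensional observations and the Weinstein--Aronszajn determinant identity---but the underlying argument is the same.
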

\begin{proof}
	The confidence set is the same as \citep[Corollary 3.15]{abbasi2012online} applied to the observation maps. For the second claim, note that $\psi_t(a,b) =\|\phi_a - \phi_b\|_{V_t^{-1}}^2$. The statement in the lemma follows using Cauchy-Schwarz and computing the feature uncertainty with the Sherman-Morrison identity,
	\[\lambda V_t^{-1} = \eye_\hH - \Phi_t^* (\Phi_t \Phi_t^* + \lambda \eye)^{-1} \Phi_t\,,\]
	where $\Phi_t$ is defined as in \cref{eq:kernel-design-matrix}.
\end{proof}

\subsection{Kernelized Information-Directed Sampling}\label{ss:gallery-kernelized-ids}

Equipped with the representer theorem and the kernelized confidence bound, we can define kernelized gap estimates and kernelized  information gain functions for information-directed sampling. We use the relaxed gap estimate from \cref{eq:gap-mean-def}, which is computationally simpler. Let $\hat f_t(a)$ as defined in \cref{lem:representer}, and $\beta_t \eqdef \beta_{t,1/t^2}$ and $\psi_t(a,b)$ as defined in \cref{lem:ls-kernel}. The kernelized gap estimate is
\begin{align}
	\hat \Delta_t(a) =  \min \big\{\max_{b \in \aA} \hat f_t(\hat a_t) + (\beta_t \psi_t(\hat a_t, b))^{1/2}  - \hat f_t(a) , B\big\}\,,\label{eq:kernel-gap}
\end{align}
where $\hat a_t = \argmax_{a \in \aA} \hat f_t(a)$ is the empirical maximizer. Other variants of the gap estimate are derived similarly. 
The (undirected) information gain corresponding to \cref{eq:info} is
\begin{align}
	I_t(a) = \sdfrac{1}{2} \log \det \left(\eye_m + \sdfrac{1}{\lambda}\big(k_M(a,a) - L_t(a) K_t^{-1} L_t(a)^\T\big)\right)\,,\label{eq:kernel-info}
\end{align}
where $L_t(a) = M_a\Phi_t^* \in \RR^{m \times (t-1)m}$ and $\Phi_t$ is the kernel design matrix defined in \cref{eq:kernel-design-matrix}. The total information gain is 
\begin{align*}
	\gamma_n = \frac{1}{2} \log \det (\eye + \lambda^{-1}K_{n+1})\,.
\end{align*}

\begin{corollary}
The theoretical guarantees for IDS \cref{thm:regret-global,thm:regret-local} stated in terms of the confidence coefficient $\beta_n$ and the total information gain $\gamma_n = \sum_{t=1}^n I_t(a_t)$ continue to hold up to constant factors.
\end{corollary}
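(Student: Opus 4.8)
The plan is to exploit the modularity of the proofs of \cref{thm:regret-global,thm:regret-local}: those arguments use only four ingredients, and each one has already been supplied in kernelized form. The ingredients are (i) validity of the confidence set, i.e.\ $\fstar \in \eE_t$ with probability at least $1-t^{-2}$; (ii) that the gap estimate upper bounds the true gap up to the usual $\oO(\Delta_{\max})$ estimation error; (iii) the chain bounding $\hat\Delta_t(\hat a_t)^2$ (globally observable) or $\hat\Delta_t(c_t)^2$ (locally observable) by a constant times $\alpha\,\beta_t\,\max_c I_t(c)$, assembled from the alignment constant, Cauchy--Schwarz against the confidence ellipsoid, and the bound $\|M_c\|_{V_t^{-1}}^2 \le 4 I_t(c)$; and (iv) the telescoping bound on $\gamma_n = \sum_{t=1}^n I_t(a_t)$. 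Since $\Theta$ is unconstrained in this section we have $\vV = \hH$ and $\PM = \eye_\hH$, so the ``projected'' observability conditions and alignment constants of \cref{sec:proofs} reduce to their plain counterparts, all of which are defined purely through $\phi_a,\phi_b \in \hH$ and $M_c : \hH \to \RR^m$; in particular \cref{lem:alignment-bounds} and the bounds collected in \cref{tab:examples} are statements about finitely many actions and transcribe verbatim to $\hH$.

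For (i), \cref{lem:ls-kernel} gives exactly this with $V_t = \sum_{s<t} M_{a_s}^* M_{a_s} + \lambda \eye_\hH$ and $\beta_{t,\delta}$ expressed through $K_t$; it is the kernelized self-normalized bound of \citet{abbasi2012online}, so there is no new probabilistic content. For (ii), the second display of \cref{lem:ls-kernel} yields $|\ip{\phi_a - \phi_b, \hat f_t - \fstar}| \le \sqrt{\beta_t \psi_t(a,b)}$ with $\psi_t(a,b) = \|\phi_a - \phi_b\|_{V_t^{-1}}^2$; since the kernelized estimate \cref{eq:kernel-gap} is built exactly as the truncated relaxed estimate \cref{eq:gap-mean-def}, with $\ip{\phi_{\hat a_t} - \phi_a, \hat\theta_t}$ replaced by $\hat f_t(\hat a_t) - \hat f_t(a)$ and the confidence width $\beta_t^{1/2}\|\cdot\|_{V_t^{-1}}$ replaced by $(\beta_t \psi_t(\hat a_t, \cdot))^{1/2}$, the argument of \cref{ss:fast} shows that on the good event $\hat\Delta_t(a)$ dominates $\Delta(a)$ up to a constant factor and an $\oO(B)$ correction, whence the estimation-error sum is $\oO(\Delta_{\max})$ as in \cref{eq:error}, the constant being inherited from the relaxation.

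For (iii), the only genuinely infinite-dimensional step is $\|M_c\|_{V_t^{-1}}^2 \le 4 I_t(c)$. Here one must verify that the expression \cref{eq:kernel-info} equals $\tfrac12 \log\det(\eye_m + M_c V_t^{-1} M_c^*)$: this is the Sherman--Morrison identity $\lambda V_t^{-1} = \eye_\hH - \Phi_t^*(\Phi_t\Phi_t^* + \lambda\eye)^{-1}\Phi_t$ from the proof of \cref{lem:ls-kernel}, applied inside $M_c(\cdot)M_c^*$, so the determinant that appears is the finite $m \times m$ one and no Fredholm determinant on $\hH$ is needed. Given this identity, the computations of \cref{eq:I-bound,eq:norm-logdet-bound} apply unchanged once $\lambda \ge L$ ensures $\|M_c\|_{V_t^{-1}}^2 \le \lambda^{-1}\|M_c M_c^*\| \le 1$. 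For (iv), the telescoping identity holds in $\hH$ exactly as in the finite-dimensional elliptical-potential lemma: $\sum_{t=1}^n I_t(a_t) = \tfrac12 \log\det(\eye + \lambda^{-1}K_{n+1})$, again a finite determinant (of size at most $mn$), which is the $\gamma_n$ in the statement. Feeding this $\gamma_n$ and $\beta_n$ into \cref{thm:regret-global,thm:regret-local} gives the result. The main obstacle is purely the bookkeeping around the operator $V_t$ on $\hH$: one must check that every determinant in the analysis is, after Sherman--Morrison, the determinant of a finite matrix (the $m \times m$ per-round update or the Gram matrix $K_{n+1}$), so that the ambient dimension $d$ never actually enters and both $\beta_n$ and $\gamma_n$ remain finite.
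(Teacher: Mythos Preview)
The paper does not actually prove this corollary: it is stated without proof, immediately after the kernelized gap estimate \cref{eq:kernel-gap} and information gain \cref{eq:kernel-info} have been written down, and the surrounding text simply remarks that $\gamma_n = \tfrac{1}{2}\log\det(\eye + \lambda^{-1}K_{n+1})$ serves as the relevant complexity measure. In other words, the paper treats the result as immediate once the representer theorem (\cref{lem:representer}) and the kernelized confidence bound (\cref{lem:ls-kernel}) are in hand.

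Your outline is therefore more explicit than anything the paper supplies, and it is correct in spirit. You have identified precisely the four ingredients the proofs of \cref{thm:regret-global,thm:regret-local} rely on, and you point to the right kernelized replacements for each. The observation that every determinant appearing in the analysis is, after the Sherman--Morrison rewrite, a finite $m\times m$ or $mn\times mn$ determinant---so that the ambient dimension of $\hH$ never enters---is exactly the point the paper is implicitly relying on but does not spell out. Your remark that $\vV = \hH$ and $\PM = \eye_\hH$ in the unconstrained case is also the correct way to reconcile the projected observability definitions of \cref{sec:proofs} with the kernelized setting. The only minor caveat is that the kernelized gap estimate \cref{eq:kernel-gap} is a variant of the relaxed estimate \cref{eq:gap-mean-def} rather than the original \cref{eq:gap-def}, so the ``up to constant factors'' qualifier in the corollary absorbs the factor-of-two loss from \cref{ss:fast}; you note this, and it is consistent with the paper's phrasing.
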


The log-determinant is understood as a complexity measure of adaptive exploration in $\hH$ and is closely related to the notion of the Eluder dimension \citep{russo2013eluder,huang2021short}. In particular, $\gamma_n$ is often bounded independently of the dimension of $\hH$. For the large class of Mercer kernels, the literature has produced bounds depending on the decay of the eigenvalues in the Mercer decomposition \citep{srinivas2009gaussian,mutny2018efficient,vakili2020information}. A kernelized version of the directed  information gain \cref{eq:info-omega} can be derived similarly. 

\begin{example}[Kernelized Dueling Bandits]\label{ss:gallery-examples-dueling}
\newcommand{\muduel}{\mu^{\text{duel}}}
We present a kernelized version of the linear dueling bandit setting (\cref{ex:dueling}). As before, let $\iI$ be a ground set of actions. The action space $\aA = \iI \times \iI$ consists of pairs of elements in the ground set. In the \emph{utility-based dueling feedback} model, the reward and feedback is determined by a \emph{utility function} $g : \iI \rightarrow \RR$. Upon choosing the pair $a_t = (a_t^1, a_t^2) \in \aA$ in round $t$, and the learner observes the reward difference
\begin{align}
	y_t = g(a_t^1) - g(a_t^2) + \epsilon_t\,, \label{eq:gallery-dueling-feedback}
\end{align}
and suffers instantaneous regret $\fstar(a_t^1,a_t^2) = g(a_t^1) + g(a_t^2)$ for both actions. 

Let $\hH(\iI)$ be an RKHS with kernel function $k : \iI \times \iI$ and assume that the utility function $g \in \hH(\iI)$ satisfies $\|g\|_{\hH} \leq \frac{B}{2}$, therefore $\|\fstar\|_{\hH} \leq B$. For an action $a \in \iI$, denote by $k_a \in \hH(\iI)$ the kernel features of the evaluation functionals, which satisfy $k(a,b) = \ip{k_a, k_b}_\hH$. The features and evaluation maps corresponding to our reward and feedback model are $\phi_{a,b} = k_a + k_b$ and $M_{a,b} = k_a - k_b$. The covariance between reward and feedback for actions $a = (a^1,a^2)$ and $b = (b^1,b^2)$ is
\begin{align*}
	k_M(a,b) &= k(a^1,b^1) - k(a^2,b^1) - k(a^1,b^2) + k(a^2,b^2)\\
	k_{\phi,M}(a,b) &= k(a^1,b^1) + k(a^2,b^1) - k(a^1,b^2)  - k(a^2,b^2)\,.
\end{align*}
Hence, the kernel matrix and evaluation weights at time $t$ are
\begin{align*}
	K_t &= [k(a_r^1,a_s^1) - k(a_r^2,a_s^1) - k(a_r^1,a_s^2) + k(a_r^2, a_s^2)]_{r,s=1,\dots,t-1}\,,\\
	k_t(a) &= [k(a^1,a_s^1) + k(a^2,a_s^1) - k(a^1,a_s^2) - k(a^2, a_s^2)]_{s=1,\dots,t-1}\,.
\end{align*}

With the above, we can directly apply \cref{alg:ids-pm} and the corresponding results. A caveat is that the size of action space $|\aA|$ scales quadratically in the size of the ground set $|\iI|$. This leads to $\oO(|\iI|^2)$ computation complexity per round, even with the relaxed gap estimate \cref{eq:kernel-gap} and the approximate IDS distribution. This can be improved to $\oO(|\iI|)$, by directly estimating the utility function $g$ and using the structure of the dueling feedback \citep{kirschner2020dueling}. As usual, computing the kernel estimate in the data space requires $\oO(t^2)$ steps per round with incremental updates, or $\oO(n^3)$ overall on a horizon of length $n$.\looseness=-1
\end{example}

\section{Related Work}\label{sec:related}
Partial monitoring dates back to the work by \citet{rustichini1999games} and generalizes a considerable number of models for (stateless) sequential decision making that have been studied separately in the literature, most prominently the bandit setting \citep{lattimore2019bandit}. 
Like in the standard bandit model, one can consider the \emph{adversarial setting}, where the data is generated adversarially and the learner is compared to a fixed baseline. For a long time, the primary focus was on understanding the relationship between the minimax regret and the structure of the loss and feedback functions. The dependence on the horizon is now
completely understood as proven in a long line of work \citep{piccolboni2001discrete,cesa2006regret,antos2013towards,lattimore2019cleaning,lattimore2019information}.
An algorithm with rate-optimal worst-case regret in all classes of games is by \citet{lattimore2019exploration}.\looseness=-1

In the \emph{stochastic setting} the hidden outcomes are independent and identically distributed according to some unknown distribution \citep{bartok2011minimax}.
An algorithm for stochastic feedback that adapts to the game structure is by \citet{bartok2012adaptive}. Asymptotically optimal instance-dependent bounds were studied by \citet{komiyama2015regret}. 
For games that satisfy a local observability condition, \citet{vanchinathan2014efficient} analyze an algorithm that exploits prior knowledge on the loss distribution. They also propose a computationally more efficient variant based on Thompson sampling, which often has outstanding performance, while also suffering linear regret in certain games. 
More recently, \citet{tsuchiya2020analysis} derive logarithmic regret bounds for Thompson sampling on partial monitoring games that satisfy a strong local observability condition. 
Note, however, that for general partial monitoring games, Thompson sampling and algorithms based on optimism are insufficient to resolve the exploration-exploitation trade-off 
and might suffer linear regret \citep[Appendix G]{lattimore2019information}.
The setting with linear reward and feedback structure was first introduced by \citet{lin2014combinatorial} who provide an elimination-style algorithm that achieves $\tilde \oO(n^{2/3})$ regret under a global observability condition. A similar approach that achieves the same regret scaling is by \citet{chaudhuri2016phased}.

The information-directed sampling framework was first proposed by \citet{russo2014ids}, and later extended to the frequentist framework by \citet{kirschner2018heteroscedastic,kirschner2020pm}; extending instance optimality to other model classes is still an open problem. The latter work provides the basis of the current work. IDS was applied to the contextual linear bandit setting by \citet{hao2022contextualids}. In the adversarial setting, the IDS framework was used by \citet{lattimore2019information} to classify minimax rates and to derive an algorithm that applies to all finite games \citep{lattimore2019exploration}. For the linear bandit setting, \citet{kirschner2020asymptotically} show that for a specific choice of information gain function, IDS achieves the asymptotic lower bound while also being near worst-case optimal \citep{graves1997asymptotically,lattimore2017end,combes2017minimal}. Beyond the bandit setting, information-directed sampling was also applied to reinforcement learning \citep{nikolov2019information,lu2021reinforcement,hao2022regret,zanette2017information}. Various numerical results on the IDS approach can be found in \citep{russo2014ids,kirschner2020asymptotically,kirschner2020dueling,kirschner2021information}.\looseness=-1



\begin{table*}[t]
	\scriptsize
	\renewcommand{\arraystretch}{1.2}
	\begin{center}
				\begin{tabular}{|p{3cm}p{4.7cm}p{3.4cm}p{2.5cm}|}
			\hline
		\textbf{Algorithm} & \textbf{Reference} & \textbf{Theory} & \textbf{Compute} \\ \hline
			\textsc{FeedExp}         & \citet{piccolboni2001discrete}   & global, local, adversarial &\\ 
			\textsc{CBP} &   \citet{bartok2012adaptive}                 & global, local, frequentist & LP                               \\
			\textsc{NeighborhoodWatch} & \citet{foster2012no} & local, adversarial            &                                 \\
			\textsc{BPM-TS} & \citet{vanchinathan14efficient}            &                 --        & Gaussian sampling                \\
			\textsc{BPM-Least} & \citet{vanchinathan14efficient}             & local, frequentist            & SOCP                            \\
			\textsc{PM-DMED}                            & \citet{komiyama2015regret} &--    & LSIP+ECP$\dagger$  \\ 
			\textsc{PM-DMED-Hinge}                            & \citet{komiyama2015regret} & local, global, asymptotic   & LSIP+ECP$\dagger$  \\ 
			\textsc{Mario sampling} & \citet{lattimore2019information}    & local, Bayesian               & posterior sampling            \\
			\textsc{ExpByOpt}    & \citet{lattimore2019exploration}       & global, local, adversarial & ECP                              \\
			\textsc{TSPM} & \citet{tsuchiya2020analysis} & strongly local, frequentist &    \\
			\hline
			\textbf{\textsc{\IDS}}     & this work           & global, local, frequentist & SOCP                        \\ \hline
		\end{tabular} 
	\end{center}
	
	LP = linear programming, LSIP = linear semi-infinite program, SOCP = second-order cone programming, ECP = exponential cone programming
	
	$\dagger$ Alternatively, a convex/concave saddle-point problem that requires solving an ECP to evaluate.
	
	
	\caption{
		All algorithms need basic linear programming at initialization to determine estimation vectors and/or the cell decomposition.
		Algorithms with a blank compute entry can be computed using elementary matrix calculations only. 
		Most algorithms can be sped up, at least heuristically, by re-computing various quantities only intermittently.
	}\label{tab:algs}
\end{table*}

\paragraph{Tunable IDS and Estimation-To-Decisions}
\newcommand{\dec}{\text{dec}}
\newcommand{\gdec}{\text{g-dec}}
\newcommand{\ETD}{\text{E2D}}
The closely related estimation-to-decisions (E2D) framework was recently proposed by \cite{foster2021statistical}. The relation can be understood by introducing a \emph{gap-based decision-making coefficient} ($\gdec$):
\begin{align*}
	\mu_t^{\ETD} = \argmin_{\mu \in \sP(\aA)} \big\{\gdec_{\lambda}(\mu) \eqdef \hat \Delta_t(\mu) - \lambda I_t(\mu) \big\}
\end{align*}
We emphasize that our gap-based formulation of the decision-estimation coefficient is a relaxation of the formulation by  \citet{foster2021statistical}. Observe that the minimization is solved by a Dirac on $a_t= \argmin_{a \in \aA} \hat \Delta_t(a) - \lambda I_t(a)$. For a fixed $\lambda$, the gap-based E2D objective can be essentially solved by an offline oracle once the gap estimates and information gain function have been computed. The algebraic inequality $2ab \leq a^2 + b^2$ implies 
\begin{align*}
	\hat \Delta_t(\mu) - \lambda I_t(\mu) =  \frac{\hat \Delta_t(\mu) \sqrt{\lambda I_t(\mu)}}{\sqrt{\lambda I_t(\mu)}} - \lambda I_t(\mu) \leq \frac{\hat \Delta_t(\mu)^2}{4\lambda I_t(\mu)} =  \frac{\Psi_t(\mu)}{4\lambda}
\end{align*} 
Using this to bound the regret yields
\begin{align*}
	\sum_{t=1}^n \hat \Delta_t(a_t) &= \sum_{t=1}^n \big(\hat \Delta_t(a_t) - \lambda I_t(a_t)\big) + \lambda \sum_{t=1}^n I_t(a_t)\\
	&= \sum_{t=1}^n \min_{\mu \in \sP(\aA)} \big(\hat \Delta_t(\mu) - \lambda I_t(\mu)\big) + \lambda \gamma_n
	\leq \frac{1}{4\lambda} \sum_{t=1}^n \min_{\mu \in \sP(\aA)} \Psi_t(\mu) + \lambda \gamma_n
\end{align*}
By optimizing $\lambda$ and bounding the estimation error, we recover the IDS regret bound
\begin{align*}
	\R_n \leq \sqrt{\sum_{t=1}^n \min_{\mu \in \sP(\aA)} \Psi_t(\mu) \gamma_n} + \oO(1)
\end{align*}
In other words, the gap-based E2D algorithm with appropriately chosen $\lambda$ achieves the same worst-case bound as IDS. A similar argument can be made for the globally observable case.

A clear advantage of the gap-based E2D algorithm is that the optimal sampling distribution is realized as a Dirac. This is a significant simplification in the contextual setting, where E2D obtains the same bounds as contextual IDS without optimizing over the space of marginal distributions. The price for this simplification is that an optimal choice of $\lambda$ requires access to a bound on the information ratio and the information gain, and the algorithm becomes more dependent on choices that are informed by a worst-case analysis.

\section{Conclusion}

We presented \emph{linear partial monitoring}, a versatile framework for interactive decision making. Building upon and extending earlier work, we show that a single algorithm, information-directed sampling, achieves near-optimal regret rates in various settings, including parameter-constrained, kernelized and contextual decision-making problems. The framework includes the classical finite partial monitoring setting as a special case, and unlike at least some of the prior work, the proposed algorithm is simple practical to implement.
Promising directions for future work includes broadening the scope of information-directed sampling beyond the linear setting, deriving information gain functions for non-Gaussian observation likelihood functions and characterizing the exact minimax rate for continuous action sets.

\acks{Johannes Kirschner acknowledges funding through the SNSF Early Postdoc.Mobility fellowship P2EZP2\_199781. The work was supported by the European Research Council (ERC) under the European Union’s Horizon 2020 research and innovation programme grant aggreement No 815943. We thank the anonymous reviewers for valuable feedback.}


\newpage

\appendix

\section{Information-Directed Sampling: General Results}

\subsection{Properties of the Information-Ratio}
\begin{lemma}[Existence]\label{lem:ratio-existence}
	Let $\aA$ be compact, $\hDelta_t : \aA \rightarrow \RRp$ continuous and $I_t : \aA \rightarrow \RRp$ continuous and not zero everywhere. Then there exists a $\mu^* \in \sP(\aA)$ such that $\Psi_t(\mu^*) = \inf_{\mu \in \sP(\aA)} \Psi_t(\mu)$.
\end{lemma}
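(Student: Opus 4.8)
The plan is to exploit that $\sP(\aA)$ is weak$^*$-compact and that the numerator and denominator of $\Psi_t$ are weak$^*$-continuous, so that the statement reduces to a standard ``minimizing sequence plus compactness'' argument; the only genuine subtlety is controlling the behaviour near the set where $I_t(\mu)=0$, which I would isolate with a short case distinction.

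First I would record the topological preliminaries. Since $\aA$ is a compact metric space, $C(\aA)$ is separable, so by Banach--Alaoglu the closed unit ball of $C(\aA)^*$ is weak$^*$-compact and metrizable, and $\sP(\aA)$ is a weak$^*$-closed subset of it, hence itself weak$^*$-compact and metrizable. Because $\hDelta_t$ and $I_t$ are continuous on the compact set $\aA$, they are bounded, so the maps $\mu \mapsto \hDelta_t(\mu) = \int_\aA \hDelta_t \, d\mu$ and $\mu \mapsto I_t(\mu) = \int_\aA I_t \, d\mu$ are weak$^*$-continuous (indeed affine) on $\sP(\aA)$. Note also that $\hDelta_t(\mu) \geq \min_{a \in \aA} \hDelta_t(a)$ and $I_t(\mu) \geq 0$ for every $\mu$, and that there is some $a_0$ with $I_t(a_0) > 0$, so $\Psi_t(\dirac{a_0}) = \hDelta_t(a_0)^2/I_t(a_0) < \infty$; therefore $\Psi^* \eqdef \inf_{\mu \in \sP(\aA)} \Psi_t(\mu)$ is finite.

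Next I would split into two cases. If $\min_{a \in \aA}\hDelta_t(a) = 0$, say $\hDelta_t(a_1) = 0$, then by the conventions $c/0 = \infty$ only for $c>0$ and $0/0 = 0 = 0/c$ we get $\Psi_t(\dirac{a_1}) = 0$, so $\mu^* = \dirac{a_1}$ attains the (necessarily zero) infimum and we are done. Otherwise $c_0 \eqdef \min_{a \in \aA}\hDelta_t(a) > 0$, hence $\hDelta_t(\mu) \geq c_0 > 0$ for all $\mu \in \sP(\aA)$. Choose a minimizing sequence $(\mu_n)$ with $\Psi_t(\mu_n) \to \Psi^*$; by weak$^*$-compactness pass to a subsequence (not relabelled) converging to some $\mu^* \in \sP(\aA)$, so that $\hDelta_t(\mu_n) \to \hDelta_t(\mu^*) \geq c_0$ and $I_t(\mu_n) \to I_t(\mu^*)$. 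If $I_t(\mu^*) = 0$, then $\Psi_t(\mu_n) = \hDelta_t(\mu_n)^2/I_t(\mu_n) \to \infty$, since the numerators are bounded below by $c_0^2 > 0$ while the denominators tend to $0$, contradicting $\Psi_t(\mu_n) \to \Psi^* < \infty$. Hence $I_t(\mu^*) > 0$, so $\Psi_t$ is continuous at $\mu^*$ as a ratio of continuous functions with nonvanishing denominator, and $\Psi_t(\mu^*) = \lim_n \Psi_t(\mu_n) = \Psi^*$, completing the proof.

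The main obstacle is precisely the handling of $I_t(\mu) = 0$: the functional $\Psi_t$ is not continuous on all of $\sP(\aA)$ (it takes the value $+\infty$ on $\{\mu : I_t(\mu) = 0 < \hDelta_t(\mu)\}$), so one must rule out that a minimizing sequence drifts toward that region. The case split on whether $\min_{a \in \aA}\hDelta_t(a)$ vanishes isolates exactly when the infimum could be approached there and disposes of it immediately; everything else is routine compactness.
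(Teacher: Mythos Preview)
Your proof is correct and follows essentially the same approach as the paper: weak$^*$-compactness of $\sP(\aA)$, extraction of a convergent minimizing sequence, and continuity of the numerator and denominator. Your treatment is in fact more careful than the paper's on the one nontrivial point: the paper asserts ``by the assumption that $I_t(\cdot)$ is not zero everywhere, we have $I_t(\mu^*) > 0$'' without further justification, whereas your case split on whether $\min_{a\in\aA}\hDelta_t(a)$ vanishes actually supplies the missing argument that the minimizing sequence cannot drift into the region where $I_t(\mu)=0$.
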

\begin{proof} The claim essentially follows from the fact that $\sP(\aA)$ is compact in the weak*-topology, which is also the topology that makes the maps $\mu \mapsto \hDelta_t(\mu)$ and $\mu \mapsto I_t(\mu)$ continuous. More specifically,
	%
	%
	pick a sequence $(\mu_j)_{j=1}^\infty$ in $\sP(\aA)$ such that $\Psi_t(\mu_j) \rightarrow \inf_{\mu \in \sP(\aA)} \Psi_t(\mu)$ as $j \rightarrow \infty$. 
	Note that $\mu_j$ is a tight sequence of probability distributions because $\aA$ is compact. Prokhorov's theorem \citep{prokhorov1956convergence} guarantees the existence of a subsequence $\mu_{j_i}$ converging weakly to some $\mu^* \in \sP(\aA)$. By definition of weak convergence of probability measures, $\hat \Delta_t(\mu_{j_i}) \rightarrow \hat \Delta_t(\mu^*)$ and $I_t(\mu_{j_i}) \rightarrow I_t(\mu^*)$. By the assumption that $I_t(\cdot)$ is not zero everywhere, we have $I_t(\mu^*) > 0$. Continuity of the map $(v,w) \mapsto v^2/w$ on $[0, \infty) \times (0, \infty)$ completes the proof.
\end{proof}


\begin{lemma}[Convexity {\citep[Proposition 6]{russo2014ids}}]\label{lem:ratio-convexity}
	$\Psi_t(\mu)$ is convex in $\mu$.
\end{lemma}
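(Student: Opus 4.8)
The plan is to exploit the fact that both the numerator and the denominator of the information ratio are \emph{linear} functionals of the sampling distribution, and that the scalar map $(x,y)\mapsto x^2/y$ is jointly convex on $\RRp\times\RRp$. Concretely, recall $\hat\Delta_t(\mu) = \EE_{a\sim\mu}[\hat\Delta_t(a)]$ and $I_t(\mu) = \EE_{a\sim\mu}[I_t(a)]$, so for $\mu = \lambda\mu_1 + (1-\lambda)\mu_2$ with $\lambda\in[0,1]$ we have $\hat\Delta_t(\mu) = \lambda\hat\Delta_t(\mu_1) + (1-\lambda)\hat\Delta_t(\mu_2)$ and likewise for $I_t$. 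Hence it suffices to prove that $g(x,y)\eqdef x^2/y$ is convex on $\RRp\times\RRp$, with the conventions $1/0=\infty$ and $0/0=0$ fixed in the notation section; convexity of $\Psi_t$ then follows since it is the composition of the jointly convex $g$ with the affine map $\mu\mapsto(\hat\Delta_t(\mu),I_t(\mu))$, and composing a (extended-real-valued) jointly convex function with an affine map preserves convexity.

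For the joint convexity of $g$, I would establish the two-point inequality
\begin{align*}
	\frac{(\lambda x_1 + (1-\lambda)x_2)^2}{\lambda y_1 + (1-\lambda)y_2} \leq \lambda\,\frac{x_1^2}{y_1} + (1-\lambda)\,\frac{x_2^2}{y_2}
\end{align*}
for $x_1,x_2\geq 0$, $y_1,y_2>0$, $\lambda\in[0,1]$. This is a one-line consequence of the Cauchy--Schwarz inequality: writing $\lambda x_1 + (1-\lambda)x_2 = \sqrt{\lambda y_1}\cdot\frac{\sqrt{\lambda}\,x_1}{\sqrt{y_1}} + \sqrt{(1-\lambda)y_2}\cdot\frac{\sqrt{1-\lambda}\,x_2}{\sqrt{y_2}}$ and squaring gives the bound $(\lambda y_1 + (1-\lambda)y_2)\big(\frac{\lambda x_1^2}{y_1} + \frac{(1-\lambda)x_2^2}{y_2}\big)$, and dividing through yields the claim. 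Equivalently, one may simply observe that $g$ is the perspective of the convex function $x\mapsto x^2$ and invoke the standard fact that perspectives of convex functions are jointly convex. Either route establishes convexity of $g$ on the interior $\RRp\times(0,\infty)$.

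Finally I would dispatch the degenerate cases using the stated conventions. If $I_t(\mu_1)=0$ (or $I_t(\mu_2)=0$) and the matching gap estimate is positive, the corresponding term on the right-hand side is $+\infty$, so the inequality $\Psi_t(\lambda\mu_1+(1-\lambda)\mu_2)\leq\lambda\Psi_t(\mu_1)+(1-\lambda)\Psi_t(\mu_2)$ holds trivially; and if both $I_t(\mu_i)$ and $\hat\Delta_t(\mu_i)$ vanish, a direct check (using $0/0=0$) shows $\Psi_t(\lambda\mu_1+(1-\lambda)\mu_2)$ equals $(1-\lambda)\Psi_t(\mu_2)$ resp.\ $\lambda\Psi_t(\mu_1)$, which is again at most the right-hand side. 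There is no real obstacle in this proof; the only thing to be careful about is tracking the $1/0=\infty$, $0/0=0$, $0\cdot\infty=\infty$ conventions when one of the distributions is supported entirely on actions with zero information gain.
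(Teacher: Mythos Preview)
Your proof is correct and follows essentially the same approach as the paper: both argue that $\Psi_t$ is the composition of the jointly convex map $(v,w)\mapsto v^2/w$ with the affine map $\mu\mapsto(\hat\Delta_t(\mu),I_t(\mu))$. The paper simply cites Boyd \& Vandenberghe for the convexity of $v^2/w$, whereas you supply a direct Cauchy--Schwarz/perspective argument and additionally track the boundary conventions, which the paper omits.
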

\begin{proof}
	Note that $(v,w) \mapsto v^2/w$ is convex on the domain $\RR \times (0, \infty)$ as shown in \citep[Chapter 3]{boyd2004convex}. Further, $\mu \mapsto (\hDelta_t(\mu), I_t(\mu))$ is an affine function on $\sP(\mu)$. Since $\Psi_t(\mu) = {\hat \Delta_t(\mu)^2}/{I_t(\mu)}$ can be written as a composition of a convex and an affine function, the result follows.
\end{proof}
The next lemma extends \citet[Prop.~6]{russo2014ids} to compact $\aA$.
\begin{lemma}[Support]\label{lem:ratio-support}
	The IDS distribution $\mu_t \in \argmin_{\mu \in \sP(\aA)} \Psi_t(\mu)$ can always be chosen such that $|\supp(\mu_t)| \leq 2$. 
	Further, for $a\in \aA$ define
	\begin{align*}
		h_t(a) \eqdef 2\; \hDelta_t(\mu_t) \hDelta_t(a)- \Psi_t(\mu_t) I_t(a)\,.
	\end{align*}
	Then any $a \in \supp(\mu_t)$ satisfies $h_t(a)= \min_{b \in \aA} h_t(b) = \hat \Delta_t(\mu_t)^2$.
\end{lemma}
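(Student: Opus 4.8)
The plan is to reduce the whole statement to a two-dimensional convex optimization. Write $D(\mu)=\hat\Delta_t(\mu)$ and $G(\mu)=I_t(\mu)$, both of which are affine in $\mu$, so the map $\mu\mapsto(D(\mu),G(\mu))$ sends $\sP(\aA)$ onto $K\eqdef\conv\{(\hat\Delta_t(a),I_t(a)):a\in\aA\}\subseteq\RR^2$, which is compact and convex because $\hat\Delta_t,I_t$ are continuous on the compact set $\aA$. Since $\Psi_t(\mu)=\varphi(D(\mu),G(\mu))$ with $\varphi(d,g)=d^2/g$, minimizing $\Psi_t$ over $\sP(\aA)$ is the same as minimizing $\varphi$ over $K$; by \cref{lem:ratio-existence} a minimizer $\mu_t$ exists, and moreover $\Psi_t(\mu_t)<\infty$ because a Dirac on any action with positive information gain (one exists by assumption) already has finite ratio.

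For the support bound, let $p^*=(\hat\Delta_t(\mu_t),I_t(\mu_t))$ be the optimal point of $\varphi$ over $K$. First I would show $p^*\in\partial K$: if its first coordinate is positive and $p^*$ were interior, one could decrease that coordinate slightly while staying in $K$, strictly decreasing $\varphi$, a contradiction; and if its first coordinate is $0$ then $\{d=0\}$ is a supporting line of $K$ at $p^*$, so $p^*\in\partial K$ in any case (the degenerate possibility $I_t(\mu_t)=0$ also forces $\hat\Delta_t(\mu_t)=0$, as $\Psi_t(\mu_t)$ is finite). Then I invoke the elementary planar fact that a boundary point of $\conv(S)$ for compact $S\subseteq\RR^2$ lies on a supporting line $\ell$, hence in $\conv(S\cap\ell)$, and $S\cap\ell$, being compact inside a line, has convex hull equal to the segment joining its two extreme points; thus $p^*$ is a convex combination $(1-s)(\hat\Delta_t(a),I_t(a))+s(\hat\Delta_t(b),I_t(b))$ of two generators. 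The distribution $(1-s)\dirac{a}+s\dirac{b}$ then attains the minimum of $\Psi_t$, so $\mu_t$ may be taken of this form, giving $|\supp(\mu_t)|\le 2$.

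For the $h_t$ characterization, I would first record the averaging identity $\int h_t\,d\mu_t=2\hat\Delta_t(\mu_t)^2-\Psi_t(\mu_t)I_t(\mu_t)=\hat\Delta_t(\mu_t)^2$, using $\Psi_t(\mu_t)I_t(\mu_t)=\hat\Delta_t(\mu_t)^2$ (valid since $\Psi_t(\mu_t)<\infty$). Next, fix any $a\in\aA$, set $\mu_s=(1-s)\mu_t+s\dirac{a}$, and let $f(s)=\Psi_t(\mu_s)$; since $s\mapsto(\hat\Delta_t(\mu_s),I_t(\mu_s))$ is affine and $I_t(\mu_t)>0$ in the non-degenerate case, $f$ is a rational function of $s$ near $0$, differentiable from the right, and optimality of $\mu_t$ gives $f'(0)\ge 0$. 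A short computation yields $f'(0)=\bigl(h_t(a)-\hat\Delta_t(\mu_t)^2\bigr)/I_t(\mu_t)$, hence $h_t(a)\ge\hat\Delta_t(\mu_t)^2$ for every $a\in\aA$ (immediate in the degenerate case, where $h_t\equiv 0=\hat\Delta_t(\mu_t)^2$). Combining this pointwise lower bound with the averaging identity and continuity of $h_t$ forces $h_t(a)=\hat\Delta_t(\mu_t)^2$ on $\supp(\mu_t)$, and since $h_t\ge\hat\Delta_t(\mu_t)^2$ everywhere with equality attained, $\min_{b\in\aA}h_t(b)=\hat\Delta_t(\mu_t)^2$.

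The main obstacle I anticipate is the bookkeeping around the conventions $1/0=\infty$, $0/0=0$: one must argue at the outset that the optimal ratio is finite in order to exclude the case $I_t(\mu_t)=0<\hat\Delta_t(\mu_t)$, and then dispatch the fully degenerate case $\hat\Delta_t(\mu_t)=I_t(\mu_t)=0$ separately (both claims being trivial there). The only other non-mechanical points are the planar lemma that boundary points of a compact convex hull in $\RR^2$ are convex combinations of two generators, and checking that the one-sided derivative $f'(0)$ exists and equals the stated expression; both are routine.
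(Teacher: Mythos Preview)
Your proof is correct. The main structural difference from the paper is the order and the tools: the paper first establishes the $h_t$ characterization via the auxiliary function $H_t(\mu)=\hat\Delta_t(\mu)^2-\Psi_t^*\,I_t(\mu)$ (which shares minimizers with $\Psi_t$ but is everywhere defined and differentiable), and then \emph{derives} the two-point support from it by picking the actions in $\supp(\mu^*)$ with smallest and largest $\hat\Delta_t$ and matching the gap. You instead decouple the two parts: the support bound comes from a direct Carath\'eodory-on-the-boundary argument in the plane $K=\conv\{(\hat\Delta_t(a),I_t(a))\}$, and the $h_t$ identity from differentiating $\Psi_t$ itself along $\mu_s=(1-s)\mu_t+s\dirac{a}$. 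Your derivative computation $f'(0)=(h_t(a)-\hat\Delta_t(\mu_t)^2)/I_t(\mu_t)$ is equivalent to the paper's $\frac{d}{d\lambda}H_t(\mu_\lambda)|_{\lambda=0}=h_t(a)-h_t(\mu^*)$ after dividing by $I_t(\mu_t)$. What your route buys is a cleaner geometric picture for the support claim (and more explicit treatment of the degenerate cases $I_t(\mu_t)=0$ and $\hat\Delta_t(\mu_t)=0$); what the paper's route buys is avoiding the division-by-zero caveat entirely by working with $H_t$, and getting the two-point support as a corollary without invoking the supporting-line lemma.
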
 
\begin{proof}
	We claim that 
	\begin{align}
		h_t(a) = \min_{b \in \aA} h_t(b) \quad \text{for all } a \in \supp(\mu_t)\,. \label{eq:proof-support-1}
	\end{align}
	We first show how this implies all other claims. Choose any minimizing distribution $\mu^* \in \argmin_{\mu \in \sP(\aA)} \Psi_t(\mu)$, not necessarily supported on two actions. Taking the expectation of $h_t(a)$ on $\mu^*$ gives $h_t(\mu^*) = \hDelta_t(\mu^*)^2$. Let $a_{\min} = \argmin_{a \in \supp(\mu^*)} \hDelta_t(a)$ and $a_{\max} = \argmax_{a \in \supp(\mu^*)}\hDelta_t(a)$. Then we can define $\mu^\IDS(p) = (1-p)\dirac{a_{\min}} + p \dirac{a_{\max}}$, where $\dirac{a}$ is a Dirac on $a \in \aA$ and $p \in [0,1]$ is a trade-off probability. We can choose $p^*$ such that $\hat \Delta_t(\mu^\IDS(p^*)) = \hat \Delta_t(\mu^*)$ and let $\mu_t = \mu_t(p^*)$. By \cref{eq:proof-support-1} we get $I_t(\mu_t) = I_t(\mu^*)$. Therefore $\Psi_t(\mu^*) = \Psi_t(\mu_t)$ and $\mu_t$ is a minimizing distribution with support size at most 2.

	To show \cref{eq:proof-support-1}, let $\Psi_t^* = \min_{\mu \in \sP(\aA)} \Psi_t(\mu)$ and define for $\mu \in \sP(\aA)$,
	\begin{align*}
		H_t(\mu) \eqdef \hDelta_t(\mu)^2  - I_t(\mu)\Psi_t^*\,.
	\end{align*}
	Note that $H_t$ has the same minimizers as $\Psi_t$. To see this,
	observe that $H(\mu) \geq 0$ and $H_t(\mu^*) = 0$, which shows one direction. For the converse, assume that $\mu'$ minimizes $H(\mu')$, i.e.\ $H_t(\mu') = 0$, which immediately gives $\Psi_t(\mu') = \Psi_t(\mu^*)$. 
	Let $a = \argmin_{b \in \supp(\aA)} h_t(b)$ which exists by compactness and continuity of $h$. Define the measure 
	$\mu_\lambda = (1-\lambda) \mu^* + \lambda \dirac{a}$ obtained from shifting mass to $a$.  Since $\mu^*$ is a minimizer of $H_t$, we must have that
	\begin{align*}
		0 \leq \frac{d}{d\lambda}H_t(\mu_\lambda)|_{\lambda = 0} &= 2 \hat \Delta_t(\mu^*) (\hDelta_t(\dirac{a}) - \hDelta_t(\mu^*)) - (I_t(\dirac{a}) - I_t(\mu^*))\\
		&=h_t(a) - h_t(\mu^*)\,.
	\end{align*}
	The claim follows after rearranging.
\end{proof}

\begin{lemma}[Closed form]\label{lem:ratio-closed-form}
	Let $0 < \Done \leq \Dtwo$ denote the gaps of two actions and $0 \leq \Ione, \Itwo$ the corresponding information gain. Define the ratio
	\begin{align*}
		\Psi(p) = \frac{\left((1-p) \Done + p \Dtwo\right)^2}{(1-p)\Ione + p \Itwo}\,.
	\end{align*}
	Then the optimal trade-off probability $p^* = \argmin_{0 \leq p \leq 1} \Psi(p)$ is
	\begin{align*}
		p^* = \begin{cases}
			0 & \text{if } \Ione \geq  \Itwo\\
			\text{\normalfont clip}_{[0,1]}\left(\frac{\Done}{\Dtwo- \Done} - \frac{2\Ione}{\Itwo - \Ione}\right) & \text{else,}
		\end{cases}
	\end{align*}
	where we define $\Delta_1/0 = \infty$ and $\text{\normalfont clip}_{[0,1]}(p) = \max(\min(p,1),0)$.
\end{lemma}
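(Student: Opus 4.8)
The plan is a direct univariate optimization. Write $\Delta(p) \eqdef (1-p)\Delta_1 + p\Delta_2 = \Delta_1 + p(\Delta_2 - \Delta_1)$ and $I(p) \eqdef (1-p)I_1 + p I_2 = I_1 + p(I_2 - I_1)$ for the affine interpolants, so that $\Psi(p) = \Delta(p)^2/I(p)$. Since $\Delta_1 > 0$ and $\Delta_2 \geq \Delta_1$, we have $\Delta(p) \geq \Delta_1 > 0$ for all $p \in [0,1]$. Because $I_1, I_2 \geq 0$, the interpolant $I(p)$ is nonnegative on $[0,1]$ and vanishes only where $\Psi \equiv \infty$ by the stated conventions; on $\{p \in [0,1] : I(p) > 0\}$ the function $\Psi$ is the composition of the affine map $p \mapsto (\Delta(p), I(p))$ with the convex function $(v,w) \mapsto v^2/w$, hence convex (cf.\ \cref{lem:ratio-convexity}). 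It therefore suffices to locate the stationary point of $\Psi$ and clip it to $[0,1]$.

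A direct computation gives
\begin{align*}
	\Psi'(p) = \frac{2\Delta(p)(\Delta_2 - \Delta_1)I(p) - \Delta(p)^2(I_2 - I_1)}{I(p)^2} = \frac{\Delta(p)}{I(p)^2}\, g(p)\,, \qquad g(p) \eqdef 2(\Delta_2 - \Delta_1)I(p) - \Delta(p)(I_2 - I_1)\,,
\end{align*}
so the sign of $\Psi'(p)$ agrees with the sign of $g(p)$, using $\Delta(p) > 0$. If $I_1 \geq I_2$, then $I_2 - I_1 \leq 0$, $\Delta_2 - \Delta_1 \geq 0$ and $I(p) \geq 0$, hence $g(p) \geq 0$ on all of $[0,1]$; thus $\Psi$ is nondecreasing and $p^* = 0$, matching the first case of the claim.

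If $I_1 < I_2$, set $a = \Delta_2 - \Delta_1 \geq 0$ and $b = I_2 - I_1 > 0$. Solving $g(p) = 0$, i.e.\ $2a(I_1 + pb) = b(\Delta_1 + pa)$, and rearranging yields the unique stationary point $p^\circ = \frac{\Delta_1}{\Delta_2 - \Delta_1} - \frac{2 I_1}{I_2 - I_1}$ (with the convention $\Delta_1/0 = \infty$ when $\Delta_2 = \Delta_1$, in which case $g(p) = -\Delta(p)\, b < 0$, so $\Psi$ is strictly decreasing and $p^\circ = \infty$ correctly reflects $p^* = 1$). By convexity of $\Psi$, the constrained minimizer over $[0,1]$ is $p^* = \mathrm{clip}_{[0,1]}(p^\circ)$, which is the second case of the claim.

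The only real obstacle is the bookkeeping of degenerate cases — $I(p) = 0$ on part of $[0,1]$, $I_1 = 0$ with $I_2 > 0$ (so that $\Psi(0) = \infty$ yet the formula still returns a positive $p^\circ$ whenever $\Delta_2 > \Delta_1$), and $\Delta_2 = \Delta_1$ — but each of these is covered by the stated conventions $\Delta_1/0 = \infty$, $0/0 = 0$, $0\cdot\infty = \infty$, and the sign analysis above goes through unchanged in every case. Everything else is routine differentiation together with the standard clipping argument for convex functions on an interval.
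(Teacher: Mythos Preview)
Your proof is correct and follows essentially the same route as the paper's: compute the derivative, factor out the positive factor $\Delta(p)/I(p)^2$, read off the sign of the remaining affine piece $g(p)$, invoke convexity (via \cref{lem:ratio-convexity}), solve $g(p)=0$ for the unconstrained stationary point, and clip to $[0,1]$. Your treatment of the degenerate cases ($I_1=I_2$, $\Delta_1=\Delta_2$, $I(p)=0$) is a bit more explicit than the paper's, but the argument is the same.
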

\begin{proof}
	The case $\Ione \geq \Itwo$ is immediate, because any $p > 0$ increases the numerator and decreases the denominator. For the remaining part we assume $\Ione < \Itwo$. The derivative is
	\begin{align*}
		\frac{d}{dp}\Psi(p) = \frac{\big(\Done + p(\Dtwo - \Done)\big)\big((\Dtwo -\Done) (2 \Ione + p(\Itwo - \Ione))- \Done(\Itwo- \Ione)\big)}{(\Ione + p(\Itwo-\Ione))^2}\,.
	\end{align*}
	\Cref{lem:ratio-convexity} implies that $\Psi(p)$ is convex on the domain $[0,1]$. Solving for the first order condition $\Psi'( p) = 0$ gives $\pzero \eqdef \frac{\Done}{\Dtwo- \Done} - \frac{2\Ione}{\Itwo - \Ione}$. If $\pzero \in [0,1]$ we are done. Otherwise, note that $\pzero < 0$ implies $\Psi'(0) > 0$ and $\pzero > 1$ implies $\Psi'(1) < 0$, which follows from calculating the sign of both factors in the nominator. Convexity on $[0,1]$ implies that clipping $\pzero$ to $[0,1]$ leads to the correct solution.
\end{proof}
We frequently use this lemma in the following way. Assume that $\tilde \mu \in \sP(\aA)$ is a sampling distribution, possibly chosen as a Dirac on some action $a \in \aA$. Let $\hat a_t = \argmin_{a \in \aA} \hDelta_t(a)$ be the action with the smallest estimated gap and denote $\delta_t = \hat \Delta_t(\hat a_t)$. Then
\begin{align*}
	\min_{\mu \in \sP(\aA)} \Psi_t(\mu) &\leq \min_{p \in [0,1]} \frac{\big((1-p)\delta_t + p \hat \Delta_t(\tilde \mu)\big)^2}{(1-p) I_t(\hat a_t) + p I_t(\tilde \mu)}\\
	&\leq \min_{p \in [0,1]} \frac{\big((1-p)\delta_t + p \hat \Delta_t(\tilde \mu)\big)^2}{p I_t(\tilde \mu)}\,.
\end{align*}
The first inequality is by restricting the sampling distribution as a mixture between a Dirac on $\hat a_t$ and $\tilde \mu_t$. The second inequality uses that $I_t(\hat a_t) \geq 0$. If we minimize the right-hand side using \cref{lem:ratio-closed-form}, we get
\begin{align}
	\Psi_t(\mu_t) \leq \begin{cases}
		\frac{4 \delta_t (\hat \Delta_t(\tilde \mu) - \delta_t)}{I_t(\tilde \mu_t)} & \text{if }2\delta_t \leq \hat \Delta_t(\tilde \mu)\\
		\frac{\hat \Delta_t(\tilde \mu)^2}{I_t(\tilde \mu)} & \text{else.}
	\end{cases}\label{eq:ratio-closed-form-remark}
\end{align}

\begin{lemma}[Almost greedy]\label{lem:ratio-greedy}
	Let $\hat a_t = \argmin_{a \in \aA} \hat \Delta_t(a)$ be the greedy action, chosen arbitrarily if not unique. The IDS distribution $\mu_t$ satisfies \[\hat \Delta_t(\mu_t) \leq 2 \hat \Delta_t(\hat a_t)\,.\]
\end{lemma}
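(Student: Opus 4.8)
The plan is to reduce the statement to a first-order optimality condition along a line segment in $\sP(\aA)$. Write $\delta_t \eqdef \hat \Delta_t(\hat a_t)$ for the smallest gap estimate. If $\hat \Delta_t(\mu_t) = 0$ there is nothing to prove, so assume $\hat \Delta_t(\mu_t) > 0$; then necessarily $I_t(\mu_t) > 0$, since the minimal information ratio is finite (by assumption some action $c$ has $I_t(c) > 0$, and $\Psi_t(\dirac c) = \hat \Delta_t(c)^2 / I_t(c) < \infty$), whereas $I_t(\mu_t) = 0$ together with $\hat \Delta_t(\mu_t) > 0$ would force $\Psi_t(\mu_t) = \infty$.

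Next I would consider the path $\mu_\lambda = (1-\lambda)\mu_t + \lambda\, \dirac{\hat a_t}$ for $\lambda \in [0,1]$. Both $\hat \Delta_t(\mu_\lambda)$ and $I_t(\mu_\lambda)$ are affine in $\lambda$, and since $I_t(\mu_0) = I_t(\mu_t) > 0$, the map $\lambda \mapsto \Psi_t(\mu_\lambda)$ is a differentiable rational function of $\lambda$ on a neighbourhood of $0$ in $[0,1]$. Because $\mu_t$ minimizes $\Psi_t$, the right derivative at $\lambda = 0$ must be nonnegative. Setting $D = \hat \Delta_t(\mu_t)$, $J = I_t(\mu_t)$ and $J_0 = I_t(\hat a_t) \geq 0$, a direct computation gives
\begin{align*}
	\frac{d}{d\lambda}\Psi_t(\mu_\lambda)\Big|_{\lambda = 0^+} = \frac{2 D(\delta_t - D) J - D^2 (J_0 - J)}{J^2} \geq 0\,.
\end{align*}
Dividing by $D > 0$ and rearranging yields $2\delta_t J \geq D(J + J_0) \geq D J$, and dividing by $J > 0$ gives $D \leq 2\delta_t$, which is exactly $\hat \Delta_t(\mu_t) \leq 2\hat \Delta_t(\hat a_t)$.

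The only points requiring care are the degenerate cases ($\hat \Delta_t(\mu_t) = 0$ or $I_t(\mu_t) = 0$), handled as above, and the justification that the one-sided derivative along the segment exists and is nonnegative at the optimum — which follows from differentiability of the rational function together with optimality of $\mu_t$; one could alternatively appeal to the convexity of $\Psi_t$ (\cref{lem:ratio-convexity}). There is an even shorter route using the closed-form bound \eqref{eq:ratio-closed-form-remark} with $\tilde\mu = \mu_t$: if $2\delta_t \leq \hat \Delta_t(\mu_t)$ it gives $\hat \Delta_t(\mu_t)^2 / I_t(\mu_t) \leq 4\delta_t(\hat \Delta_t(\mu_t) - \delta_t)/I_t(\mu_t)$, i.e.\ $(\hat \Delta_t(\mu_t) - 2\delta_t)^2 \leq 0$, forcing $\hat \Delta_t(\mu_t) = 2\delta_t$; hence $\hat \Delta_t(\mu_t) \leq 2\delta_t$ in all cases. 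I expect no genuine obstacle here — this is a short optimality argument — so I would present the perturbation version as the main proof and mention the closed-form shortcut as a remark.
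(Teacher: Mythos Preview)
Your proof is correct and follows essentially the same approach as the paper: a first-order optimality condition for $\Psi_t$ along the segment from $\mu_t$ toward $\dirac{\hat a_t}$. The only cosmetic difference is that the paper first discards $I_t(\hat a_t)$ in the denominator (bounding by $\psi(p) = \frac{((1-p)\hat\Delta_t(\mu_t)+p\hat\Delta_t(\hat a_t))^2}{(1-p)I_t(\mu_t)}$) before differentiating, whereas you keep $J_0$ and drop it via $D(J+J_0)\geq DJ$ at the end; the resulting inequality is identical. Your closed-form shortcut via \eqref{eq:ratio-closed-form-remark} is a nice alternative not mentioned in the paper.
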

\begin{proof}
	Note that by definition, the information ratio cannot be improved by shifting mass to $\hat a_t$ and discarding the information $I_t(\hat a_t)$,
	\begin{align*}
		\Psi_t(\mu_t) \leq  \min_{p \in [0,1]} \left\{\frac{\big((1-p) \hat \Delta_t(\mu_t) + p \hat \Delta_t(\hat a_t)\big)^2}{(1-p)I_t(\mu_t)} \eqdef \psi(p) \right\}\,.
	\end{align*}
	Note that the gradient of $\psi(p)$ cannot be negative at $p=0$. Hence
	\begin{align*}
		0 \leq \frac{d}{dp} \psi(p)|_{p=0} = \frac{2 \hat \Delta_t(\mu_t) \hat \Delta_t(\hat a_t) - \hat \Delta_t(\mu_t)^2}{I_t(\mu_t)}\,.
	\end{align*}
	Rearranging yields the claim.
\end{proof}

\begin{lemma}[Approximate IDS]\label{lem:ratio-approximate}
	Define the restricted set of sampling distributions $\sP_a = \{\dirac{a}(1-p) + \dirac{b}p : b \in \aA, p \in [0,1]\}$ that randomize between a fixed $a \in \aA$ and a second action $b \in \aA$. Let $\hat a_t = \argmin_{a \in \aA} \hat \Delta_t(a)$ be the greedy action. Define $\tilde \mu_t = \argmin_{\mu \in \sP_{\hat a_t}} \Psi_t(\mu)$ as the distribution that minimizes the information ratio among distribution in $\sP_{\hat a_t}$. Then 
	\begin{align*}
		\Psi_t(\tilde \mu_t) \leq \frac{4}{3} \min_{\mu \in \sP(\aA)}\Psi_t(\mu)\,,
	\end{align*}
	and the bound is tight for general $\Delta_t$ and $I_t$. Further, if $2 \hDelta_t(\hat a_t) \leq \hat \Delta_t(b)$ for all $b \in \aA$ with $\hat \Delta_t(b) > \hat \Delta_t(\hat a_t)$, then $\Psi_t(\tilde \mu_t) = \min_{\mu \in \sP(\aA)}\Psi_t(\mu)$.
\end{lemma}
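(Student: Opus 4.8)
The plan is to reduce to comparing the best two--action distribution supported on $\hat a_t$ with an optimal two--action distribution supplied by \cref{lem:ratio-support}. First I would clear the degenerate cases. If $\min_\mu\Psi_t(\mu)=0$ then the minimal gap $\hat\Delta_t(\hat a_t)$ is $0$, so $\Psi_t(\dirac{\hat a_t})=0$ by the convention $0/0=0$ and $\tilde\mu_t$ is optimal. Otherwise, by \cref{lem:ratio-support} fix an optimal $\mu^*$ with $\supp(\mu^*)\subseteq\{a_1,a_2\}$, $\Delta_1\eqdef\hat\Delta_t(a_1)\le\Delta_2\eqdef\hat\Delta_t(a_2)$, and write $I_i=I_t(a_i)$, $\bar\Delta=\hat\Delta_t(\mu^*)$, $\bar I=I_t(\mu^*)$, $\Psi^*=\bar\Delta^2/\bar I$. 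If $\hat a_t\in\supp(\mu^*)$ then $\mu^*\in\sP_{\hat a_t}$ and we already have equality, so assume $\hat a_t\notin\{a_1,a_2\}$ and put $\delta=\hat\Delta_t(\hat a_t)$, $\hat I=I_t(\hat a_t)\ge0$. The support condition in \cref{lem:ratio-support} gives $h_t(a_i)=\bar\Delta^2$, hence $I_i=\bar I\,(2\Delta_i-\bar\Delta)/\bar\Delta$, and $h_t(\hat a_t)\ge\bar\Delta^2$ together with $\hat I\ge0$ forces $\bar\Delta\le2\delta$; since also $\delta\le\bar\Delta$ (a minimum is at most a mean), $\bar\Delta\in[\delta,2\delta]$. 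In the sub-case where $\mu^*$ is a point mass on $a_1$, so $\bar\Delta=\Delta_1\in[\delta,2\delta]$, mixing $\hat a_t$ with $a_1$ and invoking the closed form \cref{lem:ratio-closed-form} (as in \cref{eq:ratio-closed-form-remark}) gives $\Psi_t(\tilde\mu_t)\le\Delta_1^2/I_1=\Psi^*$.

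The essential case is a genuine mixture, $\mu^*=(1-q)\dirac{a_1}+q\dirac{a_2}$ with $q\in(0,1)$, so $\Delta_1<\bar\Delta<\Delta_2$; normalise $\bar\Delta=\bar I=\Psi^*=1$, so $I_i=2\Delta_i-1$, $\delta\le\Delta_1<1<\Delta_2$ and $\delta\in[\tfrac12,1)$, whence $2\delta>\Delta_1$. I would examine two members of $\sP_{\hat a_t}$: the best mixture of $\hat a_t$ with $a_1$, and the best mixture of $\hat a_t$ with $a_2$. Discarding the nonnegative $\hat I$ and applying \cref{lem:ratio-closed-form} as in \cref{eq:ratio-closed-form-remark}, the first yields $\Psi_t(\tilde\mu_t)\le A\eqdef\Delta_1^2/(2\Delta_1-1)$; the second yields $\Psi_t(\tilde\mu_t)\le\Delta_2^2/(2\Delta_2-1)$ when $2\delta>\Delta_2$, and $\Psi_t(\tilde\mu_t)\le B\eqdef 4\delta(\Delta_2-\delta)/(2\Delta_2-1)$ when $2\delta\le\Delta_2$. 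If $2\delta>\Delta_2$ then $1<\Delta_2<2\delta<2$, and since $x\mapsto x^2/(2x-1)$ increases on $[1,2]$ with value $\tfrac43$ at $x=2$, we get $\Psi_t(\tilde\mu_t)<\tfrac43\Psi^*$. If $2\delta\le\Delta_2$ I would argue by contradiction: were $A>\tfrac43$ and $B>\tfrac43$, the first gives $(\Delta_1-2)(3\Delta_1-2)>0$, hence $\Delta_1<\tfrac23$ and so $\delta\le\Delta_1<\tfrac23$; the second gives $\Delta_2(2-3\delta)<1-3\delta^2$ with $2-3\delta>0$, hence $\Delta_2<(1-3\delta^2)/(2-3\delta)$, which is $<1$ because $3\delta(1-\delta)\le\tfrac34<1$, contradicting $\Delta_2>1$. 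Hence $\min(A,B)\le\tfrac43\Psi^*$. The hard part is exactly this final step: separately, either candidate can be as bad as $\tfrac43\Psi^*$ (indeed $A$ can be unbounded), and the constant is pinned down only by combining the two; the algebra has to be arranged so that "$A$ is bad'' and "$B$ is bad'' describe disjoint parameter regions.

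For tightness I would take the three--action instance with $(\hat\Delta_t,I_t)$--values $(\tfrac23,0)$, $(\tfrac23,\tfrac13)$, $(\Delta_2,2\Delta_2-1)$ at $\hat a_t$, $a_1$, $a_2$, with $\Delta_2$ large. The mixture of $a_1,a_2$ with weight $\tfrac13/(\Delta_2-\tfrac23)$ on $a_2$ has $(\hat\Delta_t,I_t)=(1,1)$, and checking $h_t(b)\ge1$ for all three actions shows $\min_\mu\Psi_t(\mu)=1$; but every distribution in $\sP_{\hat a_t}$ has information ratio at least $\min\bigl\{\tfrac43,\ \tfrac83(\Delta_2-\tfrac23)/(2\Delta_2-1)\bigr\}$, which tends to $\tfrac43$ as $\Delta_2\to\infty$. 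Hence the constant $\tfrac43$ cannot be improved.

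Finally, for the equality claim under the hypothesis that $2\delta\le\hat\Delta_t(b)$ whenever $\hat\Delta_t(b)>\delta$ (with $\delta=\min_a\hat\Delta_t(a)$): taking $\mu^*$ as above, every action of $\supp(\mu^*)$ has gap either $\delta$ or $\ge2\delta$. If some support action has gap $\delta$, I would choose $\hat a_t$ to be that action (any minimiser of $\hat\Delta_t$ is admissible), so $\mu^*\in\sP_{\hat a_t}$ and $\Psi_t(\tilde\mu_t)=\Psi^*$. Otherwise all support gaps are $\ge2\delta$, so $\bar\Delta\ge2\delta$; combined with $\bar\Delta\le2\delta$ this forces $\bar\Delta=2\delta$ and, since $\bar\Delta$ is a convex combination of gaps all $\ge2\delta$, every support gap equals $2\delta$. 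Then mixing $\hat a_t$ (gap $\delta$) with the support action of largest information gain $I_\star\ge\bar I$ (gap $2\delta$) has information ratio at most $4\delta(2\delta-\delta)/I_\star=4\delta^2/I_\star\le4\delta^2/\bar I=\Psi^*$ by \cref{eq:ratio-closed-form-remark}. In either case $\Psi_t(\tilde\mu_t)=\Psi^*$.
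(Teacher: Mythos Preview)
Your proof is correct, and it takes a genuinely different route from the paper's argument, so a short comparison is warranted.

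Both proofs reduce to three actions via \cref{lem:ratio-support} and perform a case analysis, but the parametrisations are dual to one another. The paper sets $I_t(\hat a_t)=0$, normalises $\Delta_1=1$ and $\tilde\Psi=\min(\Psi_{12},\Psi_{13})=1$ (which fixes $I_2,I_3$ in terms of $\Delta_2,\Delta_3$ via the closed form of \cref{lem:ratio-closed-form}), and then computes $\Psi_{23}^{-1}$ explicitly and bounds it by $4/3$ in two cases depending on whether $\Delta_3\le 2$ or not. You instead normalise $\Psi^*=\bar\Delta=\bar I=1$ and use the first-order support condition $h_t(a_i)=\bar\Delta^2$ from \cref{lem:ratio-support} to obtain the clean parametrisation $I_i=2\Delta_i-1$; you then bound the two candidate mixtures in $\sP_{\hat a_t}$ and close with a short contradiction argument. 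The paper's approach is a direct computation, while yours leans on the KKT-type characterisation and is somewhat more conceptual; your contradiction step (``$A>4/3$ forces $\Delta_1<2/3$, then $B>4/3$ forces $\Delta_2<1$'') is concise and avoids the rational-function calculus the paper goes through. The paper's explicit formula for $\Psi_{23}^{-1}$ makes the $9/8$ sub-bound in the first case visible, which your argument does not.

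Two minor points. First, your tightness example has $\hat\Delta_t(\hat a_t)=\hat\Delta_t(a_1)=2/3$, so the greedy action is not unique; the example relies on choosing the ``bad'' minimiser with $I_t=0$. This is fine (the bound must hold for every admissible $\hat a_t$), but you could also perturb $\delta$ slightly below $2/3$ to make it unique and take the same limit $\Delta_2\to\infty$; this matches the paper's ``tight in the same limit''. Second, in the equality claim your Case~1 chooses $\hat a_t$ to be the support action with gap $\delta$. If one insists on an arbitrary greedy choice $\hat a_t\neq a_1$, note that $h_t(\hat a_t)\ge\bar\Delta^2=h_t(a_1)$ forces $\hat I\le I_1$; then mixing $\hat a_t$ with $a_2$ need not reach $\Psi^*$, so your interpretation that any minimiser of $\hat\Delta_t$ may be selected is the right reading of the statement (and is consistent with how the paper treats ties elsewhere).
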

\begin{proof}
	By \cref{lem:ratio-support} it suffices to consider three actions with gaps $\Delta_1 < \Delta_2 < \Delta_3$ and information gain $I_1, I_2, I_3$. Let $\Psi_{12}, \Psi_{13}$ and $\Psi_{23}$ denote the ratio obtained by minimizing the trade-off only between the actions indicated in the subscript. Assume that $\Psi^* \eqdef \Psi_{23} \stackrel{!}{=} \min\{\Psi_{12}, \Psi_{13}, \Psi_{23}\}$ and let $\tilde \Psi = \min\{\Psi_{12}, \Psi_{13}\}$. The claim follows if we show $\tilde \Psi \leq \frac{4}{3} \Psi^*$.
	
	Note that we can assume that $I_1=0$, since this choice does not affect $\Psi_{23}$ and can only make $\tilde \Psi$ larger. Further, the minimizer of the information ratio is invariant to rescaling of the gap and information gain functions. Therefore without loss of generality, we can assume that $\Delta_1=1$ and $\tilde \Psi=1$. 
	
	We show that $\Psi^{*-1} \leq \frac{4}{3}$.	First, we make some calculations with the help of \cref{lem:ratio-closed-form}. The trade-off probability between actions 2 and 3 is
	\begin{align*}
		p_{23} = \frac{\Delta_2}{\Delta_3 - \Delta_2} - \frac{2I_2}{I_3 - I_2}\,,
	\end{align*} 
	and we require that the trade-off is non-trivial, $0 < p_{23} < 1$. The ratio $\Psi_{23}$ is\looseness=-1
	\begin{align*}
		\Psi_{23} = \frac{4 (\Delta_2 I_3 - \Delta_3I_2)(\Delta_3 - \Delta_2)}{(I_3 - I_2)^2}\,.
	\end{align*}
	We complete the proof with two cases. For the first case, we assume $1 < \Delta_2 < \Delta_3 \leq 2$. We again use \cref{lem:ratio-closed-form} to compute $\Psi_{12} = {\Delta_2^2}/{I_2}$ and $\Psi_{13} = {\Delta_3^2}/{I_3}$. In fact, we can assume that $\Psi_{12} = \Psi_{13}$ since that does not affect $\tilde{\Psi}$ and only makes $\Psi^*$ smaller. The normalization $\tilde \Psi = 1$ implies that $I_2 = \Delta_2^2$ and $I_3 = \Delta_3^2$. Hence,
	\begin{align*}
		\Psi_{23}^{-1} &= \frac{(I_3 - I_2)^2}{4 (\Delta_2 I_3 - \Delta_3I_2)(\Delta_3 - \Delta_2)}\\
		&= \frac{(\Delta_3^2 - \Delta_2^2)^2}{4 (\Delta_2 \Delta_3^2 - \Delta_3\Delta_2^2)(\Delta_3 - \Delta_2)}\\
		&= \frac{(\Delta_3 + \Delta_2)^2}{4 \Delta_2 \Delta_3} \leq \frac{9}{8}\,.
	\end{align*}
	The last inequality holds for $1 \leq \Delta_2,\Delta_3 \leq 2$, and note that the constraint on $p_{23}$ is satisfied.
	
	For the second case, assume that $1 < \Delta_2 \leq 2 < \Delta_3$. In this case $\Psi_{13} = 4 (\Delta_3 - 1)$. The same normalization argument implies $I_2 = \Delta_2^2$ and $I_3 = 4(\Delta_3 - 1)$. With this, the ratio $\Psi_{23}$ is
	\begin{align*}
		\Psi_{23}^{-1} &= \frac{(I_3 - I_2)^2}{4 (\Delta_2 I_3 - \Delta_3I_2)(\Delta_3 - \Delta_2)}\\
		&= \frac{(4(\Delta_3-1) - \Delta_2^2)^2}{4 (4 \Delta_2 (\Delta_3 - 1) - \Delta_3\Delta_2^2)(\Delta_3 - \Delta_2)}\eqdef \varphi(\Delta_2, \Delta_3)\,.
	\end{align*}
	To eliminate $\Delta_3$ we compute the derivative 
	\begin{align*}
		\frac{d}{d\Delta_3} \varphi(\Delta_2, \Delta_3) = \frac{(\Delta_2 - 2)^3 (\Delta_2 - 2 \Delta_3 + 2) (-\Delta_2^2 + 4 \Delta_3 - 4)}{4 \Delta_2 (\Delta_2 - \Delta_3)^2 ((\Delta_2 - 4) \Delta_3 + 4)^2} > 0
	\end{align*}
	The inequality holds for all $1 \leq \Delta_2 \leq 2 < \Delta_3$. Hence it suffices to consider the limit 
	\begin{align*}
		\lim_{\Delta_3 \rightarrow \infty} \varphi(\Delta_2, \Delta_3) = \frac{4^2}{4(4\Delta_2 - \Delta_2^2)} = \frac{4}{\Delta_2(4 - \Delta_2)} \leq \frac{4}{3}\,.
	\end{align*}
	The last inequality holds for $1 \leq \Delta_2 \leq 2$ and the constraint on $p_{23}$ is satisfied. By \cref{lem:ratio-greedy}, $\Psi_{23}$ cannot be optimal if $\Delta_2 > 2 = 2 \Delta_1$. Finally, note that the bound is tight in the same limit. 
\end{proof}

\section{Additional Proofs}

\subsection{Proof of \cref{lem:confidence}}\label{proof:lem-confidence}

\begin{proof}
	The proof leverages the method of mixtures confidence set for the unconstrained least squares estimate \cite[c.f.][]{abbasi2011improved}, where we make the necessary adjustments to express the bound in terms of the basis $W$. 	Let $\hat \vartheta_t$ be the unconstrained least-square estimate,
	\begin{align*}
		\hat \vartheta_t = \argmin_{\vartheta \in \RR^d} \sum_{s=1}^{t-1} \|M_{a_s} \vartheta - y_s\|^2 + \lambda \|\vartheta - \theta_0\|^2 \,,
	\end{align*}
	Note that $\|M_{a_s} \vartheta - y_s\|^2 = \|M_{a_s} W W^\T (\vartheta - \theta^*) - \epsilon_s\|^2$. Hence the minimizer can be parametrized as $\hat \vartheta_t = (1_d-WW^\T)\theta_0 + W w$ for some $w \in \RR^d$.
	Further, $\hat \theta_t$ is the  projection of $\hat \vartheta_t$ onto the convex parameter set $\Theta$ with respect to the $\|\cdot\|_{V_t}$ norm. Therefore,
	\begin{align*}
		\|\htheta_t - \theta^*\|_{V_t} \leq \|\hat \vartheta_t - \theta^*\|_{V_t} = \|W^\T (\hat \vartheta_t - \theta^*)\|_{W_t}\,,
	\end{align*}
	where we used \cref{eq:W-def} in the second step to introduce the basis $W$. Define $\hat w_t = W^\T (\hat \vartheta_t - \theta^*) \in \RR^r$. Then, using that $y_s = M_{a_s} \theta^* + \epsilon_s$ and writing the least-squares objective directly in $\RR^r$, we find 
	\begin{align*}
		\hat w_t &= \argmin_{w \in \RR^r} \sum_{s=1}^{t-1} \|M_{a_s} W w - \epsilon_s\|^2 + \lambda \|W w + \theta^* - \theta_0\|^2\\
		&= (W V_t W)^{-1} \left(\sum_{s=1}^{t-1} W M_{a_s}^\T \epsilon_t - \lambda W (\theta^* - \theta_0)\right)
	\end{align*}
	Recall that $W_t = W^\T V_t W$. Using the closed-form to write the quantity of interest, we find
	\begin{align*}
		\|W^\T (\hat \vartheta_t - \theta^*)\|_{W_t} = \|\hat  w_t\|_{W_t} 
		&= \norm{\sum_{s=1}^{t-1} W M_{a_s}^\T \epsilon_t - \lambda W (\theta^* - \theta_0)}_{W_t^{-1}} \\
		&\leq \norm{\sum_{s=1}^{t-1} W M_{a_s}^\T \epsilon_t}_{W_t^{-1}} + \lambda \| W (\theta^* - \theta_0)\|_{W_t^{-1}}\\
		&\leq \norm{\sum_{s=1}^{t-1} W M_{a_s}^\T \epsilon_t}_{W_t^{-1}} + \sqrt{\lambda} \|\theta^* - \theta_0\|
	\end{align*}
	The claim follows now directly form \citet[Theorem 1]{abbasi2011improved}, and using that $ \lambda \| W (\theta^* - \theta_0)\|_{W_t^{-1}} \leq  \lambda \| W (\theta^* - \theta_0)\|_{W_0^{-1}} \leq \sqrt{\lambda} B$. 
\end{proof}

\subsection{Proof of \cref{lem:alpha-global}}\label{proof:alpha-global}
\begin{proof}[Sketch]
	Recall the definition of the observation matrix $S_a$ in Example~\ref{ex:finite}.
	By the definition of global observability, for any pair of Pareto optimal actions $a$ and $b$ there exist vectors $(w_{ab}^c)_{c \in [k]}$ in $\RR^m$ such that $\phi_a - \phi_b = \sum_{c \in [k]} S_c^\top w^c_{ab}$.
	By Proposition 37.18 \citep{lattimore2019bandit}, $w_{ab}^c$ can be chosen so that $\|w_{ab}^c\|_\infty \leq d^{1/2} k^{d/2}$. 
	The bound in \textit{(a)} follows from Cauchy-Schwarz to bound $\|w_{ab}^c\| \leq \sqrt{m} \|w_{ab}^c\|_\infty$ 
	and the second part of \cref{lem:alignment-bounds}. 
	Part \textit{(b)} follows in an identical fashion. For \textit{(c)} we need the concept of neighbours.
	Pareto actions $a$ and $b$ are neighbours if $\dim(\cC_a \cap \cC_b) = d-2$.
	Given neighbours $a$ and $b$ let $\theta_\circ$ be in the relative interior of $\cC_a \cap \cC_b$ and $\eE' = \{\theta : \norm{\theta - \theta_\circ} \leq \epsilon\}$ where $\epsilon > 0$ is small enough that $\eE' \subset \cC_a \cup \cC_b$.
	By the definition of local observability applied to $\eE'$ there exist a vectors $w_{ab}^a, w_{ab}^b \in \RR^m$ such that 
	$\phi_a - \phi_b = S_a^\top w^a_{ab} + S_b^\top w^b_{ab}$ and by Proposition 37.18 \citep{lattimore2019bandit}, 
	these vectors can be chosen so that $\norm{w^a_{ab}}_\infty \leq m$ and $\norm{w^b_{ab}}_\infty \leq m$.
	Finally, let $a$ and $b$ be arbitrary actions in $\bar \pP(\eE)$ that need not be neighbours.
	Then there exists a sequence $c_1,\ldots,c_j \in \bar \pP(\eE)$ with $j \leq k$ and $c_1 = a$ 
	and $c_j = b$ such that $c_i$ and $c_{i+1}$ are
	neighbours.
	Therefore, 
	\begin{align*}
		\phi_a - \phi_b = \sum_{i=1}^{j-1} \left(S_{c_i} w^{c_i}_{c_i c_{i+1}} + S_{c_{i+1}} w_{c_i c_{i+1}}^{c_{i+1}}\right) \,.
	\end{align*}
	By \cref{lem:alignment-bounds},
	\begin{align*}
		\alpha(\eE) &\leq \left(\sum_{i=1}^{j-1} \norm{w^{c_i}_{c_i c_{i+1}}} + \norm{w^{c_{i+1}}_{c_i c_{i+1}}}\right)^2 
		\leq 4k^2 m^3\,,
	\end{align*}
	which establishes \textit{(c)}.
\end{proof}

\vskip 0.2in
\bibliography{references}

\begin{thebibliography}{73}
\providecommand{\natexlab}[1]{#1}
\providecommand{\url}[1]{\texttt{#1}}
\expandafter\ifx\csname urlstyle\endcsname\relax
  \providecommand{\doi}[1]{doi: #1}\else
  \providecommand{\doi}{doi: \begingroup \urlstyle{rm}\Url}\fi

\bibitem[Abbasi-Yadkori(2012)]{abbasi2012online}
Yasin Abbasi-Yadkori.
\newblock \emph{Online Learning for Linearly Parametrized Control Problems}.
\newblock PhD thesis, 2012.

\bibitem[Abbasi-Yadkori et~al.(2011)Abbasi-Yadkori, P{\'a}l, and
  Szepesv{\'a}ri]{abbasi2011improved}
Yasin Abbasi-Yadkori, D{\'a}vid P{\'a}l, and Csaba Szepesv{\'a}ri.
\newblock Improved algorithms for linear stochastic bandits.
\newblock In \emph{Advances in Neural Information Processing Systems}, pages
  2312--2320, 2011.

\bibitem[Antos et~al.(2013)Antos, Bart{\'o}k, P{\'a}l, and
  Szepesv{\'a}ri]{antos2013towards}
A.~Antos, G.~Bart{\'o}k, D.~P{\'a}l, and Cs. Szepesv{\'a}ri.
\newblock Toward a classification of finite partial-monitoring games.
\newblock \emph{Theoretical Computer Science}, 473:\penalty0 77--99, 2013.

\bibitem[Aronszajn(1950)]{aronszajn1950theory}
Nachman Aronszajn.
\newblock Theory of reproducing kernels.
\newblock \emph{Transactions of the American mathematical society}, 68\penalty0
  (3):\penalty0 337--404, 1950.

\bibitem[Aubin-Frankowski and Szabo(2020)]{aubin2021handling}
Pierre-Cyril Aubin-Frankowski and Zoltan Szabo.
\newblock Hard shape-constrained kernel machines.
\newblock In H.~Larochelle, M.~Ranzato, R.~Hadsell, M.~F. Balcan, and H.~Lin,
  editors, \emph{Advances in Neural Information Processing Systems}, volume~33,
  pages 384--395. Curran Associates, Inc., 2020.
\newblock URL
  \url{https://proceedings.neurips.cc/paper/2020/file/03fa2f7502f5f6b9169e67d17cbf51bb-Paper.pdf}.

\bibitem[Bagnell and Farahmand(2015)]{bagnell2015learning}
J~Andrew Bagnell and Amir-massoud Farahmand.
\newblock Learning positive functions in a hilbert space.
\newblock In \emph{NIPS Workshop on Optimization (OPT2015)}, volume~20, pages
  3240--3255, 2015.

\bibitem[Bart{\'o}k et~al.(2011)Bart{\'o}k, P{\'a}l, and
  Szepesv{\'a}ri]{bartok2011minimax}
G.~Bart{\'o}k, D.~P{\'a}l, and Cs. Szepesv{\'a}ri.
\newblock Minimax regret of finite partial-monitoring games in stochastic
  environments.
\newblock In \emph{Proceedings of the 24th Annual Conference on Learning
  Theory}, pages 133--154, 2011.

\bibitem[Bart\'{o}k et~al.(2012)Bart\'{o}k, Zolghadr, and
  Szepesv\'{a}ri]{bartok2012adaptive}
G.~Bart\'{o}k, N.~Zolghadr, and Cs. Szepesv\'{a}ri.
\newblock An adaptive algorithm for finite stochastic partial monitoring.
\newblock In \emph{Proceedings of the 29th International Coference on
  International Conference on Machine Learning}, ICML, pages 1779--1786, USA,
  2012. Omnipress.

\bibitem[Bart{\'o}k et~al.(2014)Bart{\'o}k, Foster, P{\'a}l, Rakhlin, and
  Szepesv{\'a}ri]{bartok2014partial}
G.~Bart{\'o}k, D.~P. Foster, D.~P{\'a}l, A.~Rakhlin, and Cs. Szepesv{\'a}ri.
\newblock Partial monitoring---classification, regret bounds, and algorithms.
\newblock \emph{Mathematics of Operations Research}, 39\penalty0 (4):\penalty0
  967--997, 2014.

\bibitem[Bastani et~al.(2017)Bastani, Bayati, and Khosravi]{bastani2017mostly}
Hamsa Bastani, Mohsen Bayati, and Khashayar Khosravi.
\newblock Mostly exploration-free algorithms for contextual bandits.
\newblock \emph{arXiv preprint arXiv:1704.09011}, 2017.

\bibitem[Bengs et~al.(2021)Bengs, Busa-Fekete, El~Mesaoudi-Paul, and
  H{\"u}llermeier]{bengs2021preference}
Viktor Bengs, R{\'o}bert Busa-Fekete, Adil El~Mesaoudi-Paul, and Eyke
  H{\"u}llermeier.
\newblock Preference-based online learning with dueling bandits: A survey.
\newblock \emph{Journal of Machine Learning Research}, 22\penalty0
  (7):\penalty0 1--108, 2021.

\bibitem[Boyd et~al.(2004)Boyd, Boyd, and Vandenberghe]{boyd2004convex}
Stephen Boyd, Stephen~P Boyd, and Lieven Vandenberghe.
\newblock \emph{Convex optimization}.
\newblock Cambridge university press, 2004.

\bibitem[Carmeli et~al.(2006)Carmeli, De~Vito, and Toigo]{carmeli2006vector}
Claudio Carmeli, Ernesto De~Vito, and Alessandro Toigo.
\newblock Vector valued reproducing kernel hilbert spaces of integrable
  functions and mercer theorem.
\newblock \emph{Analysis and Applications}, 4\penalty0 (04):\penalty0 377--408,
  2006.

\bibitem[Caron et~al.(2012)Caron, Kveton, Lelarge, and
  Bhagat]{caron2012leveraging}
St{\'e}phane Caron, Branislav Kveton, Marc Lelarge, and Smriti Bhagat.
\newblock Leveraging side observations in stochastic bandits.
\newblock In \emph{Proceedings of the Twenty-Eighth Conference on Uncertainty
  in Artificial Intelligence}, pages 142--151, 2012.

\bibitem[Cesa-Bianchi et~al.(2006)Cesa-Bianchi, Lugosi, and
  Stoltz]{cesa2006regret}
N.~Cesa-Bianchi, G.~Lugosi, and G.~Stoltz.
\newblock Regret minimization under partial monitoring.
\newblock \emph{Mathematics of Operations Research}, 31:\penalty0 562--580,
  2006.

\bibitem[Cesa-Bianchi and Lugosi(2012)]{cesa2012combinatorial}
Nicolo Cesa-Bianchi and G{\'a}bor Lugosi.
\newblock Combinatorial bandits.
\newblock \emph{Journal of Computer and System Sciences}, 78\penalty0
  (5):\penalty0 1404--1422, 2012.

\bibitem[Cesa-Bianchi et~al.(2005)Cesa-Bianchi, Lugosi, and
  Stoltz]{cesa2005minimizing}
Nicolo Cesa-Bianchi, G{\'a}bor Lugosi, and Gilles Stoltz.
\newblock Minimizing regret with label efficient prediction.
\newblock \emph{IEEE Transactions on Information Theory}, 51\penalty0
  (6):\penalty0 2152--2162, 2005.

\bibitem[Chaudhuri and Tewari(2016)]{chaudhuri2016phased}
Sougata Chaudhuri and Ambuj Tewari.
\newblock Phased exploration with greedy exploitation in stochastic
  combinatorial partial monitoring games.
\newblock In \emph{Advances in Neural Information Processing Systems}, pages
  2433--2441, 2016.

\bibitem[Combes et~al.(2017)Combes, Magureanu, and
  Proutiere]{combes2017minimal}
Richard Combes, Stefan Magureanu, and Alexandre Proutiere.
\newblock Minimal exploration in structured stochastic bandits.
\newblock In \emph{Advances in Neural Information Processing Systems}, pages
  1763--1771, 2017.

\bibitem[Fiez et~al.(2019)Fiez, Jain, Jamieson, and
  Ratliff]{fiez2019transductive}
Tanner Fiez, Lalit Jain, Kevin~G Jamieson, and Lillian Ratliff.
\newblock Sequential experimental design for transductive linear bandits.
\newblock In \emph{Advances in Neural Information Processing Systems 32}, pages
  10666--10676. Curran Associates, Inc., 2019.

\bibitem[Foster and Rakhlin(2012)]{foster2012no}
Dean Foster and Alexander Rakhlin.
\newblock No internal regret via neighborhood watch.
\newblock In \emph{Artificial Intelligence and Statistics}, pages 382--390.
  PMLR, 2012.

\bibitem[Foster et~al.(2021)Foster, Kakade, Qian, and
  Rakhlin]{foster2021statistical}
Dylan~J Foster, Sham~M Kakade, Jian Qian, and Alexander Rakhlin.
\newblock The statistical complexity of interactive decision making.
\newblock \emph{arXiv preprint arXiv:2112.13487}, 2021.

\bibitem[Frank et~al.(1956)Frank, Wolfe, et~al.]{frank1956algorithm}
Marguerite Frank, Philip Wolfe, et~al.
\newblock An algorithm for quadratic programming.
\newblock \emph{Naval research logistics quarterly}, 3\penalty0 (1-2):\penalty0
  95--110, 1956.

\bibitem[Gajane and Urvoy(2015)]{gajane2015utility}
Pratik Gajane and Tanguy Urvoy.
\newblock Utility-based dueling bandits as a partial monitoring game.
\newblock \emph{arXiv preprint arXiv:1507.02750}, 2015.

\bibitem[Girosi(1998)]{girosi1998equivalence}
Federico Girosi.
\newblock An equivalence between sparse approximation and support vector
  machines.
\newblock \emph{Neural computation}, 10\penalty0 (6):\penalty0 1455--1480,
  1998.

\bibitem[Graves and Lai(1997)]{graves1997asymptotically}
Todd~L Graves and Tze~Leung Lai.
\newblock Asymptotically efficient adaptive choice of control laws incontrolled
  markov chains.
\newblock \emph{SIAM journal on control and optimization}, 35\penalty0
  (3):\penalty0 715--743, 1997.

\bibitem[Hao and Lattimore(2022)]{hao2022regret}
Botao Hao and Tor Lattimore.
\newblock Regret bounds for information-directed reinforcement learning.
\newblock \emph{arXiv preprint arXiv:2206.04640}, 2022.

\bibitem[Hao et~al.(2019)Hao, Lattimore, and Szepesvari]{hao2019adaptive}
Botao Hao, Tor Lattimore, and Csaba Szepesvari.
\newblock Adaptive exploration in linear contextual bandit.
\newblock \emph{arXiv preprint arXiv:1910.06996}, 2019.

\bibitem[Hao et~al.(2022)Hao, Lattimore, and Qin]{hao2022contextualids}
Botao Hao, Tor Lattimore, and Chao Qin.
\newblock Contextual information-directed sampling.
\newblock In Kamalika Chaudhuri, Stefanie Jegelka, Le~Song, Csaba Szepesvari,
  Gang Niu, and Sivan Sabato, editors, \emph{Proceedings of the 39th
  International Conference on Machine Learning}, volume 162 of
  \emph{Proceedings of Machine Learning Research}, pages 8446--8464. PMLR,
  17--23 Jul 2022.
\newblock URL \url{https://proceedings.mlr.press/v162/hao22b.html}.

\bibitem[Huang et~al.(2021)Huang, Kakade, Lee, and Lei]{huang2021short}
Kaixuan Huang, Sham~M Kakade, Jason~D Lee, and Qi~Lei.
\newblock A short note on the relationship of information gain and eluder
  dimension.
\newblock \emph{arXiv preprint arXiv:2107.02377}, 2021.

\bibitem[Jaggi(2013)]{jaggi2013revisiting}
Martin Jaggi.
\newblock Revisiting frank-wolfe: Projection-free sparse convex optimization.
\newblock In \emph{International Conference on Machine Learning}, pages
  427--435. PMLR, 2013.

\bibitem[Kanagawa et~al.(2018)Kanagawa, Hennig, Sejdinovic, and
  Sriperumbudur]{kanagawa2018gaussian}
Motonobu Kanagawa, Philipp Hennig, Dino Sejdinovic, and Bharath~K
  Sriperumbudur.
\newblock Gaussian processes and kernel methods: A review on connections and
  equivalences.
\newblock \emph{arXiv preprint arXiv:1807.02582}, 2018.

\bibitem[Kimeldorf and Wahba(1970)]{kimeldorf1970correspondence}
George~S Kimeldorf and Grace Wahba.
\newblock A correspondence between bayesian estimation on stochastic processes
  and smoothing by splines.
\newblock \emph{The Annals of Mathematical Statistics}, 41\penalty0
  (2):\penalty0 495--502, 1970.

\bibitem[Kirschner(2021)]{kirschner2021information}
Johannes Kirschner.
\newblock \emph{Information-Directed Sampling-Frequentist Analysis and
  Applications}.
\newblock PhD thesis, ETH Zurich, 2021.

\bibitem[Kirschner and Krause(2018)]{kirschner2018heteroscedastic}
Johannes Kirschner and Andreas Krause.
\newblock Information directed sampling and bandits with heteroscedastic noise.
\newblock In \emph{Proc. International Conference on Learning Theory (COLT)},
  July 2018.
\newblock URL \url{https://arxiv.org/abs/1801.09667}.

\bibitem[Kirschner and Krause(2021)]{kirschner2020dueling}
Johannes Kirschner and Andreas Krause.
\newblock Bias-robust bayesian optimization via dueling bandits.
\newblock In \emph{Proc. International Conference on Artificial Intelligence
  and Statistics (AISTATS)}, July 2021.

\bibitem[Kirschner et~al.(2020)Kirschner, Lattimore, and
  Krause]{kirschner2020pm}
Johannes Kirschner, Tor Lattimore, and Andreas Krause.
\newblock Information directed sampling for linear partial monitoring.
\newblock In \emph{Proc. International Conference on Learning Theory (COLT)},
  July 2020.
\newblock URL \url{https://arxiv.org/abs/2002.11182}.

\bibitem[Kirschner et~al.(2021)Kirschner, Lattimore, Vernade, and
  Szepesv{\'a}ri]{kirschner2020asymptotically}
Johannes Kirschner, Tor Lattimore, Claire Vernade, and Csaba Szepesv{\'a}ri.
\newblock Asymptotically optimal information-directed sampling.
\newblock In \emph{Proc. International Conference on Learning Theory (COLT)},
  August 2021.

\bibitem[Komiyama et~al.(2015)Komiyama, Honda, and
  Nakagawa]{komiyama2015regret}
J.~Komiyama, J.~Honda, and H.~Nakagawa.
\newblock Regret lower bound and optimal algorithm in finite stochastic partial
  monitoring.
\newblock In C.~Cortes, N.~D. Lawrence, D.~D. Lee, M.~Sugiyama, and R.~Garnett,
  editors, \emph{Advances in Neural Information Processing Systems 28}, NIPS,
  pages 1792--1800. Curran Associates, Inc., 2015.

\bibitem[Langford and Zhang(2008)]{langford2008epochgreedy}
J.~Langford and T.~Zhang.
\newblock The epoch-greedy algorithm for multi-armed bandits with side
  information.
\newblock In J.~C. Platt, D.~Koller, Y.~Singer, and S.~T. Roweis, editors,
  \emph{Advances in Neural Information Processing Systems 20}, NIPS, pages
  817--824. Curran Associates, Inc., 2008.

\bibitem[Lattimore and Gy{\"o}rgy(2020)]{lattimore2020mirror}
Tor Lattimore and Andr{\'a}s Gy{\"o}rgy.
\newblock Mirror descent and the information ratio.
\newblock \emph{arXiv preprint arXiv:2009.12228}, 2020.

\bibitem[Lattimore and Szepesv\'{a}ri(2017)]{lattimore2017end}
Tor Lattimore and Csaba Szepesv\'{a}ri.
\newblock The end of optimism? an asymptotic analysis of finite-armed linear
  bandits.
\newblock In \emph{Artificial Intelligence and Statistics}, pages 728--737,
  2017.

\bibitem[Lattimore and
  Szepesv{\'a}ri(2019{\natexlab{a}})]{lattimore2019cleaning}
Tor Lattimore and Csaba Szepesv{\'a}ri.
\newblock Cleaning up the neighborhood: A full classification for adversarial
  partial monitoring.
\newblock In \emph{Algorithmic Learning Theory}, pages 529--556,
  2019{\natexlab{a}}.

\bibitem[Lattimore and
  Szepesv{\'a}ri(2019{\natexlab{b}})]{lattimore2019exploration}
Tor Lattimore and Csaba Szepesv{\'a}ri.
\newblock Exploration by optimisation in partial monitoring.
\newblock \emph{arXiv preprint arXiv:1907.05772}, 2019{\natexlab{b}}.

\bibitem[Lattimore and
  Szepesv{\'a}ri(2019{\natexlab{c}})]{lattimore2019information}
Tor Lattimore and Csaba Szepesv{\'a}ri.
\newblock An information-theoretic approach to minimax regret in partial
  monitoring.
\newblock \emph{arXiv preprint arXiv:1902.00470}, 2019{\natexlab{c}}.

\bibitem[Lattimore and Szepesvari(2020)]{lattimore2019bandit}
Tor Lattimore and Czsaba Szepesvari.
\newblock \emph{Bandit Algorithms}.
\newblock Cambridge University Press, 2020.

\bibitem[Lin et~al.(2014)Lin, Abrahao, Kleinberg, Lui, and
  Chen]{lin2014combinatorial}
Tian Lin, Bruno Abrahao, Robert Kleinberg, John Lui, and Wei Chen.
\newblock Combinatorial partial monitoring game with linear feedback and its
  applications.
\newblock In \emph{International Conference on Machine Learning}, pages
  901--909, 2014.

\bibitem[Liu et~al.(2018)Liu, Buccapatnam, and Shroff]{liu2018information}
Fang Liu, Swapna Buccapatnam, and Ness Shroff.
\newblock Information directed sampling for stochastic bandits with graph
  feedback.
\newblock In \emph{Thirty-Second AAAI Conference on Artificial Intelligence},
  2018.

\bibitem[Lu et~al.(2021)Lu, Van~Roy, Dwaracherla, Ibrahimi, Osband, and
  Wen]{lu2021reinforcement}
Xiuyuan Lu, Benjamin Van~Roy, Vikranth Dwaracherla, Morteza Ibrahimi, Ian
  Osband, and Zheng Wen.
\newblock Reinforcement learning, bit by bit.
\newblock \emph{arXiv preprint arXiv:2103.04047}, 2021.

\bibitem[Mannor and Shamir(2011)]{mannor2011bandits}
Shie Mannor and Ohad Shamir.
\newblock From bandits to experts: On the value of side-observations.
\newblock In J.~Shawe-Taylor, R.~Zemel, P.~Bartlett, F.~Pereira, and K.~Q.
  Weinberger, editors, \emph{Advances in Neural Information Processing
  Systems}, volume~24. Curran Associates, Inc., 2011.

\bibitem[Mockus(1982)]{mockus1982bayesian}
Jonas Mockus.
\newblock The bayesian approach to global optimization.
\newblock \emph{System Modeling and Optimization}, pages 473--481, 1982.

\bibitem[Mutn\'y and Krause(2018)]{mutny2018efficient}
Mojmir Mutn\'y and Andreas Krause.
\newblock Efficient high dimensional bayesian optimization with additivity and
  quadrature fourier features.
\newblock In \emph{Neural and Information Processing Systems (NeurIPS)},
  December 2018.

\bibitem[Nikolov et~al.(2019)Nikolov, Kirschner, Berkenkamp, and
  Krause]{nikolov2019information}
Nikolay Nikolov, Johannes Kirschner, Felix Berkenkamp, and Andreas Krause.
\newblock Information-directed exploration for deep reinforcement learning.
\newblock In \emph{Proc. International Conference on Learning Representations
  (ICLR)}, May 2019.
\newblock URL \url{https://arxiv.org/abs/1812.07544}.

\bibitem[Pedrick(1957)]{pedrick1957theory}
George Pedrick.
\newblock \emph{Theory of reproducing kernels for Hilbert spaces of vector
  valued functions}.
\newblock PhD thesis, University of Kansas, 1957.

\bibitem[Piccolboni and Schindelhauer(2001)]{piccolboni2001discrete}
Antonio Piccolboni and Christian Schindelhauer.
\newblock Discrete prediction games with arbitrary feedback and loss.
\newblock In \emph{International Conference on Computational Learning Theory},
  pages 208--223. Springer, 2001.

\bibitem[Prokhorov(1956)]{prokhorov1956convergence}
Yu~V Prokhorov.
\newblock Convergence of random processes and limit theorems in probability
  theory.
\newblock \emph{Theory of Probability \& Its Applications}, 1\penalty0
  (2):\penalty0 157--214, 1956.

\bibitem[Rasmussen(2004)]{rassmussen2004gp}
Carl~Edward Rasmussen.
\newblock Gaussian processes in machine learning.
\newblock In \emph{Advanced lectures on machine learning}, pages 63--71.
  Springer, 2004.

\bibitem[Russo and Van~Roy(2013)]{russo2013eluder}
Daniel Russo and Benjamin Van~Roy.
\newblock Eluder dimension and the sample complexity of optimistic exploration.
\newblock \emph{Advances in Neural Information Processing Systems}, 26, 2013.

\bibitem[Russo and Van~Roy(2014)]{russo2014ids}
Daniel Russo and Benjamin Van~Roy.
\newblock Learning to optimize via information-directed sampling.
\newblock In \emph{Advances in Neural Information Processing Systems}, pages
  1583--1591, 2014.

\bibitem[Rustichini(1999)]{rustichini1999games}
A.~Rustichini.
\newblock Minimizing regret: The general case.
\newblock \emph{Games and Economic Behavior}, 29\penalty0 (1):\penalty0
  224--243, 1999.

\bibitem[Sch{\"o}lkopf et~al.(2001)Sch{\"o}lkopf, Herbrich, and
  Smola]{scholkopf2001generalized}
Bernhard Sch{\"o}lkopf, Ralf Herbrich, and Alex~J Smola.
\newblock A generalized representer theorem.
\newblock In \emph{International conference on computational learning theory},
  pages 416--426. Springer, 2001.

\bibitem[Sch{\"o}lkopf et~al.(2002)Sch{\"o}lkopf, Smola, Bach,
  et~al.]{scholkopf2002learning}
Bernhard Sch{\"o}lkopf, Alexander~J Smola, Francis Bach, et~al.
\newblock \emph{Learning with kernels: support vector machines, regularization,
  optimization, and beyond}.
\newblock 2002.

\bibitem[Srinivas et~al.(2010)Srinivas, Krause, Kakade, and
  Seeger]{srinivas2009gaussian}
Niranjan Srinivas, Andreas Krause, Sham~M Kakade, and Matthias Seeger.
\newblock Gaussian process optimization in the bandit setting: No regret and
  experimental design.
\newblock \emph{International Conference on Machine Learning}, 2010.

\bibitem[Sui et~al.(2018)Sui, Zoghi, Hofmann, and Yue]{sui2018advancements}
Yanan Sui, Masrour Zoghi, Katja Hofmann, and Yisong Yue.
\newblock Advancements in dueling bandits.
\newblock In \emph{IJCAI}, pages 5502--5510, 2018.

\bibitem[Tirinzoni et~al.(2020)Tirinzoni, Pirotta, Restelli, and
  Lazaric]{tirinzoni2020asymptotically}
Andrea Tirinzoni, Matteo Pirotta, Marcello Restelli, and Alessandro Lazaric.
\newblock An asymptotically optimal primal-dual incremental algorithm for
  contextual linear bandits.
\newblock \emph{Advances in Neural Information Processing Systems}, 33, 2020.

\bibitem[Tsuchiya et~al.(2020)Tsuchiya, Honda, and
  Sugiyama]{tsuchiya2020analysis}
Taira Tsuchiya, Junya Honda, and Masashi Sugiyama.
\newblock Analysis and design of thompson sampling for stochastic partial
  monitoring.
\newblock In H.~Larochelle, M.~Ranzato, R.~Hadsell, M.~F. Balcan, and H.~Lin,
  editors, \emph{Advances in Neural Information Processing Systems}, volume~33,
  pages 8861--8871. Curran Associates, Inc., 2020.
\newblock URL
  \url{https://proceedings.neurips.cc/paper/2020/file/649d45bf179296e31731adfd4df25588-Paper.pdf}.

\bibitem[Vakili et~al.(2020)Vakili, Khezeli, and
  Picheny]{vakili2020information}
Sattar Vakili, Kia Khezeli, and Victor Picheny.
\newblock On information gain and regret bounds in gaussian process bandits.
\newblock \emph{arXiv preprint arXiv:2009.06966}, 2020.

\bibitem[Valko et~al.(2013)Valko, Korda, Munos, Flaounas, and
  Cristianini]{valko2013kernelised}
Michal Valko, Nathaniel Korda, Remi Munos, Ilias Flaounas, and Nelo
  Cristianini.
\newblock Finite-{{Time Analysis}} of {{Kernelised Contextual Bandits}}.
\newblock \emph{arXiv:1309.6869 [cs, stat]}, September 2013.

\bibitem[Valko et~al.(2014)Valko, Munos, Kveton, and
  Koc{\'a}k]{valko2014spectral}
Michal Valko, R{\'e}mi Munos, Branislav Kveton, and Tom{\'a}{\v{s}} Koc{\'a}k.
\newblock Spectral bandits for smooth graph functions.
\newblock In \emph{International Conference on Machine Learning}, pages 46--54.
  PMLR, 2014.

\bibitem[Vanchinathan et~al.(2014{\natexlab{a}})Vanchinathan, Bart\'ok, and
  Krause]{vanchinathan14efficient}
Hastagiri Vanchinathan, G\'abor Bart\'ok, and Andreas Krause.
\newblock Efficient partial monitoring with prior information.
\newblock In \emph{Neural Information Processing Systems (NIPS)},
  2014{\natexlab{a}}.

\bibitem[Vanchinathan et~al.(2014{\natexlab{b}})Vanchinathan, Bart\'ok, and
  Krause]{vanchinathan2014efficient}
Hastagiri Vanchinathan, G\'abor Bart\'ok, and Andreas Krause.
\newblock Efficient partial monitoring with prior information.
\newblock In \emph{Neural Information Processing Systems (NIPS)},
  2014{\natexlab{b}}.

\bibitem[Yue et~al.(2012)Yue, Broder, Kleinberg, and Joachims]{yue2012k}
Yisong Yue, Josef Broder, Robert Kleinberg, and Thorsten Joachims.
\newblock The k-armed dueling bandits problem.
\newblock \emph{Journal of Computer and System Sciences}, 78\penalty0
  (5):\penalty0 1538--1556, 2012.

\bibitem[Zanette and Sarkar(2017)]{zanette2017information}
Andrea Zanette and Rahul Sarkar.
\newblock Information directed reinforcement learning.
\newblock Technical report, Technical report, Technical report, 2017.

\end{thebibliography}

\end{document}